\setlist{topsep=2pt, itemsep=2pt}
\crefname{subsection}{Subsection}{Subsections}
\crefname{equation}{}{}
\crefname{theo}{Theorem}{Theorems}
\crefname{coro}{Corollary}{Corollaries}
\crefname{prop}{Proposition}{Propositions}
\crefname{lemm}{Lemma}{Lemmas}
\crefname{exam}{Example}{Examples}
\crefname{assum}{Assumption}{Assumptions}
\newtheorem{theo}{Theorem}[section]
\newtheorem{lemm}[theo]{Lemma}
\newtheorem{prop}[theo]{Proposition}
\theoremstyle{definition}
\newtheorem{defi}[theo]{Definition}
\newtheorem{exam}[theo]{Example}
\newtheorem{rema}[theo]{Remark}
\numberwithin{equation}{section}
\newcommand{\bR}{\mathbb R}
\newcommand{\bN}{\mathbb N}
\newcommand{\bD}{\mathbb D}
\newcommand{\bE}{\mathbb E}
\newcommand{\bG}{\mathbb G}
\newcommand{\bP}{\mathbb P}
\newcommand{\bF}{\mathbb F}
\newcommand{\1}{\mathbbm{1}}
\newcommand{\bI}{\mathbb I}
\newcommand{\bfL}{\mathbf L}
\newcommand{\bfU}{\mathbf U}
\newcommand{\bfV}{\mathbf V}
\newcommand{\bfh}{\mathbf h}
\newcommand{\cB}{\mathcal B}
\newcommand{\cF}{\mathcal F}
\newcommand{\cJ}{\mathcal J}
\newcommand{\cG}{\mathcal G}
\newcommand{\cL}{\mathcal L}
\newcommand{\cP}{\mathcal P}
\newcommand{\cX}{\mathcal X}
\newcommand{\cY}{\mathcal Y}
\newcommand{\scrD}{\mathscr D}
\newcommand{\rmF }{\mathrm{F}}
\newcommand{\frb}{\mathfrak{b}}
\newcommand{\frh}{\mathfrak{h}}
\newcommand{\frr}{\mathfrak{r}}
\newcommand{\wt}{\widetilde}
\newcommand{\tran}{\mathsf{T}}
\newcommand{\ep}{\varepsilon}
\newcommand{\e}{\mathrm{e}}
\newcommand{\Leb}{\bm{\lambda}}
\newcommand{\pd}{\partial}
\newcommand{\od}{\mathrm{d}}
\newcommand{\nq}{\!\!}
\newcommand{\gamI}{\gamma}
\newcommand{\gamII}{\gamma}
\newcommand{\bPn}{\mathbb P}
\newcommand{\bEn}{\mathbb E}
\newcommand*\bigcdot{\mathpalette\bigcdot@{.35}}
\newcommand*\bigcdot@[2]{\mathbin{\vcenter{\hbox{\scalebox{#2}{$\m@th#1\bullet$}}}}}
\newcommand{\bdot}[2]{#1 \hspace{2pt} \boldsymbol{\bigcdot} \hspace{2pt} #2}
\newcommand{\trm}[1]{\textrm{#1}}
\def\b{\big}
\def\B{\Big}
\def\bb{\bigg}
\def\BB{\Bigg}
\def\<{\left<}
\def\>{\right>}
\begin{document}

\title{A random measure approach to reinforcement learning in continuous time}

\author[Christian Bender]{Christian Bender$^{1}$}
\address{$^1$Department of Mathematics, Saarland University, Germany}
\email{bender@math.uni-saarland.de}

\author[Nguyen Tran Thuan]{Nguyen Tran Thuan$^{1,2}$}
\email{nguyen@math.uni-saarland.de}
\address{$^2$Department of Mathematics, Vinh University, 182 Le Duan, Vinh, Nghe An, Viet Nam}
\email{thuannt@vinhuni.edu.vn}

\thanks{}

\date{\today}

\begin{abstract}
	We present a random measure approach for modeling exploration, i.e., the execution of measure-valued controls,  in continuous-time reinforcement learning (RL) with controlled diffusion and jumps. First, we consider the case when sampling the randomized control in continuous time takes place on a discrete-time grid and reformulate the resulting stochastic differential equation (SDE) as an equation driven by suitable random measures. The construction of these random measures makes use of the Brownian motion and the Poisson random measure (which are the sources of noise in the original model dynamics) as well as the additional random variables, which are sampled on the grid for the control execution. Then, we prove a limit theorem for these random measures as the mesh-size of the sampling grid goes to zero, which leads to the \emph{grid-sampling limit SDE} that is jointly driven  by white noise random measures and  a Poisson random measure. We also argue that the grid-sampling limit SDE can substitute the exploratory SDE and the sample SDE of the recent continuous-time RL literature, i.e., it can be applied for the theoretical analysis of exploratory control problems and for the derivation of learning algorithms.

	\bigskip
	
	\noindent \textbf{Keyworks.} Exploratory control; Orthogonal martingale measures; Poisson random measures; Reinforcement learning; Weak convergence. 
	
	\smallskip
	
	\noindent \textbf{2020 Mathematics Subject Classification.} Primary:  60G57; Secondary: 28A33, 60H10, 93B52, 93E35.
\end{abstract}

\maketitle


\section{Introduction}

Recent years have seen tremendous progress in the development of reinforcement learning (RL) for systems in continuous time and space, which are formulated in the language of stochastic differential equations (SDEs). The articles \cite{WZZ20, WZ20} constitute an important starting point for the modeling of exploration of the state space in such a framework. Roughly speaking, the exploration mechanism consists of first choosing a relaxed control (which is a policy with values in the set of probability distributions)  and then executing the policy by drawing a sample from the chosen distribution.
Based on a heuristic argument using law of large numbers, Wang et al. \cite{WZZ20} identify the drift and diffusion coefficient, when averaging over many independent executions of the relaxed control, leading to the \emph{exploratory SDE} in a diffusion setting. Regularizing the cost function by adding a running reward for exploration (e.g., in terms of Shannon entropy as in \cite{WZZ20,WZ20}), they come up with a formulation of \emph{exploratory control problems}.

The exploratory control approach of \cite{WZZ20} has been generalized in many directions, including a mean-field setting \cite{FJ22, GXZ22}, regime-switching models \cite{WL24}, and models with jumps \cite{BN23, GLZ24}. A significant part of the literature focuses on exploratory versions of linear-quadratic problems (which are no longer linear-quadratic due to the presence of the regularization term) and on applications to mean-variance portfolio selection, see, e.g., \cite{BN23,DDJ23, GXZ22, WZZ20, WZ20,  WL24}. Moreover, alternatives to the Shannon entropy regularization term have been suggested, see \cite{DJ24, GXZ22, HWZ23, RZ21}. More information about the recent progress in continuous-time RL can be found in the survey article by Zhou \cite{Zh23}.

While the exploratory SDE is tailor-made to adapt   the classical dynamic programming approach and to tackle exploratory control by means of a suitable variant of the Hamilton--Jacobi--Bellman (HJB) equation (see \cite{TZZ22} for a detailed study of the exploratory HJB equation), it cannot be interpreted as the response of the system to a randomized control (i.e, a sample drawn from a given relaxed control). This is due to the averaging effect in its derivation. Hence, trajectories of the exploratory SDE cannot be regarded as observable and, thus, learning algorithms cannot be formulated in terms of (time-discretized) trajectories of the exploratory control, see also the discussion in \cite[p.9]{JZ22}. 

As a way out, Jia and Zhou \cite{JZ22, JZ23} introduce a \emph{sample SDE}, which models the dynamics of the system along a randomized control in continuous time. Based on the sample SDE and martingale criteria for optimality in continuous time, they provide continuous-time versions of several learning algorithms (including temporal-difference learning and $Q$-learning), see also \cite{SB18} for an overview on learning algorithms in the classical framework of Markov decision processes. The continuous-time algorithms of \cite{JZ22, JZ23} are only discretized at the implementation stage, so they follow the ``first-optimize-then-discretize'' methodology. 
However, no explicit construction of the randomization mechanism (for sampling from a given relaxed control) is provided in \cite{JZ22, JZ23}. The latter reference mentions an uncountable family of independent uniform random variables  $(Z_t)_{t\in [0,T]}$ on the unit cube. So the construction of sample SDEs might need to deal with some measurability issues, for which we refer, e.g., to \cite[Proposition 2.1 and Corollary 4.3]{Su06}.
To circumvent such  measurability problems, we adapt some ideas of \cite{STZ23}. We sample the  independent uniform random variables on a finite time-grid only and extend the randomization scheme piecewise constantly to a left-continuous process (which, consequently, becomes predictable). This approach leads to a well-defined SDE (which we call \emph{grid-sampling SDE}), which has a sound interpretation as response of the system to the grid-randomization of a relaxed control. Technically, this is an SDE with random coefficients.

We are mainly interested in the limit dynamics of this grid-sampling SDE, as the mesh-size of the grid tends to zero. To this end, we reformulate it as an SDE with deterministic coefficients driven by random measures which depend on the grid-sampling randomization process. In this way, the additional randomness for policy execution is moved from the integrand to the integrator. Our main result (\cref{thm:limit} below) implies vague convergence of these grid-dependent random measures, as the grid-size converges to zero. Replacing the grid-dependent random measures by their limit measures, we arrive at the \emph{grid-sampling limit SDE}, which we consider as a  natural SDE formulation for RL with state space exploration in continuous time.

Note that we work in a framework with controlled diffusion and controlled jumps in which the SDE under a classical control is driven by a multivariate Brownian motion and a Poisson random measure. In the ``control randomization limit'', i.e. in our formulation of the grid-sampling limit SDE, the Brownian motion is replaced by a family of independent white noise martingale measures (in the sense of \cite{Wa86,KM90}) and the limit Poisson random measures is defined on an extended measurable space to account for the randomization.

 Our weak convergence approach extends the derivation of the exploratory dynamics for mean-variance portfolio selection with jumps in \cite{BN23}. Due to the linear dependence of the diffusion coefficient on the control, the white noise martingale measures  do not show up there but are replaced by a high-dimensional Brownian motion (which features additional components to model the control randomization) in the context of \cite{BN23}, see also \cref{exmp:linear_control}. However, the limit Poisson random measure is essentially the same as in \cite{BN23} in our more general situation.

We also mention that recently the framework of Zhou and coauthors \cite{JZ22,JZ23,WZZ20}  has been extended to the jump-diffusion case by Gao et al. \cite{GLZ24}. They derive in \cite{GLZ24} the infinitesimal generator of the averaged (over independent policy executions) dynamics heuristically by extending the law of large numbers argument from \cite{WZZ20} in order to define an exploratory SDE with jumps. While the jump part features the same structure as in our grid-sampling limit SDE and as in \cite{BN23}, the diffusion part of their exploratory SDE with jumps is driven by a Brownian motion (which can be lower-dimensional than the Brownian motion that drives the original SDE without control randomization). We also mention that the final form of the grid-sampling limit SDE resembles the classical formulation of relaxed control, see, e.g., \cite{Me92} for the case of diffusion control or Chapter 13 in \cite{KD01}. We emphasize, however, that relaxed controls have been introduced as a technical tool for compactification of the control space in the framework of classical control, while the importance of the grid-sampling limit SDE is in its interpretation as limit to the response of the system to  randomized controls.

\subsection*{Structure of this article} In \cref{sec:discussion} we motivate and discuss the main result. After explaining the general setting and discussing several sampling schemes for randomization, we introduce the grid-sampling SDE at the end of  \cref{sec:setting}. In \cref{sec:discrete_sample_random_measure}, we construct some random measures related to grid sampling and reformulate the grid-sampling SDE  as an SDE driven by these random measures. The main limit theorem is stated in \cref{sec:main_thm}, leading to the definition of the grid-sampling limit SDE.

 In \cref{sec:exploratory}, we compare the exploratory SDE of \cite{WZZ20} and the  grid-sampling limit SDE in a simplified setting. It turns out that the solutions to both SDEs share the same probability law, although one is derived by averaging out the policy randomization a-priori, while the other one is obtained in a limit, when one adds more and more randomization noise. A main difference is that our limit theorem combined with stability results for SDEs driven by martingale measures (e.g., Chapter 13 in \cite{KD01}) suggests a joint convergence of SDE and integrator for the grid-sampling limit SDE, while such a result cannot hold for the exploratory SDE. This difference plays a key role in \cref{sec:learning}, where we re-derive the temporal difference TD(0)-algorithm of \cite{JZ22a,JZ22} for policy evaluation in continuous time based on the grid-sampling limit SDE. In doing so, we avoid reference to any kind of idealized sampling that requires independent, identically distributed families of random variables indexed by continuous time for control randomization.

   The proof of the main  theorem (\cref{thm:limit}) will be given in \cref{sec:limit_proof} and relies on a limit theorem for triangular arrays by Jacod and Shiryaev \cite{JS03}. The key step of the proof is contained in \cref{prop:limit-test-function-g}, which implies convergence of the (modified) semimartingale characteristics of the grid-sampling random measures (integrated against a sufficiently large class of test integrands) to the  semimartingale characteristics of the limit random measures.    

Proofs of some technical results and background information on martingale measures are compiled in the appendices.

\subsection*{Notations}
Let $\bN := \{1, 2, \ldots\}$ and $\bR^m_0 : = \bR^m \backslash\{0\}$.  For $a, b \in \bR$, denote $a \vee b : = \max\{a, b\}$ and $a \wedge b : = \min\{a, b\}$ as usual.  We also let $\int_a^b : = \int_{(a, b]}$ and $\int_{\emptyset} = \sum_{i \in \emptyset} := 0$ by convention. Notation $\log$ stands for the natural logarithm.

\subsubsection*{Matrices and functions} In this article, all vectors are interpreted as column matrices. 
For a vector $x$ we use $x^{(i)}$ to denote its $i$-th component. For a matrix $A$, the entry in the $i$-th row and $j$-th column is $A^{(i, j)}$.
Notation $A^\tran$ stands for the transpose of $A$. The collection of real matrices of size $m \times p$ is denoted by $\bR^{m \times p}$ which is equipped with the Euclidean/Frobenius norm $\|A\|_{\rmF} : = \sqrt{\trm{trace}[A^\tran A]}$. For $m \in \bN$, we denote by $I_m$ the identity matrix of the size $m\times m$.

Let $|\cdot|$ denote the Euclidean norm in $\bR^m$.  The open ball in $\bR^m$ centered at $0$ with radius $r>0$ is $B_m(r) : = \{x \in \bR^m : |x| < r\}$. In $\bR^m$ we always employ the Borel $\sigma$-field $\cB(\bR^m)$ induced by the Euclidean norm.

Let $U \in \cB(\bR^d)$. We denote by $B_b(U; \bR^m)$ the family of all Borel measurable functions $f \colon U \to \bR^m$ satisfying $\|f\|_{B_b(U; \bR^m)}: = \sup_{u \in U} |f(u)| <\infty$. For $m =1$, we simply write $B_b(U): = B_b(U; \bR)$.

Notations $\pd_k f$, $\pd^2_{k, l} f$ stand for usual partial derivatives of $f$ with respect to scalar components. Let $\nabla f$ and $\nabla^2 f$ denote the gradient and the Hessian of $f$ respectively. The family $C^2_b(\bR^m)$ consists of all twice continuously differentiable and bounded functions $f \colon \bR^m \to \bR$ with bounded gradient and Hessian.  $C^2_c(\bR^m)$ contains all $f \in C^2_b(\bR^m)$ with compact support. We let $f \in C^{1, 2}([0, T] \times \bR^m)$ if $f$ is (resp. twice) continuously differentiable with respect to $t \in [0, T]$ (resp. to $y \in \bR^m$) and its partial derivatives are jointly continuous.

\subsubsection*{Stochastic basis}
Let $T \in (0, \infty)$. We assume that $(\Omega, \cF, \bF, \bP)$ satisfies the usual conditions, which means that $(\Omega, \cF, \bP)$ is a compete probability space, the filtration $\bF = (\cF_t)_{t \in [0, T]}$ is right-continuous and $\cF_0$ contains all $\bP$-null sets. This allows us to assume that every $\bF$-adapted local martingale has \textit{c\`adl\`ag} (right-continuous with finite left limits) paths. For a random variable $\xi$, the expectation and conditional expectation given a sub-$\sigma$-algebra $\cG \subseteq \cF$, if it exists under $\bP$, is respectively denoted by $\bE[\xi]$ and $\bE[\xi|\cG]$. We also use the notation $\bfL^p(\bP): = \bfL^p(\Omega, \cF, \bP)$.

We write $\cP_{\bF}$ for the predictable $\sigma$-field on $\Omega\times[0,T]$ with respect to the filtration $\bF$ and say that an $\bR^d$-valued stochastic process $X = (X_t)_{t\in [0,T]}$ is $\bF$-predictable, if the map $X \colon \Omega\times[0,T] \rightarrow \bR^d$ is $\cP_\bF/\cB(\bR^d)$-measurable.

For a  c\`adl\`ag process $X = (X_t)_{t \in [0, T]}$, set $\Delta X_t : = X_t - X_{t-}$ for $t \in [0, T]$, where $X_{0-} : = X_0$ and $X_{t-} : = \lim_{t>s \uparrow t} X_s$ for $t \in (0, T]$. For processes $X = (X_t)_{t \in [0, T]}$, $Y = (Y_t)_{t \in [0, T]}$, we write $X = Y$ to indicate that $X_t = Y_t$ for all $t \in [0, T]$ a.s., and the same meaning applied when the relation ``='' is replaced by some other relations such as ``$\le$'', ``$>$'', etc.

We refer to \cite{JS03} for unexplained notions such as semimartingales, (optional) quadratic covariation $[X, Y]$ and predictable quadratic covariation $\<X, Y\>$ of semimartingales $X$, $Y$.

\section{Motivation and discussion of the main result}\label{sec:discussion} 

\subsection{Controlled SDEs with randomized policies}\label{sec:setting}

We  think of the model dynamics as a system with input coefficients ($a,b,\gamma$ below) that depend on a policy  $h$ in feedback form. The output of the system is influenced by the random noise generated by a multivariate Brownian motion $B$ and an independent Poisson random measure $N$. Thus, for a classical (non-randomized) policy $h$, we end up with the dynamics, for $t \in [0, T]$,
  \begin{align}\label{eq:SDE-classical}
  	\od X^h_t & = b(t, X^h_{t-}, h(t,X^h_{t-} )) \od t + a(t, X^{h}_{t-},  h(t,X^h_{t-} )) \od B_t \notag \\
  	& \quad + \int_{0 < |z| \le \mathfrak{r}} \gamI(t, X^h_{t-},  h(t,X^h_{t-} ), z) \tilde N(\od t, \od z) + \int_{|z| > \mathfrak{r}} \gamII(t, X^h_{t-},  h(t,X^h_{t-} ), z) N(\od t, \od z),
  \end{align}
 with initial condition $X^h_0 = x_0 \in \bR^m$. The coefficients $b\colon [0, T] \times \bR^m \times \bR^d \to \bR^m$, $a \colon [0, T] \times \bR^m \times \bR^d \to \bR^{m \times p}$ and $\gamI\colon [0, T] \times  \bR^m \times \bR^d \times \bR^q_0 \to \bR^m$ and the feedback policy $h \colon [0, T] \times \bR^m \to  \bR^d$  are measurable and assumed to be sufficiently regular to guarantee existence of a unique strong solution. Moreover, $B = (B_t)_{t \in [0, T]}$ is a standard $p$-dimensional Brownian motion, $N(\od t, \od z)$ is a (possibly inhomogeneous) Poisson random measure independent of $B$ with intensity $\nu(\od t, \od z) = \nu_t(\od z) \od t$ where $\nu_t$ is a L\'evy measure on $\bR^q_0$ (i.e., $\nu_t$ is a Borel measure  with $\int_{\bR^q_0} (|z|^2 \wedge 1) \nu_t(\od z) < \infty$) for all $t \in [0, T]$. 
Throughout this article, we assume that
\begin{align}\label{assumption:Levy-measure-r}
 \int_0^T \nq \int_{\bR^q_0} ( |z|^2 \1_{\{0 < |z| \le \mathfrak{r}\}}  + \1_{\{|z| > \mathfrak{r}\}}) \nu_t(\od z)  \od t < \infty
\end{align}
for some fixed $\mathfrak{r} \in [0, \infty]$. We may think of $\frr$ as the threshold to distinguish between small jumps and large jumps -- and, as usual, the small jumps are integrated with respect to the compensated random measure $\tilde N(\od t, \od z)=N(\od t, \od z)- \nu_t(\od z) \od t$. Here, the Brownian motion $B$ and the Poisson random measure $N$ are  defined on a filtered probability space $(\Omega, \cF, \bar{\mathbb{F}}, \bP)$ which satisfies the usual conditions. Note that the filtration $\bar{\mathbb{F}}$ may be larger than the one generated by $(B, N)$. 

\begin{rema}\label{rema:levy_integration}
	\begin{enumerate}
			
		\item One typically takes $\mathfrak{r} = 1$ which corresponds to the canonical truncation function $z\1_{\{0 < |z| \le 1\}}$. However, since the random measures are handled differently between the ``compensated jump part'' and the ``finite activity jump part'', we include here the case $\mathfrak{r} = 0$, which means that the jump part $\int_0^{\cdot}  \int_{\bR^q_0} zN(\od t, \od z)$ of the driving inhomogeneous L\'evy process is of finite activity, and the case $\mathfrak{r} = \infty$ which means that the jump part $\int_0^{\cdot}  \int_{\bR^q_0} z \tilde N(\od t, \od z)$  is a square integrable martingale.
		
		\item Note that \eqref{assumption:Levy-measure-r} holds for some $\mathfrak{r} \in (0, \infty)$ if and only if \eqref{assumption:Levy-measure-r} holds for all $\mathfrak{r} \in (0, \infty)$.
	\end{enumerate}
\end{rema}

A \emph{relaxed (or, measure-valued) control} in feedback form is a mapping ${h}\colon [0,T]\times \bR^m \to \mathcal{P}r(\cB(\bR^d))$, where $\mathcal{P}r(\cB(\bR^d))$ denotes the space of probability measures on the Borel field $\cB(\bR^d)$ over $\bR^d$. For the execution of a relaxed control, we consider an $\bar{\mathbb{F}}$-predictable stochastic process $\xi = (\xi_t)_{t\in [0,T]}$ independent of $(B,N)$, whose marginal distribution $\xi_t$ is a uniform distribution on $[0,1]^d$ for every $t\in [0,T]$.  Such a $\xi=(\xi_t)_{t\in [0,T]}$ will be called a \emph{randomization process}. We think of a measurable function 
${\bfh}\colon[0,T]\times \bR^m \times [0,1]^d \rightarrow \bR^d$ as a \emph{randomized control} in feedback form. The actual randomization is performed by plugging a randomization process in the last variable of ${\bfh}$. Adapting the terminology in \cite{STZ23} to our setting, we say that a randomized control   ${\bfh}$ \emph{executes} a relaxed control $h$, if the random variable  ${\bfh}(t,x,\xi_{t})$ has the distribution $h(t,x)$ for every $t\in [0,T]$ and $x\in \bR^m$ (for some, and then for any, randomization process $\xi$). For a given randomization process $\xi$, the random field $(\bfh(t,x,\xi_t))_{t\in [0,T],\,x\in \bR^m}$ will be called a \textit{$(\xi_t)_{t\in [0,T]}$-randomized policy}.

\begin{rema}
	\begin{enumerate}
		\item We have only fixed the marginal distribution of the randomization process  $(\xi_t)_{t\in [0,T]}$, but not the joint distribution. In particular, $\xi_s$ and $\xi_t$ are, for the moment, not supposed to be independent for $s\neq t$. Several constructions of the process $\xi$ will be discussed below.
		\item It is well known that for every distribution $P$ on $\cB(\bR^d)$, there is a measurable function $H$ such that $H(\eta)$ is $P$-distributed for any uniform random variable $\eta$ on  $[0,1]^d$. This is one motivation to assume that the marginals of $\xi$ are uniformly distributed. Note, however, that for any vector $(\eta_1,\ldots, \eta_d)$ of independent standard Gaussian random variables, the vector $(\Phi(\eta_1),\ldots,\Phi(\eta_d) )$ is uniformly distributed on $[0,1]^d$. Here, $\Phi$ denotes the cumulative distribution function of a standard Gaussian. Hence, changing the marginal distribution of $(\xi_t)_{t\in [0,T]}$, e.g.,  to a multivariate Gaussian as in \cite{BN23} does not make any essential difference in the constructions to come.
	\end{enumerate}
\end{rema}
The crucial property of the randomization process $\xi = (\xi_t)_{t\in [0,T]}$ is its predictability which necessarily implies the predictability of the random field $({\bfh}(t,x,\xi_t))_{t\in [0,T],\,x\in \bR^m}$. Hence, for a randomized control ${\bfh}$ and a fixed randomization process $\xi$, it makes sense to consider the random coefficient SDE
   \begin{align}\label{eq:SDE-random}
	\od  X^{\xi,\bfh}_t & = b(t,  X^{\xi,\bfh}_{t-}, {\bfh}(t, X^{\xi,\bfh}_{t-},\xi_t )) \od t + a(t,  X^{\xi,\bfh}_{t-},  {\bfh}(t, X^{\xi,\bfh}_{t-},\xi_t )) \od B_t \notag \\
	& \quad + \int_{0 < |z| \le \mathfrak{r}} \gamI(t,  X^{\xi,\bfh}_{t-},  {\bfh}(t, X^{\xi,\bfh}_{t-},\xi_t ), z) \tilde N(\od t, \od z) \notag \\ &\quad + \int_{|z| > \mathfrak{r}} \gamII(t,  X^{\xi,\bfh}_{t-},  {\bfh}(t, X^{\xi,\bfh}_{t-},\xi_t ), z) N(\od t, \od z),
\end{align}
which describes the dynamics of the system along the $(\xi_t)_{t\in [0,T]}$-randomized feedback policy $({\bfh}(t,x,\xi_t))_{t\in [0,T],\,x\in \bR^m}$. 

We next discuss two approaches for $\xi=(\xi_t)_{t\in [0,T]}$:
\begin{itemize}
	\item \emph{Idealized sampling:} In idealized sampling, the family $(\xi_t)_{t\in [0,T]}$ of random variables is assumed to be independent. Note that there is no problem to construct the triplet $(B,N,\xi)$ on an appropriate product space. It is, however, known that a family of non-constant independent identically distributed random variables  $(\xi_t)_{t\in [0,T]}$ cannot be realized in a jointly measurable way with respect to the standard product $\sigma$-field, i.e., the map $\xi \colon \Omega\times[0,T]\rightarrow [0,1]^d$ cannot be $\cF\otimes\cB([0,T])/\cB([0,1]^d)$-measurable, see, e.g., Proposition 2.1 in \cite{Su06} and the detailed discussion on the relevance of the results in \cite{Su06} for policy execution in \cite{STZ23}.  In particular, with idealized sampling, we can never obtain the crucial predictability property of   $(\xi_t)_{t\in [0,T]}$, and, hence, it is not clear how to make any good sense of the SDE \eqref{eq:SDE-random} for a sufficiently large class of $(\xi_t)_{t\in[0,T]}$-randomized policies. While no explicit construction of the policy execution for the sample SDE in \cite{JZ23} is provided, the authors introduce an uncountable family of independent uniform random variables for performing the policy execution. Thus, their sample SDE could face the measurability issue detailed above. 
	\item \emph{Grid-sampling:} Let $\Pi$ be a partition of $[0,T]$ with grid points $0=t_0<t_1<\cdots<t_n=T$ for some $n\in \bN$ and mesh-size $|\Pi|:=\max_{1 \le i \le n}|t_i-t_{i-1}|$. Assuming that the probability space carries an independent family $(\xi_1,\ldots, \xi_n)$ of uniforms on $[0,1]^d$ independent of $(B,N)$, we consider 
	the randomization process $\xi^\Pi = (\xi^\Pi_t)_{t \in [0, T]}$ given by
	\begin{align*}
	\xi^\Pi_t :=\sum_{j=1}^n \xi_j {\bf 1}_{(t_{j-1},t_j]}(t),\quad t\in [0,T].
\end{align*}
Writing $\bF^\Pi = (\cF^\Pi_t)_{t \in [0, T]}$ for the right-continuous, augmented version of the filtration generated by $(B,N,\xi^\Pi)$, the process $\xi^\Pi$ is left-continuous and adapted, hence $\bF^\Pi$-predictable. Remark that $\xi^{\Pi}_{t_i}=\xi_i$ is $\cF^\Pi_{t_{i-1}}$-measurable, but independent of $\cF^\Pi_{(t_{i-1})-}$, and $B$ and $N(\od t, \od z)$ are still a Brownian motion and a Poisson random measure with intensity $\nu_t(\od z)\od t$ with respect to $\bF^\Pi$.

We emphasize that the authors in \cite{STZ23} and \cite{GRZ24} have already applied this type of grid-sampling as a substitution for the infeasible idealized sampling when executing Gaussian relaxed policies in the context of linear-quadratic control.
\end{itemize}

By predictability of the grid-sampling process $\xi^\Pi$, we may consider the SDE \eqref{eq:SDE-random} with $\xi=\xi^\Pi$ and we call this SDE the \textit{grid-sampling SDE} along the randomization process $\xi^\Pi$. Remark that it can be solved iteratively on the subintervals of the partition under standard Lipschitz and growth assumptions, i.e., for $t\in (t_{i-1},t_i]$,
 \begin{align}\label{eq:SDE-Pi_sampling}
	X^{\Pi,\bfh}_t & = X^{\Pi,\bfh}_{t_{i-1}}+ \int_{t_{i-1}}^t b(s, X^{\Pi,\bfh}_{s-}, {\bfh}(s,X^{\Pi,\bfh}_{s-},\xi^\Pi_{t_i} )) \od s + \int_{t_{i-1}}^t a(s, X^{\Pi,\bfh}_{s-},  {\bfh}(s,X^{\Pi,\bfh}_{s-},\xi^\Pi_{t_i} )) \od B_s \notag \\
	& \quad + \int_{(t_{i-1},t]} \int_{0 < |z| \le \mathfrak{r}} \gamI(s, X^{\Pi,\bfh}_{s-},  {\bfh}(s,X^{\Pi,\bfh}_{s-},\xi^\Pi_{t_i} ), z) \tilde N(\od s, \od z)\notag \\ & \quad + \int_{(t_{i-1},t]}\int_{|z| > \mathfrak{r}} \gamII(s, X^{\Pi,\bfh}_{s-},  {\bfh}(s,X^{\Pi,\bfh}_{s-},\xi^\Pi_{t_i} ), z) N(\od s, \od z),
\end{align}
see, e.g., Theorem IV.9.1 in \cite{IW89} for the case of a homogeneous Poisson random measure.
\begin{rema}
Suppose that the randomized control ${\bfh}$ is continuous and executes the relaxed control $h$ and that the sampling grid $\Pi$ is ``sufficiently fine''. Then, we may consider 
$$
{\bfh}(t_{i-1},X^{\Pi,\bfh}_{t_{i-1}},\xi^\Pi_{t_i} ) =\lim_{s\searrow t_{i-1}} {\bfh}(s,X^{\Pi,\bfh}_{s-},\xi^\Pi_{t_i} ) 
$$
as a ``good'' approximation to  ${\bfh}(s,X^{\Pi,\bfh}_{s-},\xi^\Pi_{t_i} )$ for $s\in (t_{i-1},t_i]$.	Note that $X^{\Pi,\bfh}_{t_{i-1}}=X^{\Pi,\bfh}_{(t_{i-1})-}$ a.s. Thus, $X^{\Pi,\bfh}_{t_{i-1}}$ is $\cF^\Pi_{(t_{i-1})-}$-measurable and, consequently, independent of $\xi^\Pi_{t_i}$. Therefore, we can interpret the approximation ${\bfh}(t_{i-1},X^{\Pi,\bfh}_{t_{i-1}},\xi^\Pi_{t_i} )$ in the following way: The actor first chooses the distribution $h(t_{i-1},X^{\Pi,\bfh}_{t_{i-1}} )$ and, then, the independent uniform random variable $\xi^\Pi_{t_i}$ is generated to sample from this distribution.
\end{rema}

\subsection{Random measure interpretation of grid-sampling}\label{sec:discrete_sample_random_measure}

We are interested in the limit dynamics of the grid-sampling SDE \eqref{eq:SDE-Pi_sampling} as the mesh-size of the sampling partition $\Pi$ tends to zero. Note that the limit (in finite-dimensional distributions) of grid-sampling scheme leads to idealized sampling. So it does not appear to be promising to pass to the limit on the level of the random coefficients of the grid-sampling SDE. Instead, we change the perspective and consider the SDE \eqref{eq:SDE-Pi_sampling} as a system with deterministic input coefficients $(b,a,\gamI,\bfh)$ which is subjected to the noise given by $(B,N,\xi^\Pi)$. This means that, in this subsection, we first identify suitable random measures depending on $(B,N,\xi^\Pi)$ such that \eqref{eq:SDE-Pi_sampling} can be re-written in the form
\begin{align}\label{eq:SDE-random_measure_Pi}
	\od X^{\Pi,\bfh}_t & = b(t, X^{\Pi,\bfh}_{t-}, {\bfh}(t,X^{\Pi,\bfh}_{t-},u )) M_D^\Pi(\od t,\od u) + a(t, X^{\Pi,\bfh}_{t-},  {\bfh}(t,X^{\Pi,\bfh}_{t-},u )) M_B^\Pi(\od t, \od u) \notag \\
	& \quad + \int_{0 < |z| \le \mathfrak{r}} \gamI(t, X^{\Pi,\bfh}_{t-},  {\bfh}(t,X^{\Pi,\bfh}_{t-},u ), z) \tilde M_J^\Pi(\od t, \od z, \od u) \notag \\ &\quad+ \int_{|z| > \mathfrak{r}} \gamII(t, X^{\Pi,\bfh}_{t-},  {\bfh}(t,X^{\Pi,\bfh}_{t-},u ), z)  M_J^\Pi(\od t, \od z, \od u).
\end{align}
In the next subsection, we will then state our main result on the joint convergence of the random measures $(M_D^\Pi, M_B^\Pi, M_J^\Pi)$ as the mesh-size of $\Pi$ tends to zero. The limit random measures can finally be used to define a meaningful limit SDE of the grid-sampling SDE.

\medskip

\noindent\textit{\textbf{For the drift part}}: We consider 
\begin{equation}\label{eq:drift_Pi_measure}
	M_D^\Pi(\omega,\od t,\od u) := \sum_{i=1}^n  \1_{(t_{i-1}, t_i]}(t) \delta_{\xi^\Pi_{t_i}(\omega)}(\od u)\od t,
\end{equation}
where $\delta_y$ denotes the Dirac distribution on the point $y$. Then, $M^\Pi_D$  is a random measure in the sense of \cite[Definition II.1.3]{JS03}. The following lemma, which links integration with respect to $M_D^\Pi$ to the drift part of the grid-sampling SDE, is straightforward to prove.
\begin{lemm}\label{lem:drift_Pi}
A	
measurable random field $Y \colon \Omega\times [0,T]\times [0,1]^d\rightarrow \bR$ is integrable with respect to $M_D^\Pi$, if and only if
$$
\sum_{i=1}^n	\int_{t_{i-1}}^{t_i} |Y_s(\xi^\Pi_{t_i})| \od s<\infty,\quad \bP\textnormal{-a.s.}
$$
In this case, a.s.,
$$
\int_{(0,T]\times[0,1]^d} Y_s(u) M^\Pi_D(\od s,\od u)= \sum_{i=1}^n\int_{t_{i-1}}^{t_i} Y_s(\xi^\Pi_{t_i}) \od s.
$$
\end{lemm}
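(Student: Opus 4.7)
The plan is to verify the lemma by a pathwise computation, exploiting that for each fixed $\omega$, the random measure $M_D^\Pi(\omega, \cdot)$ is a deterministic $\sigma$-finite measure on $(0,T]\times [0,1]^d$, indeed the finite sum of product measures
\[
M_D^\Pi(\omega, \od t, \od u) = \sum_{i=1}^n \bigl(\1_{(t_{i-1},t_i]}(t)\,\od t\bigr)\otimes \delta_{\xi_i(\omega)}(\od u).
\]
Measurability of $\omega \mapsto \int \1_A(t,u)\,M_D^\Pi(\omega, \od t, \od u)$ for Borel $A\subset (0,T]\times [0,1]^d$ reduces via the above representation to measurability of $\omega\mapsto \int_{t_{i-1}}^{t_i}\1_A(t,\xi_i(\omega))\,\od t$, which in turn follows from joint measurability of $\1_A$ together with measurability of $\xi_i$ and Fubini. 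This confirms that $M_D^\Pi$ is a random measure in the sense of \cite[Definition II.1.3]{JS03}.

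Next, I would take a nonnegative measurable random field $Y$ and apply Tonelli separately on each subinterval: since the inner measure is a Dirac mass, the defining property $\int_{[0,1]^d} f(u)\,\delta_{a}(\od u)=f(a)$ yields, $\bP$-a.s.,
\[
\int_{(0,T]\times [0,1]^d} Y_s(u)\,M_D^\Pi(\od s,\od u)
= \sum_{i=1}^n \int_{t_{i-1}}^{t_i}\!\!\int_{[0,1]^d} Y_s(u)\,\delta_{\xi^\Pi_{t_i}}(\od u)\,\od s
= \sum_{i=1}^n \int_{t_{i-1}}^{t_i} Y_s(\xi^\Pi_{t_i})\,\od s.
\]
Applying this identity with $|Y|$ in place of $Y$ gives the integrability characterization: $Y$ is $M_D^\Pi$-integrable (in the pathwise sense, i.e.\ the total variation of the signed integral is a.s.\ finite) precisely when the sum on the right-hand side is a.s.\ finite. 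Decomposing $Y=Y^+-Y^-$ then transfers the identity to general integrable random fields.

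The only subtle point is the measurability of $(\omega,s)\mapsto Y_s(\omega,\xi^\Pi_{t_i}(\omega))$ needed to justify Fubini; this follows from the joint measurability of $Y$ as a random field together with the measurability of $\xi^\Pi_{t_i}=\xi_i$, via composition with the map $(\omega,s)\mapsto (\omega,s,\xi_i(\omega))$. I do not anticipate a substantive obstacle — the statement is essentially a direct unfolding of the definition of $M_D^\Pi$ combined with Tonelli, and the lemma's purpose is mainly to fix notation for the integrator reformulation \eqref{eq:SDE-random_measure_Pi}.
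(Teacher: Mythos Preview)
Your proposal is correct and is precisely the kind of direct argument the paper has in mind; in fact the paper does not spell out a proof at all, simply noting that the lemma ``is straightforward to prove.'' Your pathwise unfolding of the definition of $M_D^\Pi$ as a sum of products of Lebesgue measure and Dirac masses, followed by Tonelli for $|Y|$ and the decomposition $Y=Y^+-Y^-$, is the natural way to fill in this omission, and the measurability check you mention is the only point requiring care.
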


\medskip

\noindent\textit{\textbf{For the Brownian part}}: We define
\begin{equation}\label{eq:Bm_Pi_measure}
	M_{B^{(l)}}^\Pi(\omega,t,A) := \bb(\int_0^{t}  \sum_{i=1}^n \1_{(t_{i-1},{t_i}]}(s) \1_A(\xi^\Pi_{t_i})\; \od B^{(l)}_s \bb)(\omega),\quad A\in  \cB([0,1]^d),\;t\in [0,T],\; l=1,\ldots, p.
\end{equation}
Note that the integrand is a bounded $\bF^\Pi$-predictable process, and, hence, the It\^o integrals are well defined.
\begin{lemm}\label{lem:Bm_Pi}
	For any $l=1,\ldots, p$, $M_{B^{(l)}}^\Pi$ is an  orthogonal martingale measure on $[0, T] \times \cB([0,1]^d)$ in the sense of \cite{KM90} with intensity measure $M^\Pi_D$. Moreover, for every $\mathbb{F}^\Pi$-predictable (i.e.,  $\mathcal{P}_{{\mathbb{F}^\Pi}}\otimes \cB([0,1]^d)/\cB(\bR)$-measurable) random field $Y \colon \Omega\times [0,T]\times [0,1]^d\rightarrow \bR$ satisfying 
	$$
\sum_{i=1}^n	\bE\bb[\int_{t_{i-1}}^{t_i} |Y_s(\xi^\Pi_{t_i})|^2 \od s \bb]<\infty,
	$$
	$Y$ can be integrated against $M_{B^{(l)}}^\Pi$ and, a.s.,
	\begin{equation}  \label{eq:lem:Bm_Pi}
	\int_{(0,T]\times[0,1]^d} Y_s(u) M_{B^{(l)}}^\Pi(\od s,\od u) = \sum_{i=1}^n \int_{t_{i-1}}^{t_i}  Y_s(\xi^\Pi_{t_i}) \od B^{(l)}_s.
		\end{equation} 
\end{lemm}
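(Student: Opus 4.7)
The plan is to decompose the claim into three tasks: (i) showing that for each $A\in\cB([0,1]^d)$, $M^\Pi_{B^{(l)}}(\cdot,A)$ is a continuous square-integrable $\bF^\Pi$-martingale, (ii) computing its predictable covariation and identifying $M^\Pi_D$ as the intensity measure, and (iii) passing from the measure integral on elementary integrands to the general formula \eqref{eq:lem:Bm_Pi}. The common workhorse will be the integrand $\varphi^\Pi_A(s):=\sum_{i=1}^n \1_{(t_{i-1},t_i]}(s)\1_A(\xi^\Pi_{t_i})$, which is bounded and $\bF^\Pi$-predictable because $\xi^\Pi_{t_i}=\xi_i$ is $\cF^\Pi_{t_{i-1}}$-measurable, making the integrand left-continuous and $\bF^\Pi$-adapted. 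Since $B^{(l)}$ remains a Brownian motion under $\bF^\Pi$, the It\^o integral defining $M^\Pi_{B^{(l)}}(\cdot,A)$ is immediately a continuous square-integrable $\bF^\Pi$-martingale starting at zero, and $\bfL^2(\bP)$-countable additivity in $A$ follows from dominated convergence for It\^o integrals. This disposes of (i).

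For (ii), I would directly compute, using the It\^o isometry in its polarized form,
$$
\<M^\Pi_{B^{(l)}}(\cdot,A),M^\Pi_{B^{(l)}}(\cdot,A')\>_t=\int_0^t\varphi^\Pi_A(s)\varphi^\Pi_{A'}(s)\,\od s.
$$
Since the intervals $(t_{i-1},t_i]$ are pairwise disjoint, the product collapses to $\varphi^\Pi_{A\cap A'}$; applying \cref{lem:drift_Pi} with $Y_s(u)=\1_{A\cap A'}(u)$ rewrites the right-hand side as $\int_{(0,t]\times[0,1]^d}\1_A(u)\1_{A'}(u)\,M^\Pi_D(\od s,\od u)$. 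This simultaneously identifies $M^\Pi_D$ as the covariance/intensity measure and yields orthogonality when $A\cap A'=\emptyset$. The $\sigma$-finiteness condition required in the Kurtz--M\'el\'eard setup is trivial since $[0,1]^d$ is compact and the total intensity on $[0,T]\times[0,1]^d$ equals $T$.

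For (iii) I would run the standard isometry extension. On elementary integrands $Y_s(u,\omega)=\eta(\omega)\1_{(r,r']}(s)\1_A(u)$ with $\eta$ bounded and $\cF^\Pi_r$-measurable, both sides of \eqref{eq:lem:Bm_Pi} reduce by definition to $\eta\int_r^{r'}\varphi^\Pi_A(s)\,\od B^{(l)}_s$, and linearity extends this to simple $\bF^\Pi$-predictable integrands. For general $\bF^\Pi$-predictable $Y$ satisfying the $\bfL^2$-bound of the lemma, the It\^o isometry gives
$$
\bE\bb[\bb(\sum_{i=1}^n\int_{t_{i-1}}^{t_i} Y_s(\xi^\Pi_{t_i})\,\od B^{(l)}_s\bb)^{\!2}\bb]=\bE\bb[\sum_{i=1}^n\int_{t_{i-1}}^{t_i}|Y_s(\xi^\Pi_{t_i})|^2\,\od s\bb]=\bE\bb[\int_{(0,T]\times[0,1]^d}\nq |Y_s(u)|^2\,M^\Pi_D(\od s,\od u)\bb],
$$
where the last step is another application of \cref{lem:drift_Pi}. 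This matches exactly the $\bfL^2$-isometry for the Walsh/KM integral against $M^\Pi_{B^{(l)}}$ provided by part (ii), so both integrals are continuous linear operators from the same $\bfL^2$-space into $\bfL^2(\bP)$ that agree on a dense subset, and hence agree throughout.

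The main obstacle is really organizational: making sure that the Walsh/KM extension is indeed applicable in the present setting. This reduces to the predictability checks above and the $\sigma$-finiteness of $M^\Pi_D$, both of which are free given compactness of $[0,1]^d$ and the definition of $\bF^\Pi$ in \cref{sec:setting}; the appendix material on martingale measures can be invoked for the formal construction of the Walsh integral. Beyond that the proof is a routine monotone-class/isometry exercise.
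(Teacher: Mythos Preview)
Your proposal is correct and follows essentially the same route as the paper: verify the martingale-measure structure via the predictable integrand $\varphi^\Pi_A$, identify the intensity as $M^\Pi_D$, check \eqref{eq:lem:Bm_Pi} on simple integrands, and extend by the common It\^o/Walsh isometry. The only difference is that for steps (i)--(ii) the paper does not check the axioms by hand but observes that $M^\Pi_{B^{(l)}}(t,A)=\int_0^t \1_A(Z_s)\,\od B^{(l)}_s$ with $Z_s=\sum_i \1_{(t_{i-1},t_i]}(s)\xi^\Pi_{t_i}$ and invokes \cite[Proposition II-1]{KM90} on image martingale measures to read off orthogonality and the intensity $\delta_{Z_s}(\od u)\od s=M^\Pi_D$ in one stroke; your direct covariation computation achieves the same thing. (Minor point: the reference \cite{KM90} is El Karoui--M\'el\'eard, not Kurtz--M\'el\'eard.)
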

Background information on orthogonal martingale measures, including a review of the integration theory, and a proof of \cref{lem:Bm_Pi} can be found in \cref{app:random_measures}.

\medskip

\noindent \textit{\textbf{For the jump part}}: We first consider the (inhomogeneous) purely non-Gaussian L\'evy process 
\begin{align}\label{def:induced-Levy-process}
	L_t=\int_{(0,t]}\int_{0<|z|\le \mathfrak{r}} z \tilde N(\od s,\od z) + \int_{(0,t]}\int_{|z|> \mathfrak{r}} z  N(\od s,\od z), \quad t\in [0,T],
\end{align}
and recall that $N$ is the jump measure of $L$, i.e.,
$$
N(\omega,\od t,\od z)=\sum_{t\in (0,T]} \1_{\{\Delta L_t(\omega)\neq  0\}} \delta_{(t,\Delta L_t(\omega))}(\od t,\od z).
$$
We now introduce the new integer-valued random measure
\begin{equation} \label{eq:jump_Pi_measure}
M_J^\Pi(\omega,\od t,\od z,  \od u) := \sum_{i=1}^n  \sum_{t\in (t_{i-1},t_i]} \1_{\{\Delta L_t(\omega)\neq  0\}} \delta_{(t,\Delta L_t(\omega),\xi^\Pi_{t_i}(\omega))}(\od t,\od z, \od u)
\end{equation}
on $[0,T]\times \bR^q_0\times[0,1]^d$. It has the same jump times as $N$, but features an ``extra jump size'' $\xi^\Pi_{t_i}$ in the new variable $u$ for the control randomization, if the jump takes place in the interval $(t_{i-1},t_i]$. As stated in the following lemma, its predictable compensator measure is given by
$$
\mu_J^\Pi(\omega,\od t,\od z,  \od u) := \sum_{i=1}^n  \1_{(t_{i-1}, t_i]}(t) \delta_{\xi^\Pi_{t_i}(\omega)}(\od u) \nu_t(\od z) \od t.
$$
Hence, stochastic integration with respect to the compensated random measure $\tilde M^\Pi_J=M^\Pi_J-\mu_J^\Pi$ can be defined in the sense of  \cite[Ch.II, §1d]{JS03}.
\begin{lemm}\label{lem:jumps_Pi}
	\begin{enumerate}[\quad \rm (1)]
		\item The random measure $\mu_J^\Pi$ is the $(\bF^{\Pi}, \bP)$-predictable compensator measure of the integer-valued random measure  $M_J^\Pi$.
		\item Suppose that $Y \colon \Omega\times [0,T]\times \bR^q_0\times [0,1]^d\rightarrow \bR$ is an $\bF^{\Pi}$-predictable random field (i.e. $Y$ is $\mathcal{P}_{{\mathbb{F}^\Pi}}\otimes\cB({\bR^q_0})\otimes \cB([0,1]^d)/\cB(\bR)$-measurable). If 
		$$
		\sum_{i=1}^n  \int_{(t_{i-1},{t_i}]\times {\bR^q_0} }  |Y_s(z,\xi^\Pi_{t_i})| N(\od s,\od z)<\infty,\quad \bP\textnormal{-a.s.},
		$$
		 then $Y$ is integrable with respect to $M_J^\Pi$ and, a.s.,
			\begin{equation}  \label{eq:lem:jumps_Pi-1}
		\int_{(0,T]\times \bR^q_0 \times[0,1]^d} Y_s(z,u) M_{J}^\Pi(\od s,\od z,\od u)= \sum_{i=1}^n \int_{(t_{i-1},{t_i}]\times {\bR^q_0}} Y_s(z,\xi^\Pi_{t_i}) N(\od s,\od z).
		\end{equation}
		Moreover, if 
		$$
		\sum_{i=1}^n \bE\bb[	\int_{t_{i-1}}^{t_i} \int_{\bR^q_0} |Y_s(z,\xi^\Pi_{t_i})|^2 \nu_s(\od z)\od s\bb]<\infty,
		$$
		then $Y$ is integrable with respect to $\tilde M_J^\Pi$ and, a.s.,
	\begin{equation}  \label{eq:lem:jumps_Pi-2}
		\int_{(0,T]\times \bR^q_0 \times[0,1]^d} Y_s(z,u) \tilde M_{J}^\Pi(\od s,\od z,\od u)= \sum_{i=1}^n \int_{(t_{i-1},{t_i}]\times {\bR^q_0}}  Y_s(z,\xi^\Pi_{t_i}) \tilde N(\od s,\od z).
		\end{equation}
	\end{enumerate}
	\end{lemm}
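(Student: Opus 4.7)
The strategy is to reduce all claims to corresponding properties of the original Poisson random measure $N$ under the enlarged filtration $\bF^\Pi$. The crucial observation (already recorded in the excerpt) is that, by right-continuity of $\bF^\Pi$, $\xi_i = \xi^\Pi_{t_i}$ is $\cF^\Pi_{t_{i-1}}$-measurable; hence on each subinterval $(t_{i-1}, t_i]$ it is $\cF^\Pi_{s-}$-measurable for every $s > t_{i-1}$ and behaves as a ``predictable mark'' for the control-randomization variable.

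For part~(1), I would first verify that $\mu_J^\Pi$ is $\bF^\Pi$-predictable: the only nontrivial factor is $\sum_i \1_{(t_{i-1},t_i]}(t) \delta_{\xi_i}(\od u)$, which on each $(t_{i-1}, t_i]$ is the left-continuous, $\bF^\Pi$-adapted random kernel $\delta_{\xi_i}$, hence predictable. To identify $\mu_J^\Pi$ as the compensator, I would show
\[
\bE \int W \, dM_J^\Pi = \bE \int W \, d\mu_J^\Pi
\]
for every non-negative $\bF^\Pi$-predictable random field $W(\omega, s, z, u)$. By a monotone class argument this reduces to the product case $W = V(\omega, s) g(z) h(u)$, and splitting into subintervals $(t_{i-1}, t_i]$ further reduces the claim to
\[
\bE \int_{(t_{i-1},t_i]\times \bR^q_0} V(s) g(z) h(\xi_i) N(\od s, \od z) = \bE \int_{(t_{i-1},t_i]\times \bR^q_0} V(s) g(z) h(\xi_i) \nu_s(\od z) \od s,
\]
which holds because $(s,z) \mapsto V(s) g(z) h(\xi_i)$ is $\bF^\Pi$-predictable on $(t_{i-1},t_i] \times \bR^q_0$ (using $\xi_i \in \cF^\Pi_{s-}$ for $s>t_{i-1}$) and because $\nu_s(\od z)\od s$ remains the $\bF^\Pi$-predictable compensator of $N$ (as already noted in the excerpt).

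For part~(2), formula \eqref{eq:lem:jumps_Pi-1} is a direct pathwise unfolding of the definition \eqref{eq:jump_Pi_measure}: I would verify it first for $Y$ of product-indicator form, then extend by linearity and monotone/dominated convergence. Formula \eqref{eq:lem:jumps_Pi-2} then follows for $Y$ bounded with support bounded away from $z = 0$ by combining \eqref{eq:lem:jumps_Pi-1} with the analogous identity for $\mu_J^\Pi$ (immediate from its explicit form) and subtracting. The main obstacle is lifting \eqref{eq:lem:jumps_Pi-2} to the stated $L^2$ hypothesis, where the two pathwise integrals on the right of \eqref{eq:lem:jumps_Pi-1} and its $\mu_J^\Pi$-analog need not be separately finite, so the subtraction argument is not directly legitimate. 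I would handle this via the $L^2$-theory of stochastic integrals with respect to compensated integer-valued random measures (cf.\ \cite[Ch.~II, \S1d]{JS03}): both sides of \eqref{eq:lem:jumps_Pi-2} are $L^2$-martingales with $L^2$-norm at time~$T$ equal to the common quantity $\bE \int Y^2 \, d\mu_J^\Pi = \sum_i \bE \int_{t_{i-1}}^{t_i} \int |Y_s(z,\xi_i)|^2 \nu_s(\od z) \od s$ (the isometry for $\tilde M_J^\Pi$ uses part~(1), while the isometry for $\tilde N$ uses that $\nu$ is the $\bF^\Pi$-compensator of $N$). Truncating $Y$ by a bounded sequence $Y_k$ supported in $\{|z| \ge 1/k\}$ places us in the bounded regime where the identity already holds, and passing to the $L^2$-limit on both sides yields \eqref{eq:lem:jumps_Pi-2} in full generality.
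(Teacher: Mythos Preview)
Your proposal is correct and follows essentially the same route as the paper: both rely on the $\cF^\Pi_{t_{i-1}}$-measurability of $\xi_i$, the fact that $\nu_s(\od z)\od s$ remains the $(\bF^\Pi,\bP)$-compensator of $N$, and a truncation plus $L^2$-isometry argument for \eqref{eq:lem:jumps_Pi-2}. The only cosmetic difference is organizational---the paper first establishes the pathwise identity \eqref{eq:lem:jumps_Pi-1} (for nonnegative $Y$, by splitting $Y=Y^+-Y^-$) and then uses it directly to compute $\bE[(Y\bigcdot M_J^\Pi)_T]=\bE[(Y\bigcdot \mu_J^\Pi)_T]$ for general nonnegative predictable $Y$, invoking \cite[Theorem II.1.8(i)]{JS03} to conclude part~(1), whereas you prove part~(1) first via a monotone-class reduction to product integrands; both orderings arrive at the same place with the same ingredients.
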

Again, the proof can be found in  \cref{sec:proofs-random-measure}.

\smallskip

In view of \cref{lem:drift_Pi,lem:Bm_Pi,lem:jumps_Pi}, we can, indeed, re-write the grid-sampling SDE \eqref{eq:SDE-Pi_sampling} in the form \eqref{eq:SDE-random_measure_Pi}, utilizing the random measures introduced in \eqref{eq:drift_Pi_measure}, \eqref{eq:Bm_Pi_measure}, and \eqref{eq:jump_Pi_measure}. For instance, assuming that $a$ is bounded and applying standard conventions on integration of matrix-valued integrands, the Brownian part in \eqref{eq:SDE-Pi_sampling} becomes for $t\in (t_{i-1},t_i]$,
\begin{align*}
& \int_{t_{i-1}}^t a(s, X^{\Pi,\bfh}_{s-},  {\bfh}(s,X^{\Pi,\bfh}_{s-},\xi^\Pi_{t_i} )) \od B_s \\
& =\left(\begin{matrix} \sum_{l=1}^p \int_{t_{i-1}}^t a^{(1,l)}(s, X^{\Pi,\bfh}_{s-},  {\bfh}(s,X^{\Pi,\bfh}_{s-},\xi^\Pi_{t_i} )) \od B^{(l)}_s \\ \vdots \\ \sum_{l=1}^p \int_{t_{i-1}}^t a^{(m,l)}(s, X^{\Pi,\bfh}_{s-},  {\bfh}(s,X^{\Pi,\bfh}_{s-},\xi^\Pi_{t_i} )) \od B^{(l)}_s \end{matrix}	\right) \\[4pt]
& =
\left(\begin{matrix} \sum_{l=1}^p 	\int_{(t_{i-1},t]\times[0,1]^d}  a^{(1,l)}(s, X^{\Pi,\bfh}_{s-},  {\bfh}(s,X^{\Pi,\bfh}_{s-},u )) M_{B^{(l)}}^\Pi(\od s,\od u)\\ \vdots \\ \sum_{l=1}^p 	\int_{(t_{i-1},t]\times[0,1]^d}  a^{(m,l)}(s, X^{\Pi,\bfh}_{s-},  {\bfh}(s,X^{\Pi,\bfh}_{s-},u )) M_{B^{(l)}}^\Pi(\od s,\od u) \end{matrix}	\right) \\[4pt]
& =  \int_{(t_{i-1},t]\times[0,1]^d}   a(s, X^{\Pi,\bfh}_{s-},  {\bfh}(s,X^{\Pi,\bfh}_{s-},u )) M_{B}^\Pi(\od s,\od u),
\end{align*}
where $M^\Pi_B = (M^\Pi_{B^{(1)}}, \ldots, M^\Pi_{B^{(p)}})^\tran$.

\subsection{Limit theorem and grid-sampling limit SDE}
\label{sec:main_thm}
In this subsection, we establish a limit theorem for the random measures $(M_D^\Pi,M_B^\Pi, M_J^\Pi)$ defined in \eqref{eq:drift_Pi_measure}--\eqref{eq:jump_Pi_measure}, which drive the grid-sampling SDE \eqref{eq:SDE-random_measure_Pi}, as the mesh-size of the partition $\Pi$ goes to zero. This limit theorem suggests a formulation for the \emph{grid-sampling limit SDE}, which replaces $(M_D^\Pi,M_B^\Pi, M_J^\Pi)$  by the limit random measures $(M_D,M_B, M_J)$ in \eqref{eq:SDE-random_measure_Pi}.

\smallskip

We define
$$
M_D(A) := \Leb_{[0,T]}\otimes \Leb_{[0,1]}^{\otimes d} (A),\quad A\in \cB([0,T])\otimes \cB([0,1]^d),
$$
where $\Leb_U$ stands for the restriction of the Lebesgue measure to a Borel set $U$. Moreover, we let $(M_{B^{(1)}},\ldots, M_{B^{(p)}})$ denote $p$ independent martingale measures with continuous paths and intensity measure $M_D$. Continuous martingale measures with deterministic intensities are also called white noise martingale measures, and we refer to \cite{KM90} for a construction of such martingale measures and more background information. \cref{lem:Bm_from_white_noise} below provides some information on their relation to Brownian motion. 

Finally, $M_J$ denotes a Poisson random measure  on $[0,T] \times \bR^q_0\times [0,1]^d$ with intensity measure
$$
\mu_J(\od t, \od z,\od u) := \nu_t(\od z)\od u \od t.
$$
 An explicit construction of $M_J$ can be found in  \cref{sec:random_measure_construction}. As usual $\tilde M_J=M_J-\mu_J$ stands for the compensated   Poisson random measure.  
 
 We assume  that the original filtered probability space $(\Omega, \cF, \bar{\mathbb{F}}, \bP)$ has been chosen sufficiently large to carry  $(M_{B^{(1)}},\ldots, M_{B^{(p)}})$ and  $M_J$. By \cref{rema:independence}, $(M_{B^{(1)}},\ldots, M_{B^{(p)}})$ and  $M_J$ are automatically independent.  We denote by $\bF$ the  right-continuous, augmented version of the filtration generated by $(M_{B^{(1)}},\ldots, M_{B^{(p)}}, M_J)$.
\begin{theo}\label{thm:limit}
	Let $(\Pi_n)_{n\in \bN}$ be a sequence of finite partitions of $[0,T]$ with $\lim_{n\rightarrow \infty} |\Pi_n|=0$.  For any $m\in \bN$, $R\in(0,\infty)\cup\{\mathfrak{r}\}$, and for any bounded measurable functions $f_{l}^{(k)}\colon [0,T] \times [0,1]^d\to \bR$ ($l=0,\ldots,p$; $k=1,\ldots,m$),   $f_{l}^{(k)} \colon [0,T]  \times \bR^q_0 \times [0,1]^d\to \bR$ ($l=p+1,p+2$; $k=1,\ldots,m$),
	consider the sequence of $\bR^m$-valued processes $\cX^n=(\cX^{n, (1)},\ldots, \cX^{n,(m)})$  defined via
	\begin{align*}
		\cX^{n,(k)}_t &= \int_{(0,t]\times[0,1]^d} f_{0}^{(k)}(s,u) M_D^{\Pi_n}(\od s,\od u)+ \sum_{l=1}^p \int_{(0,t]\times[0,1]^d} f_{l}^{(k)}(s,u) M_{B^{(l)}}^{\Pi_n}(\od s,\od u)  \\ 
		& \quad +  	\int_{(0,t]\times \{ 0<|z|\leq R\} \times[0,1]^d} f_{p+1}^{(k)}(s,z,u)|z| \tilde M_{J}^{\Pi_n}(\od s,\od z,\od u) \\ 
		& \quad +  	\int_{(0,t]\times \{ |z|> R\} \times[0,1]^d} f_{p+2}^{(k)}(s,z,u) M_{J}^{\Pi_n}(\od s,\od z,\od u), \quad t\in [0,T] ,\; k=1,\ldots, m.
	\end{align*}
	Then, $(\cX^n)_{n\in \bN}$ converges weakly in the Skorokhod topology on the space $\bD_T(\bR^m)$ of $\bR^m$-valued, c\`adl\`ag functions to $\cX=(\cX^{(1)},\ldots, \cX^{(m)} )$, where
	\begin{align*}
		\cX^{(k)}_t &= \int_{(0,t]\times[0,1]^d} f_{0}^{(k)}(s,u) M_D(\od s,\od u)+ \sum_{l=1}^p \int_{(0,t]\times[0,1]^d}f_{l}^{(k)}(s,u) M_{B^{(l)}}(\od s,\od u)  \\ 
		& \quad +  	\int_{(0,t]\times \{ 0<|z|\leq R\} \times[0,1]^d} f_{p+1}^{(k)}(s,z,u)|z| \tilde M_{J}(\od s,\od z,\od u) \\ 
		& \quad +  	\int_{(0,t]\times \{ |z|> R\} \times[0,1]^d} f_{p+2}^{(k)}(s,z,u) M_{J}(\od s,\od z,\od u), \quad t\in [0,T] ,\; k=1,\ldots, m.
	\end{align*}
\end{theo}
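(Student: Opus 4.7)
The plan is to regard each $\cX^n$ as an $m$-dimensional $\bF^{\Pi_n}$-semimartingale, to identify the limit $\cX$ as a process with independent increments (PII) having continuous deterministic characteristics (since $M_D$, the intensities of $M_B$, and $\mu_J$ are all absolutely continuous in $t$), and then to apply Theorem VIII.2.17 of \cite{JS03} on convergence of semimartingales to a PII. This reduces the claim to convergence in probability, at each $t$, of the modified characteristics of $\cX^n$ to those of $\cX$, together with a quasi-left-continuity / Lindeberg check.

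\textbf{Identifying the characteristics.} Fix a truncation function $h\colon \bR^m\to\bR^m$. By \cref{lem:drift_Pi,lem:Bm_Pi,lem:jumps_Pi}, the jump measure of $\cX^n$ is the push-forward of $M_J^{\Pi_n}$ via
\begin{align*}
\Phi(s,z,u) := \B(f_{p+1}^{(k)}(s,z,u)|z|\1_{\{0<|z|\le R\}} + f_{p+2}^{(k)}(s,z,u)\1_{\{|z|>R\}}\B)_{k=1}^m,
\end{align*}
with compensator $\nu^{\cX^n}$ the push-forward of $\mu_J^{\Pi_n}$; the predictable quadratic covariation of the continuous martingale part reads
\begin{align*}
\tilde C_t^{n,jk} = \sum_{l=1}^p \int_{(0,t]\times[0,1]^d} f_l^{(j)}(s,u)\,f_l^{(k)}(s,u)\, M_D^{\Pi_n}(\od s,\od u),
\end{align*}
by orthogonality of the $M_{B^{(l)}}^{\Pi_n}$; and the drift $B^n_t$ equals $\int f_0\,\od M_D^{\Pi_n}$ plus a truncation-correction of the form $\int\!\int [h(\Phi) - f_{p+1}|z|\1_{\{0<|z|\le R\}}]\,\nu_s(\od z)\, M_D^{\Pi_n}(\od s,\od u)$, which arises from matching the original mixed compensated/uncompensated representation of $\cX^n$ to the Jacod--Shiryaev convention $\cX^n = \cX^n_0 + B^n + \cX^{n,c} + h*(\mu^{\cX^n}-\nu^{\cX^n}) + (y-h(y))*\mu^{\cX^n}$.

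\textbf{Key LLN over the grid.} The technical heart is the following law of large numbers, which I expect to coincide with \cref{prop:limit-test-function-g}: for every bounded measurable $\phi\colon [0,T]\times[0,1]^d\to\bR$ and each $t\in[0,T]$,
\begin{align*}
\int_{(0,t]\times[0,1]^d} \phi(s,u)\, M_D^{\Pi_n}(\od s,\od u)\;\xrightarrow{L^2(\bP)}\; \int_0^t\int_{[0,1]^d}\phi(s,u)\,\od u\,\od s.
\end{align*}
The identity $\bE\bb[\int_{t_{i-1}}^{t_i}\phi(s,\xi_i)\,\od s\bb] = \int_{t_{i-1}}^{t_i}\int_{[0,1]^d}\phi(s,u)\,\od u\,\od s$ concentrates the bias in the partition subinterval containing $t$, of length $O(|\Pi_n|)$, while independence of the $\xi_i$ kills all cross-variance terms and yields $\operatorname{Var}\le \|\phi\|_\infty^2\, T\, |\Pi_n|\to 0$.

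\textbf{Reduction and main obstacle.} Every term contributing to the modified characteristics of $\cX^n$ --- drift, modified second characteristic, and the test integrals $g*\nu^{\cX^n}$ for bounded continuous $g$ vanishing near $0$ --- can be written as $\int \phi(s,u)\,M_D^{\Pi_n}(\od s,\od u)$ for a bounded $\phi$ built from the $f_l^{(k)}$ by algebraic operations and by $\nu_s$-integration of compositions with $\Phi$ (e.g.\ $\phi_g(s,u) = \int g(\Phi(s,z,u))\,\nu_s(\od z)$ for the jump-compensator test); boundedness follows from boundedness of the $f_l^{(k)}$ together with assumption \eqref{assumption:Levy-measure-r}, using that $g$ vanishes near $0$, that $h(\Phi)$ is bounded, and that $|z|^2$ is $\nu_s$-integrable on $\{0<|z|\le R\}$. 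Hence the key LLN delivers pointwise-in-$t$ $\bP$-convergence to the claimed continuous deterministic characteristics of $\cX$; uniform-in-$t$ convergence of the drift follows from its uniformly bounded total variation combined with Lipschitz continuity of the limit. Quasi-left continuity of the PII $\cX$ and the Lindeberg-type majoration in Theorem VIII.2.17 follow from boundedness of the $f_l^{(k)}$ and \eqref{assumption:Levy-measure-r}. The hard part will be the key LLN itself, together with the bookkeeping that expresses \emph{every} term in the modified characteristics in the required ``bounded $\phi$ against $M_D^{\Pi_n}$'' form --- in particular, handling the split between compensated small jumps and uncompensated large jumps in a way consistent with the Jacod--Shiryaev truncation.
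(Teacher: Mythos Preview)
Your route is sound and genuinely different from the paper's. The paper does \emph{not} work with the $\bF^{\Pi_n}$-characteristics. Instead it (i) first proves that $\cX^n$ is close in the Skorokhod metric to its piecewise-constant interpolation $\cX^n_{\rho_n}$ along $\Pi_n$ (a nontrivial argument involving a random time change that aligns the at-most-one large jump per subinterval with the grid), and (ii) applies the \emph{triangular-array} theorem VIII.2.29 of \cite{JS03} to $\cX^n_{\rho_n}$ with respect to a coarser filtration $\bG^{\sigma_n}$ in which $\xi^n_i$ is only revealed at $t^n_i$ rather than at $t^n_{i-1}$. In that filtration the increments $\Delta^n_i\cX^n$ are independent, so the conditional expectations collapse to ordinary ones, and \cref{prop:limit-test-function-g} is \emph{not} the LLN you wrote: it is the statement that $\sum_i\bE[g(\Delta^n_i\cX^n)]\to g(0)+\int_0^t\Psi_f(g)(s)\,\od s$ for $g\in C^2_b(\bR^m)$, established by applying It\^o's formula to $g(F^{n,i}_\cdot)$ on each subinterval and then dominated convergence. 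Your approach trades this It\^o computation and the interpolation step for random $\bF^{\Pi_n}$-characteristics, all of the shape $\int\phi\,M_D^{\Pi_n}$, which you then collapse by an elementary $L^2$ law of large numbers---a cleaner reduction.

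Two minor caveats. First, the $\phi$'s you build (e.g.\ $\phi_g(s,u)=\int g(\Phi(s,z,u))\nu_s(\od z)$ or the truncation correction in the drift) need not be bounded in $s$: the standing assumption only gives $\int_0^T\int_{|z|>R}\nu_s(\od z)\,\od s<\infty$, not a pointwise bound. They do satisfy $\int_0^T\sup_u|\phi(s,u)|\,\od s<\infty$, and your variance bound extends verbatim to this case since $\sum_i a_i^2\le(\max_i a_i)\sum_i a_i\to 0$ with $a_i=\int_{t_{i-1}}^{t_i}\sup_u|\phi|\,\od s$. Second, for [Sup-$\beta$] your argument (``bounded variation plus Lipschitz limit'') should be stated as: both $B^n$ and $B$ share the \emph{deterministic} modulus of continuity $w(s,t)=\int_s^t\sup_u|\phi_0(r,u)|\,\od r$, so pointwise convergence on a finite grid plus equicontinuity yields the uniform convergence in probability.
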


The proof  will be provided in \cref{sec:limit_proof} and \cref{sec:Skorokhod-space} contains some background information on weak convergence in the Skorokhod topology.

\begin{rema}
As a consequence of  \cref{thm:limit},  $(M_D^{\Pi_n},M_{B^{(1)}}^{\Pi_n},\ldots, M_{B^{(p)}}^{\Pi_n}, M_J^{\Pi_n})$ vaguely converges to $(M_D,M_{B^{(1)}},\ldots, M_{B^{(p)}}, M_J)$ in the following sense: For any $m\in \bN$,  and for any continuous functions with compact support $f_{l}^{(k)} \colon [0,T] \times [0,1]^d\to \bR$ ($l=0,\ldots,p$; $k=1,\ldots,m$),   $f_{p+2}^{(k)} \colon [0,T]  \times \bR^q_0 \times [0,1]^d\to \bR$ ($k=1,\ldots,m$),
the sequence of $\bR^m$-valued processes $\cX^n=(\cX^{n,(1)},\ldots, \cX^{n,(m)} )$  defined via
\begin{align*}
	\cX^{n,(k)}_t &= \int_{(0,t]\times[0,1]^d} f_{0}^{(k)}(s,u) M_D^{\Pi_n}(\od s,\od u)+ \sum_{l=1}^p \int_{(0,t]\times[0,1]^d} f_{l}^{(k)}(s,u) M_{B^{(l)}}^{\Pi_n}(\od s,\od u)  \\  
	& \quad +  	\int_{(0,t]\times \bR^q_0 \times[0,1]^d} f_{p+2}^{(k)}(s,z,u) M_{J}^{\Pi_n}(\od s,\od z,\od u), \quad t\in [0,T] ,\; k=1,\ldots, m,
\end{align*}
weakly converges in the Skorokhod topology on $\bD_T(\bR^m)$ to $\cX=(\cX^{(1)},\ldots, \cX^{(m)})$, where
\begin{align*}
	\cX^{(k)}_t &= \int_{(0,t]\times[0,1]^d} f_{0}^{(k)}(s,u) M_D(\od s,\od u)+ \sum_{l=1}^p \int_{(0,t]\times[0,1]^d}f_{l}^{(k)}(s,u) M_{B^{(l)}}(\od s,\od u)  \\  
	& \quad +  	\int_{(0,t]\times\bR^q_0 \times[0,1]^d} f_{p+2}^{(k)}(s,z,u) M_{J}(\od s,\od z,\od u), \quad t\in [0,T] ,\; k=1,\ldots, m.
\end{align*}
Indeed, if the $f_{l}^{(k)}$'s $(l=p+1,p+2;\;k=1,\ldots,m)$ in  \cref{thm:limit} have compact support, then there is an $\varepsilon>0$ (independent of $k,l,t,u$) such that $f_{l}^{(k)}=0$, if $0<|z|\leq \varepsilon$. Hence, we can apply \cref{thm:limit} with $R=\varepsilon$. We also refer to \cite{Ka17} for background information on the general theory of vague convergence of random measures and to \cite{Xi94,Xi95} for the case of martingale measures.
\end{rema}

In view of \cref{thm:limit}, the random measure formulation \eqref{eq:SDE-random_measure_Pi} of the grid-sampling SDE \eqref{eq:SDE-Pi_sampling} in  \cref{sec:discrete_sample_random_measure}, and the definition of $M_D$, a natural limit formulation of the grid-sampling SDE for a given randomized policy  $\bfh \colon [0,T] \times \bR^m  \times [0,1]^d \rightarrow \bR^{d}$ is
\begin{align}\label{eq:SDE-random_measure_limit}
	 X^{\bfh}_t = x &+ \int_{0}^t \nq \int_{[0,1]^d} b(s, X^{\bfh}_{s-}, {\bfh}(t,X^{\bfh}_{s-},u )) \od u \od s\notag \\ 
	 &+ \sum_{l=1}^p \int_{(0,t]\times[0,1]^d}  a^{(\cdot,l)}(s, X^{\bfh}_{s-},  {\bfh}(s,X^{\bfh}_{s-},u )) M_{B^{(l)}}(\od s,\od u)  \notag\\ 
	 &+  	\int_{(0,t]\times \{ 0<|z|\le \mathfrak{r}\} \times[0,1]^d} \gamI(s, X^{\bfh}_{s-},  {\bfh}(t,X^{\bfh}_{s-},u ), z)\tilde M_{J}(\od s,\od z,\od u) \notag \\ 
	 & +  	\int_{(0,t]\times \{ |z|> \mathfrak{r}\} \times[0,1]^d} \gamII(s, X^{\bfh}_{s-},  {\bfh}(t,X^{\bfh}_{s-},u ),z) M_{J}(\od s,\od z,\od u).
\end{align}
We call this SDE the \emph{grid-sampling limit SDE} for policy ${\bfh}$.

\begin{rema}
	We stress that the random measures $(M_D,M_{B^{(1)}},\ldots, M_{B^{(p)}} , M_J)$ appearing in the limit are independent, whereas  the pre-limit random measures $(M_D^{\Pi},M_{B^{(1)}}^{\Pi},\ldots, M_{B^{(p)}}^{\Pi} , M_J^{\Pi})$ are jointly constructed in terms of the randomization process $\xi^\Pi$ and are, thus, dependent. In particular, a solution $ X^{\bfh}$ of the grid-sampling limit SDE \eqref{eq:SDE-random_measure_limit} cannot be interpreted as the model dynamics evaluated along a $(\xi_t)_{t\in[0,T]}$-randomized policy, i.e., it cannot be reformulated in the form \eqref{eq:SDE-random} for some randomization process $\xi$ in general. Nonetheless, we think that the limit SDE \eqref{eq:SDE-random_measure_limit} is practically relevant for justifying learning algorithms derived by the first-optimize-then-discretize approach. This aspect will be briefly sketched in  \cref{sec:learning} below. 
\end{rema}

\begin{rema}\label{rem:no_jump}
	Suppose that we are in the no-jump case, i.e, $\gamI\equiv 0$.
	\begin{enumerate}
		\item Pathwise existence and uniqueness of the grid-sampling limit SDE \eqref{eq:SDE-random_measure_limit} follows from Proposition IV-1 in \cite{KM90}, provided the coefficients 
		\begin{equation*}
		b_{\bfh}(t,x,u)= b(t, x, {\bfh}(t,x,u )),\quad a_{\bfh}(t,x,u)= a(t, x, {\bfh}(t,x,u ))
		\end{equation*}
		are Lipschitz continuous and of linear growth in $x$ uniformly in $(t,u)$. Moreover, under these conditions, the law of $ X^{\bfh}$ is the unique solution of the martingale problem for the operator
		$$
		(\cL_{\bfh} f)(t,x):=\int_{[0,1]^d} \bb(\frac{1}{2} \sum_{i,j=1}^m (a_{\bfh}(t,x,u)a^\tran_{\bfh}(t,x,u))^{(i,j)} \frac{\partial^2f}{\partial x_i\partial x_j}(x)+  \sum_{i=1}^m (b_{\bfh}(t,x,u))^{(i)} \frac{\partial f}{\partial x_i}(x)\bb) \od u.
		$$
		\item By combining  \cref{thm:limit} with the stability results for SDEs driven by continuous orthogonal martingale measures in \cite[p.354]{KD01}, we observe that under at most technical assumptions the following limit theorem is valid: If $\bfh_1,\ldots,\bfh_K \colon [0,T]\times \bR^m\times [0,1]^d\rightarrow \bR^d$ are randomized policies, then one obtains the joint weak convergence $$(X^{\Pi_n,\bfh_1},\ldots, X^{\Pi_n,\bfh_K}, M_D^{\Pi_n},M_{B^{(1)}}^{\Pi_n},\ldots, M_{B^{(p)}}^{\Pi_n}) \to   (X^{\bfh_1},\ldots, X^{\bfh_K}, M_D,M_{B^{(1)}},\ldots, M_{B^{(p)}}).$$ 
		This result serves as another justification for using the grid-sampling limit SDE  \eqref{eq:SDE-random_measure_limit}.
	\end{enumerate}
We leave a detailed study of these aspects in the general case with jumps to future research.
\end{rema}

We close this subsection by two examples in which the grid-sampling limit SDE \eqref{eq:SDE-random_measure_limit} is simplified. They rely on the following elementary lemma, whose proof is given in \cref{sec:proofs-random-measure}.

\begin{lemm}\label{lem:Bm_from_white_noise}
	Suppose that $\eta \colon \Omega\times[0,T]\times [0,1]^d\rightarrow \bR^m$ is an $\bF$-predictable random field satisfying
	$$
		 \int_{[0,1]^d} \eta_t(u)\eta_t(u)^\tran\od u=I_m\quad \bP\otimes \Leb_{[0,T]}\trm{-a.e. } (\omega, t) \in \Omega \times [0, T].
	$$
	Define 
	$$
	B^{\eta,(k,l)}_t=\int_0^t \nq \int_{[0,1]^d} \eta^{(k)}_s(u)\, M_{B^{(l)}}(\od s,\od u),\quad t\in [0,T],\; l=1,\ldots,p,\,k=1,\ldots, m.
	$$
	Then, $B^\eta=(B^{\eta,(k,l)} : l=1,\ldots,p,\,k=1,\ldots, m)$ is an $mp$-dimensional Brownian motion.  
\end{lemm}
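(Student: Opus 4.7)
The plan is to verify the hypotheses of L\'evy's characterization of Brownian motion for the $mp$-dimensional process $B^\eta$.

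First I would check that each coordinate $B^{\eta,(k,l)}$ is a well-defined continuous square-integrable martingale with $B_0^{\eta,(k,l)}=0$. The integrand $\eta^{(k)}$ is $\bF$-predictable by assumption, and the normalization condition
\begin{equation*}
\int_{[0,1]^d} \eta_t(u)\eta_t(u)^\tran\,\od u = I_m
\end{equation*}
applied on the diagonal gives $\int_{[0,1]^d}(\eta^{(k)}_t(u))^2\,\od u = 1$ for a.e.\ $(\omega,t)$. Hence
\begin{equation*}
\bE\bb[\int_0^T\nq\int_{[0,1]^d}(\eta^{(k)}_s(u))^2\,M_D(\od s,\od u)\bb] = T<\infty,
\end{equation*}
so $\eta^{(k)}$ is integrable against each white-noise martingale measure $M_{B^{(l)}}$ in the $\bfL^2$-sense recalled in \cref{app:random_measures}. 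Since $M_{B^{(l)}}$ has continuous paths, so does its stochastic integral $B^{\eta,(k,l)}$; moreover $B_0^{\eta,(k,l)}=0$ and it is a martingale.

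Next I would compute the quadratic covariations. For two coordinates associated to the \emph{same} white-noise martingale measure $M_{B^{(l)}}$, the orthogonal martingale measure integration theory (see \cref{app:random_measures}) yields
\begin{equation*}
[B^{\eta,(k,l)}, B^{\eta,(k',l)}]_t = \int_0^t\nq\int_{[0,1]^d}\eta^{(k)}_s(u)\eta^{(k')}_s(u)\,\od u\,\od s = \delta_{k,k'}\,t,
\end{equation*}
using the off-diagonal entries of the normalization hypothesis for $k\neq k'$. For two coordinates tied to \emph{different} martingale measures $M_{B^{(l)}}$ and $M_{B^{(l')}}$ with $l\neq l'$, I would invoke their mutual independence (the $(M_{B^{(l)}})_{l=1}^p$ are independent continuous white-noise martingale measures with deterministic intensity $M_D$); this forces the cross quadratic covariation between their stochastic integrals to vanish, so $[B^{\eta,(k,l)}, B^{\eta,(k',l')}]_t = 0$. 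Together these give $[B^{\eta,(k,l)}, B^{\eta,(k',l')}]_t = \delta_{k,k'}\delta_{l,l'}\,t$.

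With continuity, the martingale property, and the quadratic covariation structure of a standard Brownian motion in $\bR^{mp}$ established, L\'evy's characterization concludes that $B^\eta$ is an $mp$-dimensional standard Brownian motion. The main technical point to be careful about is the cross orthogonality across distinct $l,l'$: one must justify that independence of the driving white-noise martingale measures indeed yields vanishing cross quadratic covariation of the integral processes, which is a standard fact for independent continuous martingales but should be cited from the martingale-measure integration framework recalled in \cref{app:random_measures}.
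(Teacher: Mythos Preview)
Your proposal is correct and follows essentially the same route as the paper: verify that each coordinate is a continuous square-integrable $\bF$-martingale null at zero, compute the covariation structure (using the normalization hypothesis for coordinates tied to the same $M_{B^{(l)}}$ and the independence of the white-noise measures for $l\neq l'$), and conclude via L\'evy's characterization. The paper handles the cross-$l$ case exactly as you anticipate, by observing that the product of the two continuous integrals is itself an $(\bF,\bP)$-martingale so that the predictable covariation vanishes.
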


\begin{exam}\label{exmp:non_randomized}
Suppose that $\bfh$ is a classical, non-randomized control in feedback form, i.e., $\bfh$ does not depend on $u$. By  \cref{lem:Bm_from_white_noise} (with $\eta$ being the $\bR$-valued function which is constant 1),
$$
B_t^{\bf 1} =\bb( \int_0^t \nq \int_{[0,1]^d} \, M_{B^{(1)}}(\od s,\od u),\ldots, \int_0^t \nq \int_{[0,1]^d} \, M_{B^{(p)}}(\od s,\od u) \bb)^\tran
$$
is a $p$-dimensional Brownian motion. Moreover,
$$
N^{\bf 1}(\od t,\od z)= \int_{[0,1]^d} M_J(\od t,\od z, \od u)
$$
is a Poisson random measure  independent of $B^\eta$ with intensity $\nu_t(\od z) \od t$. Then, SDE \eqref{eq:SDE-random_measure_limit} can be re-written as
\begin{align*}
	X^{\bfh}_t  = x &+ \int_{0}^t  b(s, X^{\bfh}_{s-}, {\bfh}(s,X^{\bfh}_{s-} )) \od s\notag + \int_0^t a(s, X^{\bfh}_{s-},  {\bfh}(s,X^{\bfh}_{s-})) \od B^{\bf 1}_s  \\ 
	&+  	\int_{(0,t]\times \{ 0<|z|\le \mathfrak{r}\}} \gamI(s, X^{\bfh}_{s-},  {\bfh}(s,X^{\bfh}_{s-}), z)\tilde N^{\bf 1}(\od s,\od z,\od u) \\ &+  	\int_{(0,t]\times \{ |z|> \mathfrak{r}\}} \gamII(s, X^{\bfh}_{s-},  {\bfh}(s,X^{\bfh}_{s-}),z) N^{\bf 1}(\od s,\od z,\od u),
\end{align*}
i.e., we recover the dynamics \eqref{eq:SDE-classical}, as it should be.
\end{exam}

\begin{exam}\label{exmp:linear_control}
We now assume the drift coefficient $b$ and the diffusion coefficient $a$ are affine-linear in the control, i.e.,
$$
a(t,x,y)=a_0(t,x)+\sum_{j=1}^d y^{(j)}a_j(t,x),\quad b(t,x,y)=b_0(t,x)+\sum_{j=1}^d y^{(j)}b_j(t,x)
$$
for measurable functions $a_j \colon [0,T] \times \bR^m \rightarrow \bR^{m\times p}$ and $b_j \colon [0,T] \times \bR^m \rightarrow \bR^{m}$. The randomized control is given in terms of the measurable function $\bfh \colon [0,T] \times \bR^m  \times [0,1]^d \rightarrow \bR^{d}$. We assume that the coefficients are sufficiently regular to guarantee that a solution $X^{\bfh}$ to \eqref{eq:SDE-random_measure_limit} exists. Supposing that  $\bfh$ is square integrable with respect to the uniform distribution in the $u$-variable, we then consider the mean vector and covariance matrix 
$$
\mu_{\bfh}(t,x)=\int_{[0,1]^d} \bfh(t,x,u)\od u,\quad \Theta_{\bfh}(t,x)= \int_{[0,1]^d} (\bfh(t,x,u)-\mu_{\bfh}(t,x))(\bfh(t,x,u)-\mu_{\bfh}(t,x))^\tran \od u
$$
as a function of $(t,x)$. Assuming that $\Theta_{\bfh}(t,x)$ is  positive definite for every $(t,x)\in [0,T] \times \bR^m$, we write $\vartheta_{\bfh}(t,x)$ for  the positive definite matrix root of $\Theta_{\bfh}(t,x)$ and define
$$
\eta_{\bfh} \colon [0,T]\times \bR^m\times[0,1]^d\rightarrow \bR^d, (t,x,u)\mapsto \vartheta_{\bfh}(t,x)^{-1}(\bfh(t,x,u)-\mu_{\bfh}(t,x)).
$$
Note that for every $(t,x)\in [0,T]\times \bR^m$
$$
\int_{[0,1]^d} \eta_{\bfh}(t,x,u) \od u=0,\quad \int_{[0,1]^d} \eta_{\bfh}\eta_{\bfh}^\tran(t,x,u) \od u=I_d.
$$
Thus, the $\bR^{d+1}$-valued random field
$$
\eta_t(u)=(\eta^{(1)}_{\bfh}(t,X^{\bfh}_{t-},u),\ldots, \eta^{(d)}_{\bfh}(t,X^{\bfh}_{t-},u),1)^\tran
$$
satisfies the assumptions of \cref{lem:Bm_from_white_noise} and we denote the corresponding Brownian motion by $B^\eta=(B^{\eta,(i,l)})_{i=1,\ldots,d+1,\;l=1,\ldots,p}$. Then, the white noise measures can be replaced by the $(d+1)p$-dimensional Brownian motion $B^\eta$ and \eqref{eq:SDE-random_measure_limit} becomes
\begin{align*}
	 X^{\bfh}_t  = x &+ \int_{0}^t\bb( b_0(s, X^{\bfh}_{s-})+\sum_{j=1}^d   b_j(s, X^{\bfh}_{s-})\mu_{\bfh}^{(j)}(s,X^{\bfh}_{s-}) \bb)\od s\\ &+\sum_{l=1}^p \int_{0}^t\bb( a^{(\cdot,l)}_0(s, X^{\bfh}_{s-})+\sum_{j=1}^d   a^{(\cdot,l)}_j(s, X^{\bfh}_{s-})\mu_{\bfh}^{(j)}(s,X^{\bfh}_{s-}) \bb)\od B^{\eta,(d+1,l)}_s \\ &+\sum_{l=1}^p\sum_{i=1}^d \int_0^t  \bb(\sum_{j=1}^d  a^{(\cdot,l)}_j(s, X^{\bfh}_{s-}) \vartheta_{\bfh}^{(j,i)}(s,X^{\bfh}_{s-})\bb)\od B^{\eta,(i,l)}_s   \notag\\ 
	 &+  	\int_{(0,t]\times \{ 0<|z|\le \mathfrak{r}\} \times[0,1]^d} \gamI(s, X^{\bfh}_{s-},  {\bfh}(s,X^{\bfh}_{s-},u ), z)\tilde M_{J}(\od s,\od z,\od u) \notag \\ 
	 &+  	\int_{(0,t]\times \{ |z|> \mathfrak{r}\} \times[0,1]^d} \gamII(s, X^{\bfh}_{s-},  {\bfh}(s,X^{\bfh}_{s-},u ),z) M_{J}(\od s,\od z,\od u).
\end{align*}
This example extends the analogous SDE formulation for entropy-regularized mean-variance portfolio optimization with jumps derived in \cite{BN23}. Note, however,  that the white noise measure approach clarifies that (and how exactly) the driving Brownian motion depends on the choice of the randomized control ${\bfh}$.
\end{exam}

\subsection{Comparison to the exploratory SDE of \cite{WZZ20}}\label{sec:exploratory}

In this subsection, we briefly compare the grid-sampling limit SDE \eqref{eq:SDE-random_measure_limit} to the exploratory SDE introduced in \cite{WZZ20}. In order to keep the notation simple, we confine ourselves to the one-dimensional case ($m=p=d=1$) without jumps $\gamI=0$, compare \cite{WZZ20}. We note, however, that the multivariate case of the exploratory SDE is covered in \cite{JZ22} and, recently, a setting with jumps has been developed in  \cite{GLZ24}. In any of these cases, the derivation of the exploratory SDE relies on a heuristic law of large number argument to extract the semimartingale characteristics when averaging over independent executions of a relaxed control.

Given a relaxed control  ${h} \colon [0,T]\times \bR \to \mathcal{P}r(\cB(\bR))$ with Lebesgue density $\dot {h}(t,x,\cdot)$, the exploratory SDE takes the form
\begin{equation*}
	\tilde X^{h}_t=	x+\int_0^t \nq \int_{\bR} b(s,\tilde X^{h}_s,y)\dot {h}(s,\tilde X^{h}_s,y) \od y\od s+ \int_0^t \sqrt{\int_{\bR} a(s,\tilde X^{h}_s,y)^2\dot {h}(s,\tilde X^{h}_s,y) \od y}\,\od W_s
\end{equation*}
for some 1-dimensional Brownian motion $W$. Lemma 2 in \cite{JZ22} states sufficient conditions on $b$, $a$, and $\dot h$ for existence and uniqueness of a strong solution. Note that the law of $\tilde X^h$ then solves the martingale problem for the operator
 	\begin{equation}\label{eq:generator_simplified}
 (\cL_{h} f)(t,x):=\int_{\bR} \bb(\frac{1}{2} a(t,x,y)^2 f''(x)+  b(t,x,y) f'(x)\bb) \dot {h}(t,x,y)\od y.
 \end{equation}
We now assume that ${\bfh}$ is a randomized control, which executes $h$, and that the assumptions of \cref{rem:no_jump}(1) are satisfied. By a change of variables, the law of the unique solution $X^{\bfh}$ to the grid-sampling limit SDE solves the martingale problem for the same operator $\cL_{h}$ and, by uniqueness of the martingale problem under the Lipschitz assumptions, $\tilde X^h$ and $X^{\bfh}$ have the same probability law. Hence, in a stochastic control framework (e.g., to compute the expected cost of a given relaxed/randomized control pair $h$, ${\bfh}$ or for the derivation of an HJB equation), the grid-sampling limit SDE $X^{\bfh}$ and the exploratory SDE $\tilde X^h$ will lead to the same result -- and it is a matter of taste which one to use. In the first SDE the white noise martingale measure comes up, while,  in the second SDE, one has to deal with the square-root in the diffusion coefficient, compare the Remarks in \cite[pp. 350--351]{KD01}.

However, if one considers several controls at the same time, the joint distribution of $(\tilde X^{h_1}, \tilde X^{h_2})$ and $(X^{\bfh_1}, X^{\bfh_2})$ may differ, as illustrated by the following simple example.
\begin{exam}\label{exmp:two_controls}
	Suppose $T=1$, $b=0$ and $a(t,x,u)=u$. We apply the randomized controls ${\bfh}_j(t,x,u)=\mu_j+\sigma_j \Phi^{-1}(u)$, ($\mu_j\in \bR$, $\sigma_j>0$, $j=1,2$), which execute a Gaussian law $h_j(t,x)$ with mean $\mu_j$ and variance $\sigma_j^2$ independent of the time and state of the system. For a fixed sampling partition $\Pi$, the predictable covariation of the model dynamics along the $\xi^\Pi$-randomized controls satisfies
	$$
	\langle X^{\Pi,\bfh_1}, X^{\Pi,\bfh_2} \rangle_1=\sum_{i=1}^n (t_i-t_{i-1}) (\mu_1+\sigma_1 \Phi^{-1}(\xi^\Pi_{t_i}))(\mu_2+\sigma_2 \Phi^{-1}(\xi^\Pi_{t_i}))
	$$  
	If, e.g., $\Pi_n$ is the equidistant partition of the unit interval into $n$ subintervals, then a straightforward application of the strong law of large numbers implies, a.s.,
	$$
	\langle X^{\Pi_n,\bfh_1}, X^{\Pi_n,\bfh_2} \rangle_1 \rightarrow \bE\left[(\mu_1+\sigma_1 \Phi^{-1}(\xi^\Pi_{t_1}))(\mu_2+\sigma_2 \Phi^{-1}(\xi^\Pi_{t_1}))\right]=\mu_1\mu_2+\sigma_1\sigma_2.
	$$
	This limit coincides with the predictable covariation of the grid-sampling limit SDEs, because, by Proposition I-6(2) in \cite{KM90},
	\begin{align*}
	\langle X^{\bfh_1}, X^{\bfh_2} \rangle_1 & = \bb\langle \int_{(0,\cdot]\times[0,1]^d} (\mu_1+\sigma_1  \Phi^{-1}(u)) M_{B}(\od s,\od u),   \int_{(0,\cdot]\times[0,1]^d} (\mu_2+\sigma_2  \Phi^{-1}(u)) M_{B}(\od s,\od u) \bb\rangle_1 \\   
	& = \int_{(0,1]\times[0,1]^d}  (\mu_1+\sigma_1  \Phi^{-1}(u)) (\mu_1+\sigma_1  \Phi^{-1}(u)) \od s \od u=\mu_1\mu_2+\sigma_1\sigma_2.
	\end{align*}
However, the predictable covariation of the corresponding exploratory SDE is
\begin{align*}
&	\langle  \tilde X^{h_1}, \tilde X^{h_2} \rangle_1\\
&=\BB\langle  \int_0^\cdot \sqrt{\int_{\bR} y^2 \frac{1}{\sqrt{2\pi \sigma_1^2}} \e^{-(y-\mu_1)^2/(2\sigma_1^2)}\od y}\,\od W_s, \int_0^\cdot \sqrt{\int_{\bR} y^2 \frac{1}{\sqrt{2\pi \sigma_2^2}} \e^{-(y-\mu_2)^2/(2\sigma_2^2)}\od y}\,\od W_s\BB\rangle_1 \\
 &= \sqrt{(\mu_1^2+\sigma_1^2)(\mu_2^2+\sigma_2^2)}.
	\end{align*}
\end{exam}

Let us summarize: By the considerations at the beginning of this subsection $\tilde X^h$ and $X^{\bfh}$ have the same probability law, if ${\bfh}$ executes $h$. The SDEs governing these two processes cannot be interpreted as dynamics of the system along a $\xi$-randomized control. One way to justify these SDEs is to view them as the limit dynamics of the grid-sampling SDE, which has a sound interpretation in terms of $\xi^\Pi$-randomized controls. By \cref{rem:no_jump}(2), we observe that the law of  $X^{\Pi_n,\bfh}$ converges to the law of $\tilde X^h$ under at most technical conditions for one fixed control pair $h$, ${\bfh}$. However, as illustrated by  \cref{exmp:two_controls}, one cannot hope that the joint convergence result
to the grid-sampling limit SDEs indicated in \cref{rem:no_jump}(2) carries over
 to the exploratory SDE.  We will illustrate in the next subsection that this difference can be essential for the justification of learning algorithms.

\subsection{Outlook: Towards learning}\label{sec:learning}

In this subsection, we exemplify how the algorithms of the first-optimize-then-discretize approach of \cite{JZ22a,JZ22, JZ23} can be justified by applying the grid-sampling limit SDE \eqref{eq:SDE-random_measure_limit} instead of the sample SDE of \cite{JZ22, JZ23}. In this way we can ensure that the derivation bypasses any potential problems related to idealized sampling.

For sake of illustration, we will here only consider the problem of policy evaluation of a fixed randomized control $\bfh$ and restrict ourselves to the no-jump case in dimension one ($m=d=p=1$). Assuming that the Lipschitz conditions in \cref{rem:no_jump} are satisfied, the unique solution of the  grid-sampling limit SDE takes the form
\begin{eqnarray*}
	X^{\bfh}_t =x+ \int_{0}^t \nq \int_0^1 b(s, X^{\bfh}_{s}, {\bfh}(s,X^{\bfh}_{s},u )) \od u \od s +\int_{(0,t]\times[0,1]}  a(s, X^{\bfh}_{s},  {\bfh}(s,X^{\bfh}_{s},u )) M_{B}(\od s,\od u). 
\end{eqnarray*}
We suppose that the law of $\bfh(t,x,\eta)$ (where $\eta$ is a uniform random variable on $[0,1]$) is absolutely continuous with respect to the Lebesgue measure with density $\dot h(t,x,\cdot)$ for every $(t,x)\in [0,T]\times \bR$ and that its Shannon entropy
$$
-\int_{\bR} \dot h(t,x,y) \log\dot h(t,x,y)\od y
$$ 
exists in $\bR$ and is measurable and bounded as a function in $(t,x)$. We consider the problem of evaluating the expected terminal cost with a running entropy-regularization term, which rewards exploration, as suggested in \cite{WZZ20}. The corresponding cost process is given by
$$
\cJ^{\bfh}_t=\bE\bb[g(X^{\bfh}_T) +\lambda \int_t^T \nq \int_{\bR} \dot h(s,X^{\bfh}_s,y)\log\dot h(s,X^{\bfh}_s,y)\od y \od s \,\bb|\, \cF_t \bb]
$$
for some fixed temperature parameter $\lambda>0$. We here assume, for the sake of simplicity, that the terminal cost function $g$ is bounded, and, consequently, the process $\cJ^{\bfh}$ is bounded as well. We say that a measurable function $J^\bfh \colon [0,T] \times \bR \rightarrow \bR$ is a version of the \emph{value function} of ${\bfh}$, if 
$$
J^\bfh(t,X^{\bfh}_t)=\cJ^{\bfh}_t\quad \bP\textnormal{-a.s.,} \quad t \in [0, T].
$$
The aim of policy evaluation is to learn the value function  $J^\bfh$ from observations of the system $X^{\xi,\bfh}$, when feeding in the $\xi$-randomized policy $\bfh(t,x,\xi_t)$ for some randomization process $\xi$, without knowing the true model parameters $b,a$. Recall that in the simplified setting of this subsection
\begin{equation}\label{eq:randomized_nojumps}
	\od  X^{\xi,\bfh}_t  = b(t,  X^{\xi,\bfh}_{t}, {\bfh}(t, X^{\xi,\bfh}_{t},\xi_t )) \od t + a(t,  X^{\xi,\bfh}_{t},  {\bfh}(t, X^{\xi,\bfh}_{t},\xi_t )) \od B_t,\quad X^{\xi,\bfh}_0=x.
\end{equation}
The algorithms for policy evaluation derived in \cite{JZ22a,JZ22} rely on the martingale characterization of the value function $J^\bfh$, which can be formulated for the grid-sampling limit SDE in the following way (see  \cref{app:proof-value} for the routine proof).
\begin{prop}\label{prop:value}
 \begin{enumerate}[\quad \rm(1)]
	\item  	Suppose that the following partial differential equation has a bounded solution $J\in C^{1,2}([0,T]\times \bR)$:
	$$
	\frac{\partial J}{\partial t}(t,x)+(\cL_{h} J(t,\cdot))(t,x)+\lambda \int_{\bR} \dot h(t,x,y) \log\dot h(t,x,y)\od y=0,\quad (t,x)\in[0,T)\times \bR,
	$$
	with the terminal condition $J(T,\cdot)=g$ (where the differential operator $\cL_{h}$ is defined in \eqref{eq:generator_simplified}). Then, $J$ is a version of the value function of ${\bfh}$.
	\item Assume that $\tilde J \colon [0,T]\times\bR\rightarrow \bR$ is measurable with $\tilde J(T,\cdot)=g$. Then, $\tilde J$ is a version of the value function of ${\bfh}$, if and only if
	$$
	\tilde J(t, 	X^{\bfh}_t)+ \lambda \int_0^t \nq \int_{\bR} \dot h(s,X^{\bfh}_s,y)\log\dot h(s,X^{\bfh}_s,y) \od y \od s, \quad 0\leq t \leq T,
	$$
	is an $\bF$-martingale.
\end{enumerate}
\end{prop}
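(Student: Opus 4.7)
For part~(1), the plan is to apply It\^o's formula to the semimartingale $J(t, X^{\bfh}_t)$, where $X^{\bfh}$ solves the simplified (one-dimensional, no-jump) grid-sampling limit SDE. Since $M_B$ has deterministic intensity $M_D$, the stochastic integral $\int_{(0,\cdot]\times [0,1]} \phi(s,u)\, M_B(\od s,\od u)$ has predictable quadratic variation $\int_0^t\int_0^1 \phi(s,u)^2\, \od u\, \od s$ by Proposition~I-6(2) in \cite{KM90}. Combining this with the $C^{1,2}$-It\^o rule and with the change of variables $\int_0^1 F(\bfh(s,x,u))\, \od u = \int_\bR F(y)\,\dot h(s,x,y)\, \od y$ (valid because $\bfh$ executes $h$), I obtain
\begin{align*}
J(T,X^{\bfh}_T) - J(t,X^{\bfh}_t) = \int_t^T (\partial_s J + \cL_h J)(s,X^{\bfh}_s)\, \od s + (M_T - M_t),
\end{align*}
with $M_r := \int_{(0,r]\times[0,1]} \partial_x J(s,X^{\bfh}_s)\, a(s,X^{\bfh}_s,\bfh(s,X^{\bfh}_s,u))\, M_B(\od s,\od u)$ an $\bF$-local martingale. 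Using the PDE and the terminal condition $J(T,\cdot) = g$ yields
\begin{align*}
J(t,X^{\bfh}_t) = g(X^{\bfh}_T) + \lambda\int_t^T \nq \int_\bR \dot h(s,X^{\bfh}_s,y)\log\dot h(s,X^{\bfh}_s,y)\, \od y\, \od s - (M_T - M_t).
\end{align*}
Boundedness of $J$ together with boundedness of the Shannon entropy integrand forces $M$ to be a bounded local martingale, hence a true $\bF$-martingale; taking $\bE[\,\cdot\mid \cF_t]$ then identifies $J(t,X^{\bfh}_t) = \cJ^{\bfh}_t$.

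For part~(2), the equivalence follows by a routine conditional-expectation argument. Setting
\begin{align*}
Y_t := \tilde J(t,X^{\bfh}_t) + \lambda\int_0^t \nq \int_\bR \dot h(s,X^{\bfh}_s,y)\log\dot h(s,X^{\bfh}_s,y)\, \od y\, \od s,
\end{align*}
if $\tilde J$ is a version of the value function then $\tilde J(t,X^{\bfh}_t) = \cJ^{\bfh}_t$, combined with the defining identity of $\cJ^{\bfh}_t$, gives
\begin{align*}
Y_t = \bE\bb[\, g(X^{\bfh}_T) + \lambda\int_0^T \nq \int_\bR \dot h(s,X^{\bfh}_s,y)\log\dot h(s,X^{\bfh}_s,y)\, \od y\, \od s \,\bb|\, \cF_t\bb],
\end{align*}
which is a uniformly integrable $\bF$-martingale by the tower property (the enclosed random variable is bounded since $g$ and the entropy integrand are bounded). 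Conversely, if $Y$ is an $\bF$-martingale with $\tilde J(T,\cdot) = g$, then $Y_t = \bE[Y_T\mid \cF_t]$ and rearranging delivers $\tilde J(t,X^{\bfh}_t) = \cJ^{\bfh}_t$.

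The only genuinely non-routine step is the It\^o formula in part~(1) for an SDE driven by the white-noise martingale measure $M_B$; the crucial input is the quadratic-variation formula for stochastic integrals against $M_B$ cited above. After this, boundedness of $J$ alone --- without any control on $\partial_x J$ --- suffices to promote the local martingale $M$ to a true martingale, because $M_t$ equals the difference between the bounded process $J(t,X^{\bfh}_t)$ and a bounded absolutely continuous process, bypassing any square-integrability hypothesis on $\partial_x J\cdot a$.
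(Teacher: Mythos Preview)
Your proof is correct and follows essentially the same route as the paper's. The only cosmetic difference is that you apply It\^o's formula directly to the white-noise SDE (identifying the stochastic-integral term $M$ explicitly), whereas the paper invokes the martingale-problem characterization of the law of $X^{\bfh}$ from \cref{rem:no_jump}(1) to obtain the same local martingale; the subsequent boundedness argument and the conditional-expectation manipulations in both parts are identical.
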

We now provide an alternative derivation of the offline variant of the continuous-time TD(0)-algorithm in \cite{JZ22a,JZ22}: To this end, fix a parametric class of functions $\{J_\vartheta:\vartheta \in \Theta\}$ for some open parameter set $\Theta \subseteq \bR^L$. We will implicitly assume that the function
$$
{\bf J}_{\Theta} \colon [0,T]\times \bR\times \Theta\rightarrow \bR,\quad (t,x,\vartheta)\mapsto J_\vartheta(t,x)
$$
satisfies sufficient smoothness and boundedness assumptions to justify the manipulations below. Moreover, we postulate that $J_\vartheta(T,\cdot)=g$ for every $\vartheta\in \Theta$.  We aim at finding a parameter $\vartheta^*\in \Theta$ such that  $J_{\vartheta^*}$ is a good approximation to the value function $J^{\bfh}$ of the randomized control $\bfh$. Since integrals of sufficiently good integrands with respect to  a martingale have zero expectation, the martingale characterization of the value function in \cref{prop:value} motivates to search for a parameter $\vartheta^*$ such that
$$
\bE\bb[\int_0^T \nabla_\vartheta {\bf J}_{\Theta}(s, X^{\bfh}_s,\vartheta^*)\bb(\od J_{\vartheta^*}(s, X^{\bfh}_s,\vartheta^*)+\lambda  \int_{\bR} \dot h(s,X^{\bfh}_s,y) \log\dot h(s,X^{\bfh}_s,y)
 \od y\od s\bb) \bb] =0,
$$
compare \cite{JZ22a}.
Here, $\nabla_\vartheta$ stands for the gradient in the $\vartheta$-variable. Then, stochastic approximation \cite{RM51} suggests to consider the update step
\begin{equation}\label{eq:TD_update}
	\vartheta \leftarrow \alpha \int_0^T \nabla_\vartheta {\bf J}_{\Theta}(s, X^{\bfh}_s,\vartheta)\bb(\od J_{\vartheta}(s, X^{\bfh}_s)+\lambda  \int_{\bR} \dot h(s,X^{\bfh}_s,y) \log\dot h(s,X^{\bfh}_s,y)
	 \od y\od s\bb)
\end{equation}
for some step-size $\alpha>0$. 
Up to here, the derivation follows exactly the one in \cite{JZ22a,JZ22} with the grid-sampling limit SDE in place of the sample SDE of \cite{JZ22}. Note that, although the unknown coefficients $b$ and $a$ do not show up in \eqref{eq:TD_update}, its implementation is infeasible, because $X^{\bfh}$ is not observable (it is not the  response of the system to a $\xi$-randomized control). We view \eqref{eq:TD_update} as an idealized continuous-limit update step, which will be discretized next. By It\^o's formula, recalling that $\cL_h$ in \eqref{eq:generator_simplified} is the infinitesimal generator of $X^{\bfh}$, we obtain
\begin{align}\label{eq:TD_derivation}
& \int_0^T \nabla_\vartheta {\bf J}_{\Theta}(s, X^{\bfh}_s,\vartheta)\bb(\od J_{\vartheta}(s, X^{\bfh}_s)+\lambda  \int_{\bR} \dot h(s,X^{\bfh}_s,y) \log\dot h(s,X^{\bfh}_s,y)
 \od y \od s \bb)\notag\\
 &=\int_0^T  \nabla_\vartheta {\bf J}_{\Theta}(s, X^{\bfh}_s,\vartheta) \frac{\partial J_{\vartheta}}{\partial t}(s, X^{\bfh}_s)\od s\notag \\ 
 & \quad + \int_0^T\nabla_\vartheta {\bf J}_{\Theta}(s, X^{\bfh}_s,\vartheta)\bb((\cL_h J_{\vartheta}(s,\cdot))(s, X^{\bfh}_s)+\lambda  \int_{\bR} \dot h(s,X^{\bfh}_s,y) \log\dot h(s,X^{\bfh}_s,y)
 \od y\bb)\od s\notag \\
 & \quad + \int_{(0, T]\times[0,1]} \nabla_\vartheta {\bf J}_{\Theta}(s, X^{\bfh}_s,\vartheta) a(s, X^{\bfh}_{s},  {\bfh}(s,X^{\bfh}_{s},u )) \frac{\partial J_{\vartheta}}{\partial x}(s, X^{\bfh}_s) M_{B}(\od s,\od u).
\end{align}
By change of variables and applying the notation introduced in \cref{rem:no_jump}(1), the second integral on the right-hand side of \eqref{eq:TD_derivation} becomes
\begin{align*}
	\int_0^T \nq \int_0^1 \nabla_\vartheta {\bf J}_{\Theta}(s, X^{\bfh}_s,\vartheta)
	&\bb(\frac{1}{2} a_{\bfh}(s, X^{\bfh}_s,u)^2 \frac{\partial^2 J_{\vartheta}}{\partial x^2}(s, X^{\bfh}_s)+  b_{\bfh}(s, X^{\bfh}_s,u) \frac{\partial J_{\vartheta}}{\partial x}(s, X^{\bfh}_s)\\ &\quad+\lambda  \log\dot h(s,X^{\bfh}_s,{\bfh}(s, X^{\bfh}_s,u))
	\bb) \od u \od s,
\end{align*}
which, in fact, is an integral with respect to the limit drift measure $M_D$. Thus, the joint convergence in \cref{rem:no_jump}(2) suggests that
$$
\int_0^T \nabla_\vartheta {\bf J}_{\Theta}(s, X^{\bfh}_s,\vartheta)\bb(\od J_{\vartheta}(s, X^{\bfh}_s)+\lambda  \int_{\bR}\dot h(s,X^{\bfh}_s,y)  \log\dot h(s,X^{\bfh}_s,y)
\od u\od s \bb)
$$
can be approximated in law by
\begin{align*}
&\int_{(0,T]\times [0,1]}  \nabla_\vartheta {\bf J}_{\Theta}(s, X^{\Pi,\bfh}_s,\vartheta)\bb(\frac{\partial J_{\vartheta}}{\partial t}(s, X^{\Pi,\bfh}_s)+\lambda  \log\dot h(s,X^{\Pi,\bfh}_s,{\bfh}(s, X^{\Pi,\bfh}_s,u)) \notag \\
&\hspace{2cm}+\frac{1}{2}a_{\bfh}(s, X^{\Pi,\bfh}_s,u)^2 \frac{\partial^2 J_{\vartheta}}{\partial x^2}(s, X^{\Pi,\bfh}_s)+  b_{\bfh}(s, X^{\Pi,\bfh}_s,u) \frac{\partial J_{\vartheta}}{\partial x}(s, X^{\Pi,\bfh}_s)
\bb) M_D^\Pi(\od s,\od u) \notag \\ 
& + \int_{(0,T]\times[0,1]} \nabla_\vartheta {\bf J}_{\Theta}(s, X^{\Pi,\bfh}_s,\vartheta) a_{\bfh}(s, X^{\Pi,\bfh}_{s},  u) \frac{\partial J_{\vartheta}}{\partial x}(s, X^{\Pi,\bfh}_s) M^\Pi_{B}(\od s,\od u)
\end{align*}
for a sufficiently fine sampling grid $\Pi$, where $X^{\Pi,\bfh}$ solves the SDE \eqref{eq:randomized_nojumps} with $\xi=\xi^\Pi$. In view of \cref{lem:drift_Pi,lem:Bm_Pi}, and applying It\^o's formula once more, this expression equals
$$
\sum_{i=1}^n \int_{t_{i-1}}^{t_i} \nabla_\vartheta {\bf J}_{\Theta}(s, X^{\Pi,\bfh}_s,\vartheta)\B(\od J_{\vartheta}(s, X^{\Pi,\bfh}_s)+\lambda   \log\dot h(s,X^{\Pi,\bfh}_s,{\bfh}(s, X^{\Pi,\bfh}_s,\xi^{\Pi}_{t_i}))
\od s \B),
$$
leading to the modified update step
\begin{equation}\label{eq:TD_update_Pi}
	\vartheta \leftarrow \alpha\sum_{i=1}^n \int_{t_{i-1}}^{t_i} \nabla_\vartheta {\bf J}_{\Theta}(s, X^{\Pi,\bfh}_s,\vartheta)\B(\od J_{\vartheta}(s, X^{\Pi,\bfh}_s)+\lambda   \log\dot h(s,X^{\Pi,\bfh}_s,{\bfh}(s, X^{\Pi,\bfh}_s,\xi^{\Pi}_{t_i}))
	\od s \B). 
\end{equation}
Here, the $t_i$'s are, of course, the grid points of the sampling grid $\Pi$. We emphasize that the update step \eqref{eq:TD_update_Pi} is independent of the unknown parameters $\mu$ and $a$ and only depends on observables, namely the grid-sampling randomization process $\xi^\Pi$ and the response $X^{\Pi,\bfh}$ of the system to the $\xi^\Pi$-randomized policy $\bfh(t,x,\xi^\Pi_t)$. Note that the update-step \eqref{eq:TD_update_Pi} is still formulated in continuous time. For the actual implementation, it is natural to consider the time-discretization relative to $\Pi$ given by
\begin{align*}
	\vartheta \leftarrow \alpha\sum_{i=1}^n  \nabla_\vartheta {\bf J}_{\Theta}(t_{i-1}, X^{\Pi,\bfh,E}_{t_{i-1}},\vartheta)&\B[J_{\vartheta}(t_{i}, X^{\Pi,\bfh,E}_{t_{i}}) -J_{\vartheta}(t_{i-1}, X^{\Pi,\bfh,E}_{t_{i-1}})\\ &\quad +\lambda  ({t_{i}}-{t_{i-1}})  \log\dot h({t_{i-1}},X^{\Pi,\bfh,E}_{t_{i-1}},{\bfh}(s, X^{\Pi,\bfh,E}_{t_{i-1}},\xi^{\Pi}_{t_i}))
\B],
\end{align*}
where $X^{\Pi,\bfh,E}$ is the Euler approximation to $X^{\Pi,\bfh}$ relative to the grid $\Pi$. This expression coincides with the TD(0)-update step for policy evaluation in \cite{JZ22}, see, e.g., lines -12 and -8 in their Algorithm~4. Hence, we have provided a new justification of the continuous-time TD(0)-algorithm for policy evaluation, which  avoids making use of idealized sampling.


\section{Proof of \cref{thm:limit}} \label{sec:limit_proof}

\subsection{Preliminaries}\
To avoid double-indexing, we assume that $\Pi_n$ partitions $[0,T]$ into $n$ subintervals and write $0  = t^n_0 <\cdots < t^n_n =T$ for the grid points of $\Pi_n$. We emphasize that the same proof also works, even if $\Pi_n$ decomposes $[0,T]$ into $k(n)\in \bN$, which is not necessarily equal to $n$, subintervals.  Denote 
\begin{align*}
	\bfU: = [0, T] \times [0, 1]^d, \quad \bfV: = [0, T] \times \bR^q_0 \times [0, 1]^d.
\end{align*}
The assumptions imply that $f_l \in B_b(\bfU; \bR^m)$ for $l = 0, \ldots, p$ and $f_l \in B_b(\bfV; \bR^m)$ for $ l= p+1, p+2$. Moreover, by \cref{rema:levy_integration},
\begin{align}\label{assumption:Levy-measure}
	\int_0^T \nq \int_{\bR^q_0} ( |z|^2 \1_{\{0 < |z| \le R\}}  + \1_{\{|z| > R\}}) \nu_s(\od z)  \od s < \infty, \quad \forall R \in (0, \infty) \cup\{\frr\}.
\end{align}
In view of \cref{lem:drift_Pi,lem:Bm_Pi,lem:jumps_Pi},
we have the representation
\begin{align}\label{eq:decomposition:discrete-time-integral}
	\cX^n_t & = \sum_{i=1}^n  \bb[ \int_0^t f_0 (s, \xi^n_i) \1_{(t^n_{i-1}, t^n_i]}(s)  \od s   + \sum_{l=1}^p \int_0^t f_l (s, \xi^n_i) \1_{(t^n_{i-1}, t^n_i]}(s)  \od B^{(l)}_s \notag\\
	& \qquad + \int_0^t \nq \int_{0 < |z| \le R} f_{p+1}(s, z, \xi^n_i) \1_{(t^n_{i-1}, t^n_i]}(s) |z| \tilde N(\od s, \od z) \notag \\
	& \qquad  + \int_0^t \nq \int_{|z| > R} f_{p+2}(s, z, \xi^n_i) \1_{(t^n_{i-1}, t^n_i]}(s) N(\od s, \od z)\bb].
\end{align}

We will also consider the piecewise constant interpolation of $\cX^n$ between the grid points of $\Pi_n$. Introducing the notation
	\begin{align*}
		\rho_n(t) : = \sup\{t^n_i : t^n_i \le t\}, \quad t \in [0, T],
	\end{align*}
	it can be written as $\cX^n_{\rho_n(t)}$, $t\in [0,T]$. 
	
	By Theorem 3.1 in \cite {Bi99}, it suffices to show that, as $n \to \infty$, 
\begin{align}\label{converge-prob-error}
	\tilde d^m_T(\cX^n,  \cX^n_{\rho_n}) \xrightarrow{\bPn} 0,
\end{align}
and
\begin{align}\label{eq:weak-convergence-rho}
	\cX^n_{\rho_n} \xrightarrow{\scrD_T} \cX,
\end{align}
where the metric  $\tilde d^m_T$, which is defined in  \cref{sec:Skorokhod-space}, induces the Skorokhod topology on the space $\bD_T(\bR^m)$ of c\`adl\`ag functions $F\colon [0,T]\rightarrow \bR^m$ and $\xrightarrow{\mathscr D_T}$ stands for convergence in distribution in the Skorokhod space.
	The proof of assertions \eqref{converge-prob-error} and \eqref{eq:weak-convergence-rho} will be provided in \cref{sec:prob_proof} and \cref{sec:dist_proof}, respectively.

\subsection{Proof of assertion (\ref{converge-prob-error})}\label{sec:prob_proof}
Let $\kappa \in (0, \infty) \cap (0, R]$ and let $\kappa = 0$ if $R = 0$. We define the process $\cX^{n, \kappa}$ by setting
\begin{align*}
	\cX^{n, \kappa} : = \cX^n - \sum_{i=1}^n \int_0^{\cdot} \nq \int_{0 < |z| \le \kappa} f_{p+1}(s, z, \xi^n_i) \1_{(t^n_{i-1}, t^n_i]}(s) |z| \tilde N(\od s, \od z).
\end{align*}
By separating $\tilde N =  N - \nu$ on $[0, T] \times \{\kappa < |z| \le R\}$, which is possible as $\int_{0}^T \int_{\kappa < |z| \le R} |z| \nu_s(\od z) \od s < \infty$ and $f_{p+1}$ is bounded, and then rearranging terms we get
\begin{align*}
	\cX^{n, \kappa} & = \sum_{i=1}^n \int_0^{\cdot} f_0(s, \xi^n_i) \1_{(t^n_{i-1}, t^n_i]}(s) \od s   - \sum_{i=1}^n  \int_0^{\cdot} \nq \int_{\kappa < |z| \le R} f_{p+1}(s, z, \xi^n_i) \1_{(t^n_{i-1}, t^n_i]}(s) |z| \nu_s(\od z) \od s\\
	& \quad + \sum_{i=1}^n  \sum_{l=1}^p \int_0^{\cdot} f_l(s, \xi^n_i) \1_{(t^n_{i-1}, t^n_i]}(s) \od B^{(l)}_s \\
	& \quad + \sum_{i=1}^n \int_0^{\cdot}\nq \int_{|z| > \kappa} [f_{p+1}(s, z, \xi^n_i) |z| \1_{\{0 < |z| \le R\}} + f_{p+2}(s, z, \xi^n_i) \1_{\{|z| >R\}}] \1_{(t^n_{i-1}, t^n_i]}(s)  N(\od s, \od z)\\
	& = : (\cX^{n, \kappa}_D - \cX^{n, \kappa}_\nu + \cX^{n, \kappa}_B ) + \cX^{n, \kappa}_J\\
	& =: \cX^{n, \kappa}_C + \cX^{n, \kappa}_J.
\end{align*}
Using the triangle inequality we obtain
\begin{align}\label{eq:estimate-Skorokhod-Y-1}
	\tilde d^m_T(\cX^n,  \cX^n_{\rho_n}) & \le \tilde d^m_T(\cX^n, \cX^{n,\kappa}) + \tilde d^m_T( \cX^{n,\kappa},  \cX^{n,\kappa}_{\rho_n}) + \tilde d^m_T( \cX^{n,\kappa}_{\rho_n},  \cX^{n}_{\rho_n}) \notag \\
	& \le \sup_{t \in [0, T]} |\cX^n_t - \cX^{n, \kappa}_t| + \tilde d^m_T( \cX^{n,\kappa},  \cX^{n,\kappa}_{\rho_n}) + \sup_{t \in [0, T]} |\cX^{n,\kappa}_{\rho_n(t)} -  \cX^{n}_{\rho_n(t)}| \notag \\
	& \le 2 \sup_{t \in [0, T]} |\cX^n_t - \cX^{n, \kappa}_t| + \tilde d^m_T( \cX^{n,\kappa},  \cX^{n,\kappa}_{\rho_n}).
\end{align}
For $\ep >0$, since $\cX^n - \cX^{n, \kappa}$ is an $\bF^{\Pi_n}$-martingale, applying Doob's maximal inequality yields
\begin{align}\label{eq:estimate-Skorokhod-Y-2}
	\bPn \bb(\bb\{\sup_{t \in [0, T]} |\cX^n_t - \cX^{n, \kappa}_t| > \ep \bb\}\bb) & \le 4 \ep^{-2} \bEn\bb[\int_0^T \nq \int_{0 < |z| \le\kappa} \sum_{i=1}^n |f_{p+1}(s, z, \xi^n_i)|^2 \1_{(t^n_{i-1}, t^n_i]}(s) |z|^2 \nu_s(\od z) \od s \bb] \notag \\
	& \le 4 \ep^{-2} \|f_{p+1}\|_{B_b(\bfV; \bR^m)}^2 \int_0^T  \nq \int_{0 < |z| \le\kappa} |z|^2 \nu_s(\od z) \od s.
\end{align}
We now deal with the term $\tilde d^m_T( \cX^{n,\kappa},  \cX^{n,\kappa}_{\rho_n})$. Set $\tau^n_0 : = 0$ and
\begin{align*}
	\tau^n_i : = \inf\{t \in (t^n_{i-1}, t^n_i] : |\Delta L_t| > \kappa\} \wedge t^n_i, \quad i = 1, \ldots, n,
\end{align*}
with the convention $\inf \emptyset : = \infty$, and denote the events $A^{n, \kappa}_i$ by
\begin{align*}
	A^{n, \kappa}_i & : = \bb\{\int_{(t^n_{i-1}, t^n_i] \times \{|z| > \kappa\}} N(\od s, \od z) \le 1\bb\}, \quad i = 1, \ldots, n-1,\\
	A^{n, \kappa}_n & : = \bb\{\int_{(t^n_{n-1}, T] \times \{|z| > \kappa\}} N(\od s, \od z) = 0\bb\}.
\end{align*}
Then $t^n_{i-1} < \tau^n_i \le t^n_i$ on $A^{n, \kappa}_i$ and $\tau^n_n = T$ on $A^{n, \kappa}_n$. Now, for $\omega \in \cap_{i=1}^n A^{n, \kappa}_i$, we define the function $\lambda = \lambda_{\omega, n, \kappa} \colon [0, T] \to [0, T]$ which  piecewise linearly interpolates the points $(0,0), (\tau^n_1, t^n_1), \ldots$, $(\tau^n_{n-1}, t^n_{n-1})$, $(\tau^n_n, T)$. Namely,
\begin{align*}
	\lambda(t) = t^n_{i-1} + (t^n_i - t^n_{i-1})\frac{t - \tau^n_{i-1}}{\tau^n_i - \tau^n_{i-1}}, \quad t \in (\tau^n_{i-1}, \tau^n_i], \quad i = 1, \ldots, n.
\end{align*}
Then, $\lambda$ is a strictly increasing and continuous function with $\lambda(0) = 0$, $\lambda(T) = T$. It is clear that, for all $t \in (\tau^n_{i-1}, \tau^n_i]$, $i = 1, \ldots, n$,
\begin{align*}
	|\lambda (t) - t| \le \max\{t^n_{i-1} - \tau^n_{i-1}, \tau^n_i - t^n_{i-1}\} + (t^n_i - t^n_{i-1})\frac{t - \tau^n_{i-1}}{\tau^n_i - \tau^n_{i-1}} \le 2 |\Pi_n|.
\end{align*}
Hence, on $\cap_{i = 1}^n A^{n, \kappa}_i$ and for such a choice of $\lambda$ as above, it follows from the definition of $\tilde d^m_T$ and the triangle inequality that
\begin{align*}
	\tilde d^m_T(\cX^{n,\kappa},  \cX^{n,\kappa}_{\rho_n}) & \le \sup_{t \in [0, T]} |\lambda(t) - t| + \sup_{t \in [0, T]} |\cX^{n,\kappa}_t - \cX^{n,\kappa}_{\rho_n(\lambda(t))}|\\
	& \le 2 |\Pi_n| + \sup_{t \in [0, T]} |\cX^{n,\kappa}_{C, t} - \cX^{n,\kappa}_{C, \rho_n(\lambda(t))}|  + \sup_{t \in [0, T]} |\cX^{n,\kappa}_{J, t} - \cX^{n,\kappa}_{J, \rho_n(\lambda(t))}| \\
	& = 2 |\Pi_n| + \max_{1 \le i \le n} \; \sup_{t \in (t^n_{i-1}, t^n_i]} |\cX^{n,\kappa}_{C,t} - \cX^{n,\kappa}_{C, \rho_n(\lambda(t))}| + \max_{1 \le i \le n} \; \sup_{t \in [\tau^n_{i-1}, \tau^n_i)} |\cX^{n,\kappa}_{J,t} - \cX^{n,\kappa}_{J, \rho_n(\lambda(t))}|.
\end{align*}
Notice that $t^n_{i-1} \in [\tau^n_{i-1}, \tau^n_i)$, $\lambda(t) \in [t^n_{i-1}, t^n_i)$ for $t \in [\tau^n_{i-1}, \tau^n_i)$, and on the event $\cap_{i = 1}^n A^{n, \kappa}_i$, $\cX^{n,\kappa}_{J}$ is constant on $[\tau^n_{i-1}, \tau^n_i)$ as it does not have jumps on $(\tau^n_{i-1}, \tau^n_i)$, it thus implies that 
\begin{align*}
	\cX^{n, \kappa}_{J, t} = \cX^{n, \kappa}_{J, t^n_{i-1}} = \cX^{n, \kappa}_{J, \rho_n(\lambda(t))}, \quad t \in [\tau^n_{i-1}, \tau^n_i).
\end{align*}
Moreover, for $i = 1, \ldots, n$ and $t \in (t^n_{i-1}, t^n_i]$, we observe that 
\begin{align*}
	& \trm{for } t \in (t^n_{i-1}, \tau^n_i): \quad  t^n_{i-1} < \lambda(t) < t^n_i,\\
	& \trm{for } t \in [\tau^n_i, t^n_i]: \quad t^n_i \le \lambda(t) \le \lambda(t^n_i) \begin{cases}
		< \lambda(\tau^n_{i + 1}) = t^n_{i+1}  & \trm{if } i \le n-1 \\
		= t^n_i &  \trm{if } i = n,
	\end{cases} 
\end{align*}
which implies $\rho_n(\lambda(t)) \in \{t^n_{i-1}, t^n_i\}$ for $t \in (t^n_{i-1}, t^n_i]$. Summarizing those arguments, on $\cap_{i = 1}^n A^{n, \kappa}_i$ we have
\begin{align}\label{eq:Skorokhod-distance-estimate-kappa}
	\tilde d^m_T(\cX^{n,\kappa},  \cX^{n,\kappa}_{\rho_n}) & \le 2 |\Pi_n| + 2 \max_{1 \le i \le n} \, \sup_{t \in (t^n_{i-1}, t^n_i]} |\cX^{n,\kappa}_{C,t} - \cX^{n,\kappa}_{C, t^n_{i-1}}| \notag \\
	& \le 2\bb[|\Pi_n| + \max_{1 \le i \le n} \, \sup_{t \in (t^n_{i-1}, t^n_i]} |\cX^{n,\kappa}_{D,t} - \cX^{n,\kappa}_{D, t^n_{i-1}}| + \max_{1 \le i \le n} \, \sup_{t \in (t^n_{i-1}, t^n_i]} |\cX^{n,\kappa}_{\nu,t} - \cX^{n,\kappa}_{\nu, t^n_{i-1}}| \notag \\
	& \qquad + \max_{1 \le i \le n} \, \sup_{t \in (t^n_{i-1}, t^n_i]} |\cX^{n,\kappa}_{B,t} - \cX^{n,\kappa}_{B, t^n_{i-1}}|\bb] \notag \\
	& \le 2 \bb[|\Pi_n| + \|f_0\|_{B_b(\bfU; \bR^m)}|\Pi_n| + \|f_{p+1}\|_{B_b(\bfV; \bR^m)} \max_{1 \le i \le n} \int_{t^n_{i-1}}^{t^n_i} \int_{\kappa < |z| \le R} |z| \nu_s(\od z) \od s \notag \\
	& \qquad + \sum_{l=1}^p \max_{1 \le i \le n} \, \sup_{t \in (t^n_{i-1}, t^n_i]} \bb|\int_{t^n_{i-1}}^t f_l(s, \xi^n_i) \od B^{(l)}_s \bb| \bb].
\end{align}
For any $\ep >0$,
\begin{align}\label{eq:estimate-probability-kappa}
	\bPn(\{\tilde d^m_T(\cX^{n,\kappa},  \cX^{n,\kappa}_{\rho_n}) > \ep \}) \le \bPn \bb(\bigcup_{i=1}^n (A_i^{n, \kappa})^c\bb)  + \bPn \bb(\{\tilde d^m_T(\cX^{n,\kappa},  \cX^{n,\kappa}_{\rho_n}) > \ep\} \cap \bigcap_{i=1}^n A^{n, \kappa}_i\bb).
\end{align}
For the first term on the right-hand side, letting $x_i: = \int_{t^n_{i-1}}^{t^n_i} \int_{|z| > \kappa} \nu_s(\od z) \od s$ and using the inequality $\e^x - 1 - x \le \frac{1}{2} \e^{K} x^2$ for $ x\in [0, K]$, we obtain
\begin{align*}
	\bPn \bb(\bigcup_{i=1}^n (A_i^{n, \kappa})^c\bb) & \le \sum_{i=1}^n (1 - \bPn(A^{n, \kappa}_i))   = \sum_{i = 1}^{n-1} (1 - \e^{-x_i} - x_i \e^{-x_i})  + 1 - \e^{-x_n}  \\
	& \le \frac{1}{2} \e^{\max_{1 \le i \le n-1} x_i} \sum_{i=1}^{n-1}  \e^{-x_i}  x_i^2 + x_n  \le \frac{1}{2} \e^{\max_{1 \le i \le n-1} x_i} \max_{1 \le i \le n-1} x_i \sum_{i=1}^{n -1} x_i + x_n.
\end{align*}
Since $\int_0^T \int_{|z| > \kappa} \nu_s(\od z) \od s < \infty$ which ensures the uniform continuity of $[0, T] \ni t \mapsto \int_0^t \int_{|z| > \kappa} \nu_s(\od z) \od s$, we deduce that $\max_{1 \le i \le n} x_i \to 0$ as $n \to \infty$. Hence, 
\begin{align}\label{eq:estimate-probability-kappa-1}
	\bPn\bb(\bigcup_{i=1}^n (A_i^{n, \kappa})^c\bb) \to 0 \quad \trm{as } n \to \infty.
\end{align}
For the second term, since $\max_{1 \le i \le n} \int_{t^n_{i-1}}^{t^n_i} \int_{\kappa < |z| \le R} |z| \nu_s(\od z) \od s \to 0$  as $n \to \infty$ due to the uniform continuity, we deduce from \eqref{eq:Skorokhod-distance-estimate-kappa} that, when $n$ is sufficiently large, 
\begin{align*}
	\bPn\bb(\{\tilde d^m_T(\cX^{n,\kappa},  \cX^{n,\kappa}_{\rho_n}) > \ep\} \cap \bigcap_{i=1}^n A^{n, \kappa}_i\bb) & \le \bPn \bb(\bb\{\sum_{l=1}^p \max_{1 \le i \le n} \, \sup_{t \in (t^n_{i-1}, t^n_i]} \bb|\int_{t^n_{i-1}}^t f_l(s, \xi^n_i) \od B^{(l)}_s \bb| > \frac{\ep}{4} \bb\}\bb).
\end{align*}
Applying the Burkholder--Davis--Gundy inequality with the exponent $4$ yields
\begin{align}
	&\bPn\bb(\{\tilde d^m_T(\cX^{n,\kappa},  \cX^{n,\kappa}_{\rho_n}) > \ep\} \cap \bigcap_{i=1}^n A^{n, \kappa}_i \bb) \notag \\
	& \le \sum_{l=1}^p \sum_{i=1}^n 	\bPn \bb(\bb\{\sup_{t \in (t^n_{i-1}, t^n_i]} \bb|\int_{t^n_{i-1}}^t f_l(s, \xi^n_i) \od B^{(l)}_s \bb| > \frac{\ep}{4 p} \bb\}\bb) \notag\\
	& \le c\frac{256 p^4}{\ep^4} \sum_{l=1}^p \sum_{i=1}^n \bEn\bb[\bb| \int_{t^n_{i-1}}^{t^n_i}|f_l(s, \xi^n_i)|^2 \od s \bb|^2\bb] \notag\\
	& \le c \frac{256 p^5}{\ep^4} \max_{1 \le l \le p} \|f_l\|_{B_b(\bfU;\bR^m)}^4 \sum_{i=1}^n (t^n_i - t^n_{i-1})^2 \notag\\
	& \le c \frac{256 p^5 T}{\ep^4} \max_{1 \le l \le p} \|f_l\|_{B_b(\bfU;\bR^m)}^4 |\Pi_n| \xrightarrow{n \to \infty} 0, \label{eq:estimate-probability-kappa-2}
\end{align}
where $c>0$ is a constant independent of $\ep, n, p, T$. Combining \eqref{eq:estimate-probability-kappa-1} and \eqref{eq:estimate-probability-kappa-2} with \eqref{eq:estimate-probability-kappa}, and then plugging them together with \eqref{eq:estimate-Skorokhod-Y-2} into  \eqref{eq:estimate-Skorokhod-Y-1} we arrive at
\begin{align*}
	\limsup_{n \to \infty} \bPn(\{\tilde d^m_T(\cX^n,  \cX^n_{\rho_n}) > 3 \ep\}) \le 4 \ep^{-2} \|f_{p+1}\|_{B_b(\bfV; \bR^m)}^2 \int_0^T  \nq \int_{0 < |z| \le\kappa} |z|^2 \nu_s(\od z) \od s.
\end{align*}
Letting $\kappa \downarrow 0$ and exploiting \eqref{assumption:Levy-measure} we eventually obtain
\begin{align*}
	\limsup_{n \to \infty} \bPn(\{\tilde d^m_T(\cX^n,  \cX^n_{\rho_n}) > 3 \ep\}) = 0,
\end{align*}
which then verifies \eqref{converge-prob-error}. \qed

\subsection{Proof of assertion (\ref{eq:weak-convergence-rho})}\label{sec:dist_proof}

For the proof of \eqref{eq:weak-convergence-rho}, we apply a limit theorem of Jacod and Shiryaev, which is briefly reviewed in \cref{sec:JS03-limit-theorem}. It relies on verifying the convergence of the modified semimartingale characteristics of $\cX^n_{\rho_n}$ to the modified semimartingale characteristics of the limit process  $\cX$. Here, ``modified'' is understood in the sense of \cite[Definition II.2.16]{JS03}.

Let us fix a truncation function $\frh \colon \bR^m \to \bR^m$, see \cite[Definition II.2.3]{JS03}, i.e. $\frh$ is bounded and $\frh(z) = z$ in a neighborhood of $0$. It is convenient for us to assume furthermore that $\frh^{(k)} \in C^2_b(\bR^m)$ for any $k = 1, \ldots, m$.

The following lemma states the semimartingale characteristics of $\cX$ with respect to the truncation function $\frh$, compare \cite[Definition II.2.6]{JS03}. Its proof follows routine arguments and can be found in \cref{sec:proof-semimartingale-characteristics}.

\begin{lemm}\label{lemm:characteristic-limit}
	$\cX$ is an $m$-dimensional semimartingale whose characteristics $(\frb^{\cX}, C^{\cX}, \nu^{\cX})$ with respect to the truncation function $\frh$ is given by
	\begin{align*}
		&\frb^{\cX}_t   = \int_0^t \bb[\int_{[0, 1]^d} f_0(s, u) \od u  + \int_{\{|z| > R\} \times [0, 1]^d} \frh(f_{p+2}(s, z, u))  \nu_s(\od z) \od u \\
		& \hspace{50pt} + \int_{\{0 < |z| \le R\} \times [0, 1]^d} [\frh(f_{p+1}(s, z, u)|z|)  - f_{p+1}(s, z, u)|z|] \nu_s(\od z) \od u \bb] \od s,\\
		&C^{\cX}_t   =  \bb(\sum_{l=1}^p \int_0^t \nq \int_{[0, 1]^d}  (f^{(k)}_l f^{(k')}_l)(s, u) \od u \od s\bb)_{k, k'} \in \bR^{m \times m}, \quad 0 \le t \le T,\\
		&\nu^{\cX}((s, t] \times A)   = \int_s^t \nq \int_{\{0 < |z| \le R\} \times [0, 1]^d} \1_{A}(f_{p+1}(r, z, u)|z|) \nu_r(\od z) \od u \od r\\
		&  \hspace{85pt} + \int_s^t \nq \int_{\{|z| > R\} \times [0, 1]^d} \1_{A}(f_{p+2}(r, z, u)) \nu_r(\od z) \od u \od r
	\end{align*}
	for $0 \le s < t \le T$, $A \in \cB(\bR^m_0)$.
\end{lemm}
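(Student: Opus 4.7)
The strategy is to identify each component of the canonical decomposition of $\cX$ relative to $\frh$ directly from its definition, exploiting that the four driving random measures $(M_D, M_{B^{(1)}}, \ldots, M_{B^{(p)}}, M_J)$ are mutually independent and of known type. Write $\cX = A + M + J$ where
\begin{align*}
A^{(k)}_t &= \int_{(0,t]\times[0,1]^d} f_0^{(k)}(s,u)\, M_D(\od s,\od u),\\
M^{(k)}_t &= \sum_{l=1}^p \int_{(0,t]\times[0,1]^d} f_l^{(k)}(s,u)\, M_{B^{(l)}}(\od s,\od u),\\
J^{(k)}_t &= \int_{(0,t]\times\{0<|z|\le R\}\times[0,1]^d} f_{p+1}^{(k)}(s,z,u)|z|\,\tilde M_J(\od s,\od z,\od u) + \int_{(0,t]\times\{|z|>R\}\times[0,1]^d} f_{p+2}^{(k)}(s,z,u)\, M_J(\od s,\od z,\od u).
\end{align*}
Here $A$ is absolutely continuous and predictable, $M$ is a continuous local martingale (the $M_{B^{(l)}}$ have continuous paths and the $f_l$ are bounded and deterministic), and the two summands in $J$ are respectively a purely discontinuous square-integrable martingale and a finite-activity pure-jump process (using the boundedness of $f_{p+1}, f_{p+2}$ and \eqref{assumption:Levy-measure}). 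Therefore $\cX$ is a semimartingale with $\cX^c = M$.

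\textbf{Second characteristic.} Since $M_{B^{(1)}}, \ldots, M_{B^{(p)}}$ are orthogonal white noise martingale measures with common intensity $M_D(\od s,\od u) = \od u\,\od s$, a standard computation (e.g.\ Proposition I-6 in \cite{KM90}, reviewed in \cref{app:random_measures}) yields
\begin{equation*}
[M^{(k)}, M^{(k')}]_t = \sum_{l=1}^p \int_0^t \nq\int_{[0,1]^d} f_l^{(k)}(s,u) f_l^{(k')}(s,u)\,\od u\,\od s,
\end{equation*}
which is deterministic, hence coincides with $\langle M^{(k)}, M^{(k')}\rangle_t = C^{\cX, (k,k')}_t$.

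\textbf{Jump measure and drift.} Setting $F(s,z,u) := f_{p+1}(s,z,u)|z|\1_{\{0<|z|\le R\}} + f_{p+2}(s,z,u)\1_{\{|z|>R\}}$, every jump of $\cX$ comes from an atom $(s,z,u)$ of $M_J$ with $\Delta \cX_s = F(s,z,u)$. Thus $\mu^{\cX}$ is the image of $M_J$ under $(s,z,u)\mapsto(s,F(s,z,u))$ restricted to $\{F\ne 0\}$, and its $\bF$-predictable compensator $\nu^{\cX}$ is the analogous image of $\mu_J(\od s,\od z,\od u) = \nu_s(\od z)\,\od u\,\od s$, which gives the stated formula. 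To read off $\frb^{\cX}$, rewrite $J$ in canonical form by splitting $F = \frh(F) + (F-\frh(F))$, applying
\begin{equation*}
\int F\,\od \tilde M_J = \int \frh(F)\,\od\tilde M_J + \int (F-\frh(F))\,\od M_J - \int (F-\frh(F))\,\od\mu_J,
\end{equation*}
and combining with the analogous splitting of the large-jump integral. This exhibits $J$ as $\frh * (\mu^{\cX}-\nu^{\cX}) + (x-\frh(x))*\mu^{\cX}$ plus the explicit predictable drift $\int_0^t[\int(\frh(f_{p+1}|z|)-f_{p+1}|z|)\nu_s(\od z)\od u + \int\frh(f_{p+2})\nu_s(\od z)\od u]\od s$ over the respective regions $\{0<|z|\le R\}$ and $\{|z|>R\}$. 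Adding this to $A$ yields the stated $\frb^{\cX}$.

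\textbf{Main obstacle.} The substance is entirely accounting: verifying integrability of each functional against the correct random measure, checking that the orthogonality of the different $M_{B^{(l)}}$ and the independence from $M_J$ eliminate all cross terms, and being careful that the threshold $R$ is treated consistently between the definition of $\cX$ and the splitting $\frh(F) + (F-\frh(F))$. All of these follow from the boundedness of the $f_l$ together with assumption \eqref{assumption:Levy-measure}, so no new analytic ingredient is required beyond the integration theory for martingale measures recalled in \cref{app:random_measures}.
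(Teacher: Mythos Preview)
Your proposal is correct and follows essentially the same approach as the paper: decompose $\cX$ into the deterministic drift $A$, the continuous martingale part $M$ from the white noise integrals, and the jump part $J$; compute $C^{\cX}$ via Proposition~I-6 of \cite{KM90}; identify $\nu^{\cX}$ as the image of $\mu_J$ under $(s,z,u)\mapsto (s,F(s,z,u))$; and recover $\frb^{\cX}$ by the canonical $\frh(F)+(F-\frh(F))$ splitting. The only point where the paper is slightly more explicit is in justifying the jump formula $\Delta\cX_s = F(s,z,u)$ for an atom $(s,z,u)$ of $M_J$: since $M_J$ can have infinite activity, the paper first restricts attention to sets $A\in\cB(\bR^m_0)$ bounded away from the origin, introduces a threshold $\epsilon>0$ so that only atoms with $|z|>\epsilon$ contribute, and invokes \cite[Ch.~3, Theorem~1]{LS89} to make the identification rigorous before extending to general $A$ by monotone approximation---you should fold this small technical step into your ``accounting''.
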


\begin{rema}\label{remark:integration-nu-fM} By a standard approximation argument, the measure $\nu^{\cX}$ in \cref{lemm:characteristic-limit} satisfies
	\begin{align*}
		&\int_0^T \nq \int_{\bR^m_0} g(y) \nu^{\cX}(\od s, \od y)\\
		& = \int_0^T\nq \int_{\bR^q_0 \times [0, 1]^d} [g(f_{p+1}(s, z, u)|z|)\1_{\{0 < |z| \le R\}} + g(f_{p+2}(s, z, u))\1_{\{|z| > R\}}] \nu_s(\od z) \od u \od s
	\end{align*}
	for any measurable $g\colon \bR^m_0 \to \bR$ which is non-negative or $g \1_{[0, T]}$ is $\nu^{\cX}$-integrable. In particular, for $g(y) = \1_{\{|y| \ge \kappa\}}$ with some $\kappa >0$ we get
	\begin{align}\label{rema:levy-measure-characteristic}
		&\int_0^T \nq \int_{|y| \ge \kappa} \nu^{\cX}(\od s, \od y) \notag\\
		&  = \int_0^T\nq \int_{\bR^q_0 \times [0, 1]^d} [\1_{\{|f_{p+1}(s, z, u)| |z| \ge \kappa\}} \1_{\{0 < |z| \le R\}} + \1_{\{|f_{p+2}(s, z, u)| \ge \kappa\}}\1_{\{|z| >R\}}] \nu_s(\od z) \od u \od s \notag\\
		& \le \frac{\|f_{p+1}\|_{B_b(\bfV; \bR^m)}^2}{\kappa^2} \int_0^T\nq \int_{0 < |z| \le R} |z|^2 \nu_s(\od z) \od s  + \frac{\|f_{p+2}\|_{B_b(\bfV; \bR^m)}}{\kappa} \int_0^T\nq \int_{|z| > R} \nu_s(\od z)  \od s \\
		& < \infty, \notag
	\end{align}
	where the finiteness can be derived from \eqref{assumption:Levy-measure} and the inequalities
	$$\1_{\{|f_{p+1}(s, z, u)||z| \ge \kappa\}} \le \kappa^{-2}\|f_{p+1}\|_{B_b(\bfV; \bR^m)}^2|z|^2 \quad \trm{and} \quad \1_{\{|f_{p+2}(s, z, u)| \ge \kappa\}} \le \kappa^{-1}\|f_{p+2}\|_{B_b(\bfV; \bR^m)}.$$
\end{rema}

\medskip

We now turn to $\cX^n_{\rho_n}$, whose modified semimartingale characteristics will be computed in relation to a new filtration, which we construct next. To this end, 
 we set
\begin{align*}
	\sigma_n(t): = \sup\{i : t^n_i \le t\} \in \{0, 1, \ldots, n\},\quad t \in [0, \infty).
\end{align*}
Denote $\Delta^n_i \cX^n : = \cX^n_{t^n_i} - \cX^n_{t^n_{i-1}}$. Then
\begin{align*}
	\cX^n_{\rho_n(t)} = \sum_{i=1}^{\sigma_n(t)} \Delta^n_i \cX^n, \quad t \in [0, T].
\end{align*}
For $n \ge 1$, we define the discrete-time filtration $(\cG^n_i)_{i=0}^n$ by
\begin{align*}
	\cG^n_0 & : = \{\emptyset, \Omega\}, \quad \cG^n_i := \sigma\{\Delta^n_j \cX^n, j \le i\}, \quad i = 1, \ldots, n.
\end{align*}
Then $\{\Delta^n_i \cX^n, \cG^n_i : 1 \le i \le n, n \ge 1\}$ is an adapted triangular array. 
Since $\Delta^n_i \cX^n$ is independent of $\cG^n_{i-1}$, we get for any bounded measurable $g$ and $t \in [0, \infty)$ that, a.s.,
\begin{align*}
	\sum_{i=1}^{\sigma_n(t)} \bEn [g(\Delta^n_i \cX^n) |\cG^n_{i-1}]  = \sum_{i=1}^{\sigma_n(t)} \bEn [g(\Delta^n_i\cX^n)].
\end{align*}

\begin{rema}\label{rema:semimartingale-array}
	\begin{enumerate}
		\item By \cite[Ch.II, §3b]{JS03}, the modified semimartingale characteristics of  $\cX^n_{\rho_n}$ with respect to the filtration $\bG^{\sigma_n}=(\cG^n_{\sigma_n(t)})_{t\geq 0}$ is the triplet (drift part, modified diffusion part, jump part) which is respectively described by
			\begin{align*}
				&\sum_{i=1}^{\sigma_n} \bEn[\frh(\Delta^n_i \cX^n)], \\ & \bb(\sum_{i=1}^{\sigma_n} \b( \bEn[(\frh^{(k)} \frh^{(k')})(\Delta^n_i \cX^n)]- \bEn[\frh^{(k)} (\Delta^n_i \cX^n)] \bEn[ \frh^{(k')}(\Delta^n_i \cX^n)] \b)\bb)_{k,k'=1,\ldots,m}, \\
				&  \sum_{i=1}^{\sigma_n}  \bEn[g(\Delta^n_i \cX^n)],
			\end{align*}
			where $g$ runs through a sufficiently large class of test functions vanishing around zero. 
			
			\item A key difference between $\bG^{\sigma_n}$ and $\bF^{\Pi_n}$ is that information about the random variable $\xi^{\Pi_n}_{t_{i}^n}$, which is sampled for the randomization on the interval $(t^n_{i-1}, t^n_i]$, is only revealed at time $t^n_i$ in the filtration 
			$\bG^{\sigma_n}$, whereas it is already known at time $t^n_{i-1}$ in the filtration $\bF^{\Pi_n}$.
	\end{enumerate}
\end{rema}

The following proposition plays the key role for deriving the convergence of the semimartingale characteristics.
\begin{prop}\label{prop:limit-test-function-g}
	For any $g \in C^2_b(\bR^{m})$, one has
	\begin{align}\label{eq:prop:discrete-time-convergence}
		\sum_{i=1}^{n} \bb| \bEn [g(\Delta^n_i \cX^n)] - g(0) -   \int_{t^n_{i-1}}^{t^n_i} \Psi_{f}(g) (s)  \od s \bb| \xrightarrow{n \to \infty} 0,
	\end{align}
	where the function $\Psi_f(g) \colon [0, T] \to \bR$ is defined by
	\begin{align}\label{eq:defi-Phi}
		\Psi_f(g) (s) & : =   \int_{[0, 1]^d} \bb(\nabla g(0)^\tran f_0(s, u) + \frac{1}{2} \sum_{k, k' = 1}^m \pd^2_{k,k'} g(0) \sum_{l=1}^p  (f^{(k)}_l f^{(k')}_l)(s, u)\bb) \od u \notag \\ 
		& \quad  + \int_{\{0 < |z| \le R\} \times [0, 1]^d } \b[g(f_{p+1}(s, z, u) |z|) - g(0) - |z|  \nabla g(0)^\tran f_{p+1}(s, z, u)  \b] \nu_s(\od z) \od u \notag \\
		& \quad + \int_{\{|z| > R\} \times [0, 1]^d} \b[g(f_{p+2}(s, z, u)) - g(0) \b] \nu_s(\od z) \od u.
	\end{align}
	Consequently, for any $t \in [0, \infty)$,
	\begin{align*}
		\sum_{i=1}^{\sigma_n(t)} \bEn [g(\Delta^n_i \cX^n)] \xrightarrow{n \to \infty}  g(0) +  \int_0^{t \wedge T} \Psi_f(g) (s)  \od s.
	\end{align*}
\end{prop}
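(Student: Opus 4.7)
The plan is to treat each summand $\bE[g(\Delta^n_i \cX^n)] - g(0)$ by applying the It\^o formula to $g(Y^{n,i}_t)$ on the interval $[t^n_{i-1}, t^n_i]$, where $Y^{n,i}_t := \cX^n_t - \cX^n_{t^n_{i-1}}$. Recall that in the filtration $\bF^{\Pi_n}$ the random variable $\xi^n_i := \xi^{\Pi_n}_{t^n_i}$ is $\cF^{\Pi_n}_{t^n_{i-1}}$-measurable and that $B$ remains an $\bF^{\Pi_n}$-Brownian motion and $N$ remains an $\bF^{\Pi_n}$-Poisson random measure with intensity $\nu_s(\od z)\od s$. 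Hence, on $(t^n_{i-1}, t^n_i]$, $Y^{n,i}$ is a semimartingale with deterministic drift coefficient $f_0(s,\xi^n_i)$, diffusion coefficients $f_l(s,\xi^n_i)$, small-jump coefficient $f_{p+1}(s,z,\xi^n_i)|z|$ (integrated against $\tilde N$), and large-jump coefficient $f_{p+2}(s,z,\xi^n_i)$ (integrated against $N$, with compensator $\int_{|z|>R}\nu_s(\od z)\od s<\infty$ by \eqref{assumption:Levy-measure}).

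Taking expectation in the It\^o formula, the Brownian martingale and the pure jump martingales vanish (they are genuine martingales because $g\in C^2_b(\bR^m)$ and all coefficients are bounded). What remains is
\begin{align*}
\bE[g(\Delta^n_i \cX^n)] - g(0)
 = \int_{t^n_{i-1}}^{t^n_i}\bE\bigl[ \Psi_{f}(g,Y^{n,i}_{s-},\xi^n_i)(s)\bigr]\od s,
\end{align*}
where $\Psi_f(g,y,u)(s)$ denotes the integrand obtained from \eqref{eq:defi-Phi} by (i) evaluating $\nabla g$, $\Hess g$, and $g$ at $y$, $y$, and $y+h$ in place of $0$, $0$, and $h$ respectively (with $h$ the jump size), and (ii) fixing $u$ in the $[0,1]^d$-integrands instead of integrating. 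The key observation is that, because $\xi^n_i$ is uniform on $[0,1]^d$ and independent of all other randomness entering \emph{the integrand at $y=0$}, the ``frozen'' quantity $\int_{t^n_{i-1}}^{t^n_i}\bE[\Psi_{f}(g,0,\xi^n_i)(s)]\od s$ is exactly $\int_{t^n_{i-1}}^{t^n_i}\Psi_f(g)(s)\od s$. So it suffices to control
\begin{align*}
\sum_{i=1}^{n}\int_{t^n_{i-1}}^{t^n_i}\bE\bigl[\bigl|\Psi_{f}(g,Y^{n,i}_{s-},\xi^n_i)(s)-\Psi_{f}(g,0,\xi^n_i)(s)\bigr|\bigr]\od s.
\end{align*}

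The central quantitative input will be the uniform-in-$i$ bound
\begin{align*}
\bE\Bigl[\sup_{t\in[t^n_{i-1}, t^n_i]}|Y^{n,i}_t|^2\Bigr]\le C(t^n_i-t^n_{i-1}),
\end{align*}
which follows from Doob's and the Burkholder--Davis--Gundy inequality, boundedness of the $f_l$'s, the finite activity of the large jumps, and \eqref{assumption:Levy-measure}. This controls the four error pieces as follows: the drift error is handled by $|\nabla g(y)-\nabla g(0)|\le \|\Hess g\|_\infty|y|$ and Cauchy--Schwarz, giving a summable contribution of order $\sum_i(t^n_i-t^n_{i-1})^{3/2}\le T|\Pi_n|^{1/2}$. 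The diffusion error uses that $\Hess g$ is bounded and continuous at $0$, together with the uniform convergence $Y^{n,i}_{s-}\to 0$ in probability; a Chebyshev-plus-$\epsilon$-decomposition gives vanishing uniformly in $s$, which then sums via the total length $T$. The large-jump error is similar to the drift error. The main obstacle is the compensated small-jump error: the crude bound $\|\Hess g\|_\infty|h|^2$ does not vanish when summed. The remedy is the identity (obtained by two applications of the Taylor expansion)
\begin{align*}
&g(y+h)-g(y)-\nabla g(y)^\tran h - [g(h)-g(0)-\nabla g(0)^\tran h]\\
&\qquad = \tfrac12 h^\tran \bigl[\Hess g(y+\theta_1 h)-\Hess g(\theta_2 h)\bigr]h,
\end{align*}
with $\theta_1,\theta_2\in(0,1)$. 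Splitting on $\{|Y^{n,i}_{s-}|\le \delta\}$ and its complement and using the continuity of $\Hess g$ on a fixed compact neighborhood of $0$ (chosen once $|z|\le R$ keeps $|h|$ bounded), one obtains an upper bound $|h|^2\cdot\omega(\delta,|h|)$ where the modulus $\omega$ vanishes as $\delta\to0$ and $|h|\to0$; together with Markov's inequality applied to $|Y^{n,i}_{s-}|$ this makes the total sum vanish. The second assertion, convergence of partial sums indexed by $\sigma_n(t)$, follows immediately from \eqref{eq:prop:discrete-time-convergence} and the continuity of $s\mapsto \int_0^s\Psi_f(g)(r)\od r$.
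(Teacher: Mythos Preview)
Your overall plan is correct and coincides with the paper's: apply It\^o's formula to $g(Y^{n,i}_\cdot)$ on $[t^n_{i-1},t^n_i]$, take expectations so the martingale terms drop out, and compare what remains to $\int_{t^n_{i-1}}^{t^n_i}\Psi_f(g)(s)\,\od s$. The difference lies in how the error is controlled. The paper rewrites $\sum_{i=1}^n\int_{t^n_{i-1}}^{t^n_i}$ as a single integral $\int_0^T$ by introducing $i(s,n)$, proves the $\bfL^1$-convergence $Y^{n,i(s,n)}_{s-}\to 0$ for each fixed $s$, and then invokes dominated convergence (the dominators being $C$, $C|z|^2$, and $C$ for the drift/diffusion, small-jump, and large-jump pieces, respectively). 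Your route is to get a quantitative $\bfL^2$-bound on $Y^{n,i}$ and sum explicit rates. Both work; the dominated-convergence argument is shorter and sidesteps the splitting on $\{|Y^{n,i}_{s-}|\le\delta\}$ and the compactness discussion for the Hessian, whereas your argument has the advantage of giving rates when the L\'evy intensity is uniformly bounded.

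One genuine imprecision to fix: the bound $\bE\bigl[\sup_{t\in[t^n_{i-1},t^n_i]}|Y^{n,i}_t|^2\bigr]\le C(t^n_i-t^n_{i-1})$ is not true as stated, because no uniform-in-$s$ bound on $\int_{\bR^q_0}(|z|^2\1_{\{0<|z|\le R\}}+\1_{\{|z|>R\}})\nu_s(\od z)$ is assumed. The correct estimate is
\[
\bE\Bigl[\sup_{t\in[t^n_{i-1},t^n_i]}|Y^{n,i}_t|^2\Bigr]\le C\Bigl[(t^n_i-t^n_{i-1})+\int_{t^n_{i-1}}^{t^n_i}\!\!\int_{\bR^q_0}\!\bigl(|z|^2\1_{\{0<|z|\le R\}}+\1_{\{|z|>R\}}\bigr)\nu_s(\od z)\,\od s\Bigr],
\]
and the right-hand side still has $\max_i(\cdots)\to 0$ by absolute continuity of the time integral, so your summation arguments go through with $\sum_i(t^n_i-t^n_{i-1})^{3/2}$ replaced by $T\cdot(\max_i[\cdots])^{1/2}$. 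A second, smaller point: in the small-jump piece your compactness argument for the Hessian assumes $|h|$ is bounded, which fails when $R=\mathfrak{r}=\infty$; there you should additionally split on $\{|z|>M\}$ and use that $\int_0^T\!\int_{|z|>M}|z|^2\nu_s(\od z)\,\od s\to 0$ as $M\to\infty$ (this is exactly where the paper's dominated-convergence formulation is more economical).
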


\begin{proof} 
	
	\textit{\textbf{Step 1.}} It is obvious that $\Psi_f(g)$ is measurable by Fubini's theorem, and moreover, there exists a constant $c_{T, m}>0$ such that
	\begin{align*}
		\int_0^T |\Psi_f(g) (s)| \od s & \le c_{T, m} \bb(\|f_0\|_{B_b(\bfU; \bR^m)}  |\nabla g(0)| +  \sum_{k, k'=1}^m |\pd^2_{k,k'} g(0)| \sum_{l=1}^p \|f^{(k)}_l f^{(k')}_l\|_{B_b(\bfU)}  \\
		& \quad + \|f_{p+1}\|_{B_b(\bfV;\bR^m)}^2 \|\nabla^2 g\|_{B_b(\bR^m; \bR^{m \times m})}  \int_0^T \nq \int_{\{0 < |z| \le R\}\times [0, 1]^d } |z|^2 \nu_s(\od z) \od u \od s\\
		&  \quad  + 2\|g\|_{B_b(\bR^m)} \int_0^T \nq \int_{\{|z| > R\}\times [0, 1]^d } \nu_s(\od z) \od u \od s\bb)\\
		& < \infty,
	\end{align*}
	Next, for $n \ge 1, i = 1, \ldots, n$, we define the c\`adl\`ag and $\bF^{\Pi_n}$-adapted process $F^{n, i} = (F^{n, i}_t)_{t \in [t^n_{i-1}, t^n_i]}$ null at $t^n_{i-1}$ by setting, for $t \in (t^n_{i-1}, t^n_i]$,
	\begin{align*}
		F^{n, i}_t  & :=   \int_{t^n_{i-1}}^{t} f_0(s, \xi^n_{i}) \od s  + \sum_{l=1}^p  \int_{t^n_{i-1}}^{t} f_l (s, \xi^n_{i}) \od B^{(l)}_s \\
		& \quad +  \int_{t^n_{i-1}}^{t} \int_{0 < |z| \le R} f_{p+1}(s, z, \xi^n_i) |z| \tilde N(\od s, \od z) +  \int_{t^n_{i-1}}^{t} \int_{|z| > R} f_{p+2}(s, z, \xi^n_i) N(\od s, \od z).
	\end{align*}
	Let $s \in (0, T]$ be now fixed. Then,  for any $n \ge 1$, there exists uniquely $1 \le i(s, n) \le n$ such that 
	\begin{align*}
		s \in (t^n_{i(s, n)-1}, t^n_{i(s, n)}] \quad \trm{and} \quad \lim_{n \to \infty} t^n_{i(s, n)-1} = \lim_{n \to \infty} t^n_{i(s, n)} = s.
	\end{align*}
	We claim that
	\begin{align*}
		F^{n, i(s, n)}_s \xrightarrow{\bfL^1(\bPn)} 0 \quad \trm{as } n \to \infty. 
	\end{align*}
	It is straightforward to check when $n \to \infty$ that, in the representation of  $F^{n, i(s, n)}_s$, the Lebesgue integral part tends to $0$ in $\bfL^2(\bPn)$ as $f_0$ is bounded, the martingale part converges to $0$ in $\bfL^2(\bPn)$ by applying It\^o's isometry and using the boundedness of $f_l$, $l = 1, \ldots, p+1$. For the ``large jump part'', since $\nu_r(\od z) \od r$ is the predictable compensator of $N(\od r, \od z)$, together with \eqref{assumption:Levy-measure}, we get
	\begin{align*}
		& \bEn\bb[\bb| \int_{t^n_{i(s, n)- 1}}^s \int_{|z| > R} f_{p+2}(r, z, \xi^n_{i(s, n)}) N(\od r, \od z) \bb|\bb]\\
		& \le \|f_{p+2}\|_{B_b(\bfV; \bR^m)} \bEn\bb[\int_{t^n_{i(s, n)- 1}}^s \int_{|z| > R} N(\od r, \od z) \bb] \\
		& = \|f_{p+2}\|_{B_b(\bfV; \bR^m)} \int_{t^n_{i(s, n)- 1}}^s \int_{|z| > R} \nu_r(\od z) \od r \xrightarrow{n \to \infty} 0,
	\end{align*}
	which verifies the claim. Since $\bEn[N(\{s\} \times \bR^q_0)] = \nu(\{s\} \times \bR^q_0) = 0$, it holds that $F^{n, i(s, n)}_{s} = F^{n, i(s, n)}_{s-}$ a.s.,
	and hence,
	\begin{align}\label{eq:convergence-in-probability-F-2}
		F^{n, i(s, n)}_{s-} \xrightarrow{\bfL^1(\bPn)} 0 \quad \trm{as } n \to \infty. 
	\end{align}
	
	\textbf{\textit{Step 2.}} Using It\^o's formula for $F^{n, i}$ and $g \in C^2_b(\bR^m)$ (see, e.g., \cite[Theorem 2.5]{Ku04}) we get, a.s.,
	\begin{align*}
		& g(\Delta^n_i \cX^n)  =  g\b(F^{n, i}_{t^n_i}\b)\\
		& = g(0) +   \int_{t^n_{i-1}}^{t^n_i} \nabla  g(F^{n, i}_{s-})^\tran f_0(s, \xi^n_i) \od s\\
		& \quad +  \sum_{l = 1}^p \int_{t^n_{i-1}}^{t^n_i} \nabla g(F^{n, i}_{s-})^\tran f_l (s, \xi^n_i) \od B^{(l)}_s + \frac{1}{2} \sum_{k, k' =1}^m \sum_{l=1}^p \int_{t^n_{i-1}}^{t^n_i} \pd^2_{k,k'} g(F^{n, i}_{s-}) (f^{(k)}_l f^{(k')}_l) (s, \xi^n_i) \od s\\
		& \quad + \int_{t^n_{i-1}}^{t^n_i} \int_{0 < |z| \le R} \B[ g\b(F^{n, i}_{s-} + f_{p+1}(s, z, \xi^n_i) |z|\b) - g(F^{n, i}_{s-}) \B] \tilde N(\od s, \od z)\\
		& \quad + \int_{t^n_{i-1}}^{t^n_i} \int_{0 < |z| \le R} \B[ g\b(F^{n, i}_{s-} + f_{p+1}(s, z, \xi^n_i) |z|\b) - g(F^{n, i}_{s-}) - |z| \nabla g(F^{n, i}_{s-})^\tran f_{p+1}(s, z, \xi^n_i)\B] \nu_s(\od z) \od s\\
		& \quad + \int_{t^n_{i-1}}^{t^n_i} \int_{|z| > R} \B[ g\b(F^{n, i}_{s-} + f_{p+2}(s, z,  \xi^n_i) \b) - g(F^{n, i}_{s-}) \B] N(\od s, \od z).
	\end{align*}
	Since $\nabla g$  and $f_l$ are bounded for any $l = 1, \ldots, p+1$, the integrals with respect to the Brownian motions and the compensated random measure are square integrable martingales which vanish after taking the expectation $\bEn$. Let us now investigate the remaining parts.\\
	$\bullet$ \textit{The ``drift part'':} Using Fubini's theorem and the Cauchy--Schwarz inequality yields
	\begin{align*}
		&\sum_{i=1}^{n} \bb| \bEn\bb[\int_{t^n_{i-1}}^{t^n_i} \nabla g(F^{n, i}_{s-})^\tran f_0(s, \xi^n_i) \od s \bb]  - \int_{t^n_{i-1}}^{t^n_i} \int_{[0, 1]^d} \nabla g(0)^\tran f_0(s, u) \od u\od s \bb|\\
		&= \sum_{i=1}^{n} \bb| \bEn\bb[\int_{t^n_{i-1}}^{t^n_i} \nabla g(F^{n, i}_{s-})^\tran f_0(s, \xi^n_i) \od s  - \int_{t^n_{i-1}}^{t^n_i} \nabla g(0)^\tran f_0(s, \xi^n_i) \od s  \bb] \bb|\\
		& \le  \|f_0\|_{B_b(\bfU; \bR^m)} \sum_{i=1}^{n} \int_{t^n_{i-1}}^{t^n_i} \bEn[|\nabla g(F^{n, i}_{s-}) - \nabla g(0)|] \od s \\
		& = \|f_0\|_{B_b(\bfU; \bR^m)} \int_0^{T} \bEn \bb[\sum_{i=1}^n |\nabla g(F^{n, i}_{s-}) - \nabla g(0)| \1_{(t^n_{i-1}, t^n_i]}(s)\bb] \od s \\
		& = \|f_0\|_{B_b(\bfU; \bR^m)} \int_0^{T} \bEn \b[\b|\nabla g \b(F^{n, i(s, n)}_{s-}\b) - \nabla g(0)\b|\b] \od s\\
		& \xrightarrow{n \to \infty} 0,
	\end{align*}
	where we apply the dominated convergence theorem using \eqref{eq:convergence-in-probability-F-2} together with the continuity and boundedness of $\nabla g$. Analogously, for $k, k' = 1, \ldots, m$ and $l = 1, \ldots, p$,
	\begin{align*}
		\sum_{i=1}^{n} \bb| \bEn\bb[\int_{t^n_{i-1}}^{t^n_i} \pd^2_{k,k'} g(F^{n, i}_{s-}) (f^{(k)}_l f^{(k')}_l)(s, \xi^n_i) \od s\bb]  - \int_{t^n_{i-1}}^{t^n_i} \int_{[0, 1]^d}  \pd^2_{k,k'} g(0) (f^{(k)}_l f^{(k')}_l)(s, u) \od u\od s   \bb| \xrightarrow{n 
			\to \infty} 0.
	\end{align*}
	$\bullet$ \textit{The ``small jump part'':} For $i(n, s)$ introduced in \textbf{\textit{Step 1}} one has
	\begin{align*}
	&\sum_{i=1}^{n} \bb| \bEn\bb[\int_{t^n_{i-1}}^{t^n_i} \int_{0 < |z| \le R} \B[ g\b(F^{n, i}_{s-} + f_{p+1}(s, z, \xi^n_i)|z|\b) - g(F^{n, i}_{s-}) - |z| \nabla g(F^{n, i}_{s-})^\tran f_{p+1}(s, z, \xi^n_i) \B] \nu_s(\od z)\od s\bb] \\
		& \qquad - \int_{t^n_{i-1}}^{t^n_i}\int_{\{0 < |z| \le R\} \times [0, 1]^d}   \B[g(f_{p+1}(s, z,u)|z|) - g(0) - |z| \nabla g(0)^\tran f_{p+1}(s, z, u)\B]  \nu_s(\od z) \od u \od s \bb| \\ 
		& \le \int_0^T \nq \int_{0 < |z| \le R} \bEn\bb[\sum_{i=1}^n \B| g\b(F^{n, i}_{s-} + f_{p+1}(s, z, \xi^n_i)|z|\b) - g(F^{n, i}_{s-}) - |z| \nabla g(F^{n, i}_{s-})^\tran f_{p+1}(s, z,\xi^n_i) \\
		& \qquad - g(f_{p+1}(s, z, \xi^n_i)|z|) + g(0) + |z| \nabla g(0)^\tran f_{p+1}(s, z,\xi^n_i) \B| \1_{(t^n_{i-1}, t^n_i]}(s) \bb] \nu_s(\od z) \od s\\
		& = \int_0^T \nq \int_{0 < |z| \le R} \bEn\B[\B| g\b(F^{n, i(s, n)}_{s-} + f_{p+1}(s, z, \xi^n_{i(s, n)})|z|\b) - g\b(F^{n, i(s, n)}_{s-}\b) - |z| \nabla g\b(F^{n, i(s, n)}_{s-}\b)^\tran f_{p+1}(s, z,\xi^n_{i(s, n)})  \\
		& \qquad - g(f_{p+1}(s, z, \xi^n_{i(s, n)})|z|) + g(0) + |z|\nabla g(0)^\tran f_{p+1}(s, z, \xi^n_{i(s, n)})\B|\B] \nu_s(\od z) \od s\\
		& =: \int_0^T \nq \int_{0 < |z| \le R} \bEn[G^{\trm{S}}_n(s, z)] \nu_s(\od z) \od s.
	\end{align*}
	Using Taylor's expansion we obtain a constant $c_m >0$ depending only on $m$ such that
	\begin{align*}
		G^{\trm{S}}_n(s, z) \le c_m \|\nabla^2 g\|_{B_b(\bR^m; \bR^{m \times m})} \|f_{p+1}\|_{B_b(\bfV; \bR^m)}^2 |z|^2.
	\end{align*}
	Hence, it is easy to check using \eqref{eq:convergence-in-probability-F-2} and dominated convergence that  $\bEn[G^{\trm{S}}_n(s, z)] \to 0$ as $n \to\infty$ for any $s, z$. Due to \eqref{assumption:Levy-measure}, dominated convergence also yields
	\begin{align*}
		\int_0^T \nq \int_{0 < |z| \le R} \bEn[G^{\trm{S}}_n(s, z)] \nu_s(\od z) \od s \xrightarrow{n \to \infty} 0.
	\end{align*}
	$\bullet$ \textit{The ``large jump part'':}
	Since $\nu_s(\od z) \od s$ is the predictable compensator of $N(\od s, \od z)$, using Fubini's theorem, again, for interchanging integrals we get
	\begin{align*}
		& \sum_{i=1}^{n} \bb| \bEn\bb[\int_{t^n_{i-1}}^{t^n_i} \int_{|z| > R} \B[ g\b(F^{n, i}_{s-} + f_{p+2}(s, z,  \xi^n_i)\b) - g(F^{n, i}_{s-}) \B] N(\od s, \od z)\bb]\\
		& \qquad - \int_{t^n_{i-1}}^{t^n_i} \int_{\{|z| > R\} \times [0, 1]^d}  \b[ g(f_{p+2}(s, z,  u)) - g(0) \b] \nu_s(\od z) \od u \od s\bb] \bb|\\
		& = \sum_{i=1}^{n} \bb| \bEn\bb[\int_{t^n_{i-1}}^{t^n_i} \int_{|z| > R} \B[ g\b(F^{n, i}_{s-} + f_{p+2}(s, z, \xi^n_i) \b) - g(F^{n, i}_{s-}) \B] \nu_s(\od z) \od s\bb]\\
		& \qquad - \bEn\bb[\int_{t^n_{i-1}}^{t^n_i} \int_{|z| > R} \b[ g(f_{p+2}(s, z,  \xi^n_i) ) - g(0) \b] \nu_s(\od z) \od s\bb] \bb|\\
		& \le \int_0^T \nq \int_{|z| >R} \bEn\B[\B|  g\b(F^{n, i(s, n)}_{s-} + f_{p+2}(s, z, \xi^n_{i(s, n)})\b) - g\b(F^{n, i(s, n)}_{s-}\b) -   g(f_{p+2}(s, z, \xi^n_{i(s, n)})) + g(0)  \B| \B] \nu_s(\od z) \od s\\
		& =: \int_0^T \nq \int_{|z| >R} \bEn[G^{\trm{L}}_n(s, z)] \nu_s(\od z) \od s.
	\end{align*}
	It is obvious that $G^{\trm{L}}_n$ is uniformly bounded by $4\|g\|_{B_b(\bR^m)}$. Moreover, using the Lipschitzian of $g$, the boundedness of $f_{p+2}$ and \eqref{eq:convergence-in-probability-F-2} and \eqref{assumption:Levy-measure}, we may apply the dominated convergence theorem to obtain   
	\begin{align*}
		\int_0^T \nq \int_{|z| >R} \bEn[G^{\trm{L}}_n(s, z)] \nu_s(\od z) \od s \xrightarrow{n \to \infty} 0.
	\end{align*}
	Combining the arguments above yields \eqref{eq:prop:discrete-time-convergence}. The consequence follows from $\int_{\rho_n(t)}^t |\Psi_f(g)(s)| \od s \to 0$ as $n \to \infty$.
\end{proof}

We apply \cref{prop:limit-test-function-g} in the next three lemmas, to prove convergence of the drift part, the modified diffusion part, and the jump part of the semimartingale characteristics as aforementioned in \cref{rema:semimartingale-array}(1).

\begin{lemm}\label{lem:convergence-drift}
	For $\frb^{\cX}$ in \cref{lemm:characteristic-limit} and any $t \in [0, \infty)$, 
	\begin{align}\label{eq:limit-drift}
		I_{\eqref{eq:limit-drift}}: = \sup_{0 \le s \le t} \bb|\sum_{i=1}^{\sigma_n(s)} \bEn[\frh(\Delta^n_i \cX^n)] - \frb^{\cX}_{s \wedge T}\bb| \xrightarrow{n \to \infty} 0.
	\end{align}
\end{lemm}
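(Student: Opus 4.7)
My plan is to deduce \eqref{eq:limit-drift} by applying \cref{prop:limit-test-function-g} componentwise to each $\frh^{(k)} \in C^2_b(\bR^m)$, which is permissible thanks to the standing assumption that every component of the truncation function lies in $C^2_b(\bR^m)$.

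The first step is to compute $\Psi_f(\frh^{(k)})$ and check that it coincides with the integrand of the $k$-th component of $\frb^{\cX}$ as given in \cref{lemm:characteristic-limit}. Since $\frh(z) = z$ on a neighbourhood of the origin, one has $\frh^{(k)}(0) = 0$, $\nabla \frh^{(k)}(0) = \mathrm{e}_k$ (the $k$-th standard basis vector), and $\nabla^2 \frh^{(k)}(0) = 0$. Plugging these into \eqref{eq:defi-Phi} kills the Brownian (Hessian) term, turns the drift term into $f_0^{(k)}(s,u)$, and leaves the small and large jump terms in the exact form appearing in $\frb^{\cX}$. Thus, for every $s \in [0,T]$,
\begin{align*}
\frb^{\cX}_s = \int_0^s \Psi_f(\frh)(r)\,\od r,
\end{align*}
with the convention that $\Psi_f(\frh)$ denotes the vector with components $\Psi_f(\frh^{(k)})$.

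Next I would split the expression inside the supremum in \eqref{eq:limit-drift} into two pieces, using $\sum_{i=1}^{\sigma_n(s)} \int_{t^n_{i-1}}^{t^n_i} \Psi_f(\frh)(r)\, \od r = \int_0^{t^n_{\sigma_n(s)}} \Psi_f(\frh)(r)\, \od r$:
\begin{align*}
\sum_{i=1}^{\sigma_n(s)} \bEn[\frh(\Delta^n_i \cX^n)] - \frb^{\cX}_{s \wedge T}
= R^n_1(s) + R^n_2(s),
\end{align*}
where
\begin{align*}
R^n_1(s) &:= \sum_{i=1}^{\sigma_n(s)} \bb(\bEn[\frh(\Delta^n_i\cX^n)] - \int_{t^n_{i-1}}^{t^n_i}\Psi_f(\frh)(r)\,\od r\bb), \\
R^n_2(s) &:= -\int_{t^n_{\sigma_n(s)}}^{s \wedge T}\Psi_f(\frh)(r)\,\od r.
\end{align*}
Uniformly in $s$, the term $|R^n_1(s)|$ is dominated by the full sum over $i=1,\ldots,n$ of the absolute deviations, which vanishes as $n\to\infty$ by \cref{prop:limit-test-function-g} applied to each $\frh^{(k)}$.

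The last step controls $R^n_2$. When $s \ge T$ one has $t^n_{\sigma_n(s)} = T = s \wedge T$ and the integral vanishes; when $s < T$, the interval of integration is contained in $(t^n_{\sigma_n(s)}, t^n_{\sigma_n(s)+1}]$, whose length is at most $|\Pi_n|$. Hence $\sup_{s \in [0,t]} |R^n_2(s)| \le \max_{1 \le i \le n} \int_{t^n_{i-1}}^{t^n_i} |\Psi_f(\frh)(r)|\,\od r$, which tends to $0$ by the absolute continuity of the Lebesgue integral, since $\Psi_f(\frh) \in \bfL^1([0,T];\bR^m)$ (as already observed in \textit{Step 1} of the proof of \cref{prop:limit-test-function-g}) and $|\Pi_n| \to 0$. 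No step here is really an obstacle; the only mildly delicate point is the $\bfL^1$-uniform continuity argument, which is standard.
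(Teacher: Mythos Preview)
Your proposal is correct and follows essentially the same route as the paper: work componentwise, identify $\frb^{\cX,(k)}_s = \int_0^s \Psi_f(\frh^{(k)})(r)\,\od r$, split into the sum of absolute deviations (controlled by \cref{prop:limit-test-function-g}, using $\frh^{(k)}(0)=0$) and the residual integral over $[t^n_{\sigma_n(s)}, s\wedge T]=[\rho_n(s),s\wedge T]$ (controlled by uniform continuity of the Lebesgue integral). The paper's proof is organized identically; your extra remark computing $\nabla \frh^{(k)}(0)$ and $\nabla^2 \frh^{(k)}(0)$ explicitly just spells out what the paper records as the one-line observation $\frb^{\cX,(k)}_t = \int_0^t \Psi_f(\frh^{(k)})(s)\,\od s$.
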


\begin{proof}
	It is sufficient to verify the convergence for any $k$-th coordinate, $k = 1, \ldots, m$ and $t \in [0, T]$. Observe that
	\begin{align*}
		\frb^{\cX, (k)}_t = \int_0^t \Psi_{f}(\frh^{(k)})(s) \od s
	\end{align*}
	for $\Psi_f(\frh^{(k)})$ associated with $\frh^{(k)}$ introduced in \cref{prop:limit-test-function-g}.
	Then we get
	\begin{align*}
		& \sup_{0 \le s \le t} \bb|\sum_{i=1}^{\sigma_n(s)} \bEn[\frh^{(k)}(\Delta^n_i \cX^n)] - \frb^{\cX, (k)}_s\bb| \\
		& \le \sup_{0 \le s \le t} \bb|\sum_{i=1}^{\sigma_n(s)} \bEn[\frh^{(k)}(\Delta^n_i \cX^n)] - \sum_{i=1}^{\sigma_n(s)} \int_{t^n_{i-1}}^{t^n_i} \Psi_f(\frh^{(k)})(r) \od r\bb| + \sup_{0 \le s \le t} \bb| \int_{\rho_n(s)}^s \Psi_f(\frh^{(k)})(r) \od r \bb|\\
		& \le  \sum_{i=1}^{n} \bb| \bEn[\frh^{(k)}(\Delta^n_i \cX^n)] -  \int_{t^n_{i-1}}^{t^n_i} \Psi_f(\frh^{(k)})(r) \od r\bb| + \max_{1 \le i \le n}  \int_{t^n_{i-1}}^{t^n_i} |\Psi_f(\frh^{(k)})(r)| \od r.
	\end{align*}
	The first term on the right-hand side above converges to $0$ by applying \cref{prop:limit-test-function-g} for $\frh^{(k)} \in C^2_b(\bR^m)$. For the second term, since $t \mapsto \int_0^t |\Psi_f(\frh^{(k)})(r)| \od r$ is uniformly continuous on $[0, T]$ and $\max_{1 \le i \le n}|t^n_i - t^n_{i-1}| \to 0$, it implies that
	\begin{align*}
		\max_{1 \le i \le n} \int_{t^n_{i-1}}^{t^n_i} |\Psi_f(\frh^{(k)})(r)| \od r \xrightarrow{n \to \infty} 0.
	\end{align*} 
	Therefore, $I_{\eqref{eq:limit-drift}} \to 0$ as $n \to \infty$.
\end{proof}

\begin{lemm}\label{lem:convergence-diffusion}
	For $C^{\cX}$ given in \cref{lemm:characteristic-limit}, for any $t \in [0, \infty)$ and $k, k' = 1, \ldots, m$, one has
	\begin{align}
		I_{\eqref{eq:limit-diffusion-test-1}}&:= \sum_{i=1}^{\sigma_n(t)} \bEn[\frh^{(k)}(\Delta^n_i \cX^n)] \bEn[\frh^{(k')}(\Delta^n_i \cX^n)] \xrightarrow{n \to \infty} 0, \label{eq:limit-diffusion-test-1}\\
		I_{\eqref{eq:limit-diffusion-test-2}} &:= \sum_{i=1}^{\sigma_n(t)} \bEn[(\frh^{(k)} \frh^{(k')})(\Delta^n_i \cX^n)] \xrightarrow{n \to \infty}  C^{\cX, (k, k')}_{t \wedge T} + \int_0^{t \wedge T} \nq \int_{\bR^m_0} (\frh^{(k)} \frh^{(k')})(y) \nu^{\cX} (\od s, \od y). \label{eq:limit-diffusion-test-2}
	\end{align}
\end{lemm}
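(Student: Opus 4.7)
The plan is to apply \cref{prop:limit-test-function-g} to suitable test functions $g \in C^2_b(\bR^m)$ and to exploit the local linearity of the truncation function $\frh$ at the origin: since $\frh(z)=z$ in a neighbourhood of $0$, one has $\frh(0)=0$, $(\partial_j \frh^{(k)})(0)=\delta_{jk}$, and $(\partial^2_{j,j'}\frh^{(k)})(0)=0$ for all $j,j',k$.

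For \eqref{eq:limit-diffusion-test-1}, first I would estimate
\[
I_{\eqref{eq:limit-diffusion-test-1}} \le \B(\max_{1 \le i \le n} |\bEn[\frh^{(k)}(\Delta^n_i \cX^n)]|\B) \sum_{i=1}^{n} |\bEn[\frh^{(k')}(\Delta^n_i \cX^n)]|.
\]
Since $\frh^{(k')}(0)=0$, applying \cref{prop:limit-test-function-g} to $\frh^{(k')}\in C^2_b(\bR^m)$ together with the triangle inequality yields, for $n$ large, $\sum_{i=1}^n |\bEn[\frh^{(k')}(\Delta^n_i \cX^n)]| \le \int_0^T |\Psi_f(\frh^{(k')})(s)|\od s + o(1) < \infty$, so the right factor is uniformly bounded in $n$. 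For the left factor the same proposition gives $\max_i |\bEn[\frh^{(k)}(\Delta^n_i \cX^n)] - \int_{t^n_{i-1}}^{t^n_i}\Psi_f(\frh^{(k)})(s)\od s| \to 0$ (the max is bounded by the sum), while $\Psi_f(\frh^{(k)})\in L^1([0,T])$ combined with absolute continuity of the Lebesgue integral and $|\Pi_n|\to 0$ forces $\max_i \int_{t^n_{i-1}}^{t^n_i}|\Psi_f(\frh^{(k)})(s)|\od s \to 0$. Hence $\max_i |\bEn[\frh^{(k)}(\Delta^n_i \cX^n)]|\to 0$, and \eqref{eq:limit-diffusion-test-1} follows.

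For \eqref{eq:limit-diffusion-test-2}, set $g := \frh^{(k)}\frh^{(k')}\in C^2_b(\bR^m)$ (by Leibniz). The local linearity of $\frh$ at $0$ gives $g(0)=0$, $\nabla g(0)=0$, and $(\partial^2_{j,j'}g)(0)=\delta_{jk}\delta_{j'k'}+\delta_{jk'}\delta_{j'k}$. Substituting into \eqref{eq:defi-Phi}, the drift term cancels, the quadratic term symmetrises to $\int_{[0,1]^d}\sum_{l=1}^p(f_l^{(k)}f_l^{(k')})(s,u)\od u$, and the Taylor compensator drops out of the small-jump integrand, yielding
\begin{align*}
\Psi_f(g)(s) &= \int_{[0,1]^d}\sum_{l=1}^p(f_l^{(k)}f_l^{(k')})(s,u)\od u \\
&\quad + \int_{\{0<|z|\le R\}\times[0,1]^d} g(f_{p+1}(s,z,u)|z|)\,\nu_s(\od z)\od u \\
&\quad + \int_{\{|z|>R\}\times[0,1]^d} g(f_{p+2}(s,z,u))\,\nu_s(\od z)\od u.
\end{align*}
Integrating over $s\in [0,t\wedge T]$, the first summand is exactly $C^{\cX,(k,k')}_{t\wedge T}$ from \cref{lemm:characteristic-limit}, while the sum of the remaining two summands equals $\int_0^{t\wedge T}\int_{\bR^m_0}(\frh^{(k)}\frh^{(k')})(y)\,\nu^{\cX}(\od s,\od y)$ by \cref{remark:integration-nu-fM}. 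Finiteness of this last integral follows from boundedness of $g$ combined with \eqref{rema:levy-measure-characteristic} away from the origin and the bound $|g(y)|\le c\,|y|^2$ near the origin (since $g\in C^2$, $g(0)=0$, $\nabla g(0)=0$), together with $\int\int |y|^2\1_{\{|y|\le 1\}}\nu^{\cX}(\od s,\od y)<\infty$ by \cref{remark:integration-nu-fM} and \eqref{assumption:Levy-measure}. Applying \cref{prop:limit-test-function-g} to $g$ then delivers \eqref{eq:limit-diffusion-test-2}.

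The main technical point I expect to require care is the max-bound in the proof of \eqref{eq:limit-diffusion-test-1}: the density $\Psi_f(\frh^{(k)})$ is only $L^1([0,T])$ and may fail to be bounded (because $s\mapsto\nu_s$ need not be, for instance, $\nu_s(\{|z|>R\})$ can be unbounded in $s$), so one cannot use a crude $O(|\Pi_n|)$ estimate; the remedy through absolute continuity of the Lebesgue integral is standard but essential. Everything else, including the identification of $\Psi_f(\frh^{(k)}\frh^{(k')})$ with the stated characteristics, reduces to algebraic bookkeeping.
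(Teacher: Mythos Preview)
Your proposal is correct and follows essentially the same route as the paper. For \eqref{eq:limit-diffusion-test-2} your argument is identical to the paper's: apply \cref{prop:limit-test-function-g} to $g=\frh^{(k)}\frh^{(k')}$ and use $g(0)=0$, $\nabla g(0)=0$, $\partial^2_{j,j'}g(0)=\delta_{jk}\delta_{j'k'}+\delta_{jk'}\delta_{j'k}$ to simplify $\Psi_f(g)$, then identify via \cref{remark:integration-nu-fM}. For \eqref{eq:limit-diffusion-test-1} the paper instead writes $\sum_i a_i b_i=\sum_i(a_i-\alpha_i)b_i+\sum_i(b_i-\beta_i)\alpha_i+\sum_i\alpha_i\beta_i$ with $\alpha_i=\int_{t^n_{i-1}}^{t^n_i}\Psi_f(\frh^{(k)})$, $\beta_i=\int_{t^n_{i-1}}^{t^n_i}\Psi_f(\frh^{(k')})$, and bounds each piece using $\|\frh\|_\infty$, \cref{prop:limit-test-function-g}, and $\max_i\int_{t^n_{i-1}}^{t^n_i}|\Psi_f(\frh^{(k')})|\to 0$; your $(\max_i|a_i|)(\sum_i|b_i|)$ estimate is a minor repackaging of the same ingredients and is equally valid.
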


\begin{proof} It suffices to show the convergences for $t \in [0, T]$.
	For $I_{\eqref{eq:limit-diffusion-test-1}}$, we first express
	\begin{align*}
		I_{\eqref{eq:limit-diffusion-test-1}} & =  \sum_{i=1}^{\sigma_n(t)} \bb( \bEn[\frh^{(k)}(\Delta^n_i \cX^n)] - \int_{t^n_{i-1}}^{t^n_i} \Psi_f(\frh^{(k)})(s) \od s \bb) \bEn[\frh^{(k')}(\Delta^n_i \cX^n)] \\
		& \quad +  \sum_{i=1}^{\sigma_n(t)} \bb( \bEn[\frh^{(k')}(\Delta^n_i \cX^n)] - \int_{t^n_{i-1}}^{t^n_i} \Psi_f(\frh^{(k')})(s) \od s \bb) \int_{t^n_{i-1}}^{t^n_i} \Psi_f(\frh^{(k)})(s) \od s\\
		&  \quad + \sum_{i=1}^{\sigma_n(t)} \bb(\int_{t^n_{i-1}}^{t^n_i} \Psi_f(\frh^{(k)})(s) \od s \bb) \bb( \int_{t^n_{i-1}}^{t^n_i} \Psi_f(\frh^{(k')})(s) \od s\bb).
	\end{align*}
	Hence, the triangle inequality yields
	\begin{align*}
		|I_{\eqref{eq:limit-diffusion-test-1}}| & \le \|\frh^{(k')}\|_{B_b(\bR^m)} \sum_{i=1}^{n} \bb| \bEn[\frh^{(k)}(\Delta^n_i \cX^n)] - \int_{t^n_{i-1}}^{t^n_i} \Psi_f(\frh^{(k)})(s) \od s \bb| \\
		& \quad +  \bb(\int_0^T |\Psi_f(\frh^{(k)})(s)| \od s\bb) \sum_{i=1}^{n} \bb| \bEn[\frh^{(k')}(\Delta^n_i \cX^n)] - \int_{t^n_{i-1}}^{t^n_i} \Psi_f(\frh^{(k')})(s) \od s \bb|\\
		&  \quad +  \bb(\int_0^T |\Psi_f(\frh^{(k)})(s)|  \od s\bb) \max_{1 \le i \le n} \int_{t^n_{i-1}}^{t^n_i} |\Psi_f(\frh^{(k')})(s)| \od s.
	\end{align*}
	Applying \cref{prop:limit-test-function-g} for $\frh^{(k)}, \frh^{(k')} \in C^2_b(\bR^m)$, we obtain that the sums $\sum_{i = 1}^{n}$ in the first two terms on the right-hand side converge to $0$ as $n \to \infty$.
	Since $\max_{1 \le i \le n} \int_{t^n_{i-1}}^{t^n_i} |\Psi_f(\frh^{(k')})(s)| \od s \to 0$, we derive $I_{\eqref{eq:limit-diffusion-test-1}} \to 0$ as desired.
	
	\smallskip
	
	For $I_{\eqref{eq:limit-diffusion-test-2}}$, since $\frh^{(k)} \frh^{(k')} \in C^2_b(\bR^m)$ and $\frh^{(k)} \frh^{(k')}(z) = z^{(k)} z^{(k')}$ around $0$, the function $\Psi_f(\frh^{(k)} \frh^{(k')})$ given in \eqref{eq:defi-Phi} can be explicitly written as
	\begin{align*}
		\Psi_f(\frh^{(k)} \frh^{(k')})(s) & =  \sum_{l=1}^p \int_{[0, 1]^d}  (f^{(k)}_l f^{(k')}_l)(s, u)  \od u \\
		& \quad + \int_{\{0 < |z| \le R\} \times [0, 1]^d} (\frh^{(k)} \frh^{(k')})(f_{p+1}(s, z, u)|z|)  \nu_s(\od z) \od u\\
		& \quad + \int_{\{|z| >R\} \times [0, 1]^d} (\frh^{(k)} \frh^{(k')})(f_{p+2}(s, z, u))  \nu_s(\od z) \od u
	\end{align*}
	so that
	\begin{align*}
		\int_0^t \Psi_f(\frh^{(k)} \frh^{(k')})(s) \od s = C^{\cX, (k, k')}_t + \int_0^t \nq \int_{\bR^m_0} (\frh^{(k)} \frh^{(k')})(y) \nu^{\cX}(\od s, \od y)
	\end{align*}
	where we apply \cref{remark:integration-nu-fM} for the $\nu^{\cX}$-integrable function $\frh^{(k)} \frh^{(k')} \1_{[0, T]}$. Hence, \eqref{eq:limit-diffusion-test-2} follows directly from the consequence in \cref{prop:limit-test-function-g}.
\end{proof}

To investigate the jump part of the limiting process, we recall from \cite[p.395]{JS03} the family $C_2(\bR^m)$ of  bounded and continuous functions $g \colon \bR^m \to \bR$ with $g(0) = 0$ around $0$.

\begin{lemm}\label{lem:convergence-jump}
	For $\nu^{\cX}$ in \cref{lemm:characteristic-limit} and for any $g \in C_2(\bR^m)$, $t \in [0, \infty)$, one has
	\begin{align}\label{eq:convergence-jump}
		I^g_{\eqref{eq:convergence-jump}}: = \bb| \sum_{i=1}^{\sigma_n(t)}  \bEn[g(\Delta^n_i \cX^n)] -  \int_0^{t\wedge T} \nq \int_{\bR^m_0} g(y) \nu^{\cX}(\od s, \od y)\bb| \xrightarrow{n \to \infty} 0.
	\end{align}
\end{lemm}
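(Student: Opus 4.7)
The plan is to first establish \eqref{eq:convergence-jump} for a smooth sub-class of test functions as an immediate consequence of \cref{prop:limit-test-function-g}, and then extend to all of $C_2(\bR^m)$ by mollification. Throughout, abbreviate
\[
A_n(g) := \sum_{i=1}^{\sigma_n(t)} \bEn[g(\Delta^n_i \cX^n)], \qquad L(g) := \int_0^{t \wedge T} \nq \int_{\bR^m_0} g(y)\, \nu^{\cX}(\od s, \od y),
\]
so that $I^g_{\eqref{eq:convergence-jump}} = |A_n(g) - L(g)|$.

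First I would handle $g \in C^2_b(\bR^m)$ that vanishes on some ball $B(0, \delta)$. For such $g$, the values $g(0)$, $\nabla g(0)$ and $\nabla^2 g(0)$ all vanish, so the drift and diffusion summands in \eqref{eq:defi-Phi} disappear and \cref{remark:integration-nu-fM} identifies $\int_0^{t \wedge T} \Psi_f(g)(s)\od s$ with $L(g)$. Hence \cref{prop:limit-test-function-g} delivers $A_n(g) \to L(g)$ for every such $g$; call this smooth sub-class $\mathcal{G}_0$. A key by-product: for every $\delta > 0$, fixing $H_\delta \in \mathcal{G}_0$ with $0 \le H_\delta \le 1$, $H_\delta \equiv 0$ on $B(0, \delta/2)$ and $H_\delta \equiv 1$ on $B(0, \delta)^c$ (a standard smooth bump) gives $A_n(H_\delta) \to L(H_\delta) < \infty$, and in particular the uniform-in-$n$ tail control
\[
\sum_{i=1}^n \bPn(|\Delta^n_i \cX^n| \ge \delta) \le A_n(H_\delta) \le C_\delta := \sup_n A_n(H_\delta) < \infty.
\]

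For an arbitrary $g \in C_2(\bR^m)$ vanishing on $B(0, \kappa)$, the plan is to approximate by $g_{M, \epsilon} := (g\chi_M) * \varphi_\epsilon$, where $\chi_M \in C^\infty_c(\bR^m)$ is a smooth cutoff equal to $1$ on $\overline{B(0, M)}$ and $0$ off $B(0, 2M)$, and $\varphi_\epsilon$ is a standard mollifier supported in $B(0, \epsilon)$ with $\epsilon < \kappa/2$. Then $g_{M,\epsilon} \in C^\infty_c(\bR^m) \subset \mathcal{G}_0$ vanishes on $B(0, \kappa/2)$, and $g_{M,\epsilon} \to g\chi_M$ uniformly as $\epsilon \to 0$ because $g\chi_M$ is continuous with compact support. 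The triangle inequality gives
\[
|A_n(g) - L(g)| \le |A_n(g - g\chi_M)| + |A_n(g\chi_M - g_{M, \epsilon})| + |A_n(g_{M, \epsilon}) - L(g_{M, \epsilon})| + |L(g_{M, \epsilon}) - L(g)|,
\]
and the pointwise dominations $|g - g\chi_M| \le \|g\|_{B_b(\bR^m)} H_M$ (both sides supported in $\{|y| > M\}$) and $|g\chi_M - g_{M, \epsilon}| \le \|g\chi_M - g_{M, \epsilon}\|_{B_b(\bR^m)} H_\kappa$ (both sides supported in $\{|y| > \kappa/2\}$) let me control the first two summands via the Step~1 tail quantities.

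The limits are then taken in the order $n \to \infty$ (killing the third summand by Step~1 applied to $g_{M, \epsilon} \in \mathcal{G}_0$), then $\epsilon \to 0$ (killing the second via uniform convergence combined with $C_\kappa < \infty$, and reducing the fourth to $|L(g\chi_M) - L(g)|$ by dominated convergence against the finite measure $\nu^{\cX}|_{[0, T] \times \{|y| \ge \kappa\}}$), and finally $M \to \infty$. At this stage both $\limsup_n A_n(H_M) = L(H_M) \le \nu^{\cX}([0, T] \times \{|y| \ge M/2\})$ and $|L(g\chi_M) - L(g)| \le \|g\|_{B_b(\bR^m)} \nu^{\cX}([0, T] \times \{|y| > M\})$ decay to zero by \eqref{rema:levy-measure-characteristic} and continuity from above of $\nu^{\cX}$ on each $\{|y| \ge \eta\}$. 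The chief obstacle throughout is precisely the need to control the pre-limit tails $\sum_i \bPn(|\Delta^n_i \cX^n| > \delta)$ uniformly in $n$; this is exactly what the smooth-bump reduction in Step~1 furnishes.
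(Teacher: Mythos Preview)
Your argument is correct and follows the same overall template as the paper: first invoke \cref{prop:limit-test-function-g} for $g\in C^2_b(\bR^m)$ vanishing near the origin (where $\Psi_f(g)$ collapses to the jump terms and \cref{remark:integration-nu-fM} identifies the limit with $L(g)$), then reach general $g\in C_2(\bR^m)$ by truncation and mollification, controlling the approximation errors via a uniform-in-$n$ bound on $\sum_i \bPn(|\Delta^n_i\cX^n|\ge\kappa)$.

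The one genuine difference is in how that tail bound is obtained. The paper's Step~1 derives it by hand, splitting $\Delta^n_i\cX^n$ into its drift, Brownian, small-jump and large-jump pieces and applying Markov's inequality and It\^o's isometry to each. You instead bootstrap it from the already-established convergence: applying \cref{prop:limit-test-function-g} to a smooth bump $H_\delta\in\mathcal{G}_0$ gives $A_n(H_\delta)\to L(H_\delta)<\infty$, hence $\sup_n A_n(H_\delta)<\infty$, which dominates the tail sum. This is more economical and avoids repeating moment estimates; the paper's direct route, on the other hand, yields an explicit quantitative bound.

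One small slip: the pointwise domination $|g\chi_M-g_{M,\epsilon}|\le\|g\chi_M-g_{M,\epsilon}\|_{B_b(\bR^m)}H_\kappa$ fails on the annulus $\{\kappa/2\le|y|<\kappa\}$, where the left side can be positive but $H_\kappa$ may be arbitrarily small. Replace $H_\kappa$ by $H_{\kappa/2}$ (which equals $1$ on $\{|y|\ge\kappa/2\}\supset\supp(g\chi_M-g_{M,\epsilon})$); the rest of the argument is unaffected. Also note that $A_n$ sums up to $\sigma_n(t)$, not $n$, so your displayed tail bound should read $\sum_{i=1}^{\sigma_n(t)}$; this is harmless since only that sum is needed downstream.
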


\begin{proof}  We only need to prove for $t \in [0, T]$.
	\smallskip
	
	\textbf{\textit{Step 1.}}
	Recall $\Delta^n_i \cX^n$ from \eqref{eq:decomposition:discrete-time-integral}. We show that for any $\kappa >0$,
	\begin{align}\label{eq:jump-part-tail-estimate-2}
		\sum_{i=1}^n \bPn(\{|\Delta^n_i \cX^n|  \ge \kappa \})  & \le  \frac{9}{\kappa^2}  \bb(pT \max_{1 \le l \le p} \|f_l\|_{B_b(\bfU; \bR^m)}^2  + \|f_{p+1}\|^2_{B_b(\bfV; \bR^m)}\int_0^T \nq \int_{0 < |z| \le R} |z|^2 \nu_s(\od z) \od s\bb)   \notag \\
		& \quad + \frac{3 T }{\kappa} \|f_0\|_{B_b(\bfU; \bR^m)} + \frac{3}{\kappa} \|f_{p+2}\|_{B_b(\bfV; \bR^m)} \int_0^T \nq \int_{|z|>R} \nu_s(\od z) \od s.
	\end{align}
	Indeed, by the triangle inequality we get
	\begin{align}\label{eq:jump-part-tail-estimate}
		&\sum_{i=1}^n \bPn(\{|\Delta^n_i \cX^n| \ge \kappa\}) \notag \\
		& \le \sum_{i=1}^n \bPn\bb(\bb\{\bb|\int_{t^n_{i-1}}^{t^n_i} f_0(s, \xi^n_i) \od s \bb| \ge \frac{\kappa}{3} \bb\}\bb) \notag \\
		& \quad + \sum_{i=1}^n \bPn\bb(\bb\{\bb|\sum_{l=1}^p \int_{t^n_{i-1}}^{t^n_i} f_l(s, \xi^n_i) \od B^{(l)}_s + \int_{t^n_{i-1}}^{t^n_i} \int_{0 < |z| \le R} f_{p+1}(s, z, \xi^n_i) |z| \tilde N(\od s, \od z)\bb| \ge \frac{\kappa}{3}\bb\}\bb) \notag \\
		& \quad + \sum_{i=1}^n \bPn\bb(\bb\{\bb| \int_{t^n_{i-1}}^{t^n_i} \int_{|z| > R} f_{p+2}(s, z, \xi^n_{i}) N(\od s, \od z)\bb| \ge \frac{\kappa}{3} \bb\}\bb) \notag\\
		& =: I_{\eqref{eq:jump-part-tail-estimate}} + I\!I_{\eqref{eq:jump-part-tail-estimate}}  + I\! I \!I_{\eqref{eq:jump-part-tail-estimate}}.
	\end{align}
	For the first term, Markov's inequality yields
	\begin{align*}
		I_{\eqref{eq:jump-part-tail-estimate}} & \le \frac{3}{\kappa} \sum_{i=1}^n \bEn\bb[\bb|\int_{t^n_{i-1}}^{t^n_i} f_0(s, \xi^n_i)\od s \bb|\bb] \le \frac{3 T }{\kappa} \|f_0\|_{B_b(\bfU; \bR^m)}.
	\end{align*}
	For the second term, applying the Markov's inequality and It\^o's isometry we get
	\begin{align*}
		I\!I_{\eqref{eq:jump-part-tail-estimate}} &  \le \frac{9}{\kappa^2}  \sum_{i=1}^n \bEn\bb[\bb| \sum_{l=1}^p \int_{t^n_{i-1}}^{t^n_i} f_l(s, \xi^n_i) \od B^{(l)}_s + \int_{t^n_{i-1}}^{t^n_i} \int_{0 < |z| \le R} f_{p+1}(s, z, \xi^n_i) |z| \tilde N(\od s, \od z) \bb|^2 \bb]\\
		& = \frac{9}{\kappa^2}  \sum_{i=1}^n \bEn\bb[ \sum_{l=1}^p \int_{t^n_{i-1}}^{t^n_i} |f_l(s, \xi^n_i)|^2 \od s + \int_{t^n_{i-1}}^{t^n_i} \int_{0 < |z| \le R} |f_{p+1}(s, z,\xi^n_i)|^2 |z|^2 \nu_s(\od z) \od s \bb]\\
		& \le \frac{9}{\kappa^2}  \bb(pT \max_{1 \le l \le p} \|f_l\|_{B_b(\bfU; \bR^m)}^2  + \|f_{p+1}\|^2_{B_b(\bfV; \bR^m)}\int_0^T \nq \int_{0 < |z| \le R} |z|^2 \nu_s(\od z) \od s\bb).
	\end{align*}
	For the third term, using Markov's inequality we obtain
	\begin{align*}
		I \! I  \! I_{\eqref{eq:jump-part-tail-estimate}}  & \le  \frac{3}{\kappa} \sum_{i=1}^n \bEn\bb[\bb| \int_{t^n_{i-1}}^{t^n_i} \int_{|z| > R} f_{p+2}(s, z, \xi^n_{i}) N(\od s, \od z)\bb|\bb] \\
		& \le \frac{3}{\kappa} \|f_{p+2}\|_{B_b(\bfV; \bR^m)} \sum_{i=1}^n  \bEn\bb[ \int_{t^n_{i-1}}^{t^n_i} \int_{|z| >R} N(\od s, \od z)\bb]\\
		& =  \frac{3}{\kappa} \|f_{p+2}\|_{B_b(\bfV; \bR^m)} \int_0^T \nq \int_{|z|>R} \nu_s(\od z) \od s.
	\end{align*}
	Hence, combining those four estimates yields \eqref{eq:jump-part-tail-estimate-2}.
	
	\smallskip
	
	\textbf{\textit{Step 2.}}  Since $g \in C_2(\bR^m)$, there is an $r_g >0$ such that $g = 0$ on the open ball  $B_m(r_g)$. Then we use  \cref{remark:integration-nu-fM} to obtain that
	\begin{align*}
		&\int_0^T \nq \int_{|y| \ge r_g} |g(y)| \nu^{\cX}(\od s, \od y)  \le \|g\|_{B_b(\bR^m)} 		\int_0^T \nq \int_{|y| \ge r_g} \nu^{\cX}(\od s, \od y)  < \infty.
	\end{align*}
	Hence, the integral on the right-hand side of \eqref{eq:convergence-jump} finitely exists.
	
	We now only prove \eqref{eq:convergence-jump} in the case $0\le R < \infty$ as the case $R = \infty$ is analogous. Let $\ep >0$ and $\theta > r_g \vee R^2$. Since $g$ is continuous and bounded, there exists a continuous function $g_{\theta}$ with compact support such that 
	\begin{align*}
		\|g_{\theta}\|_{B_b(\bR^m)} \le \|g\|_{B_b(\bR^m)} \quad \trm{and} \quad g_{\theta} = g \; \trm{ on } B_m(\theta).
	\end{align*}
	Moreover, by convolution approximation, we can find a $g_{\ep, \theta} \in C_2(\bR^m) \cap C^2_c(\bR^m)$ such that
	\begin{align*}
		g_{\ep, \theta} = g_{\theta} = 0 \; \trm{ on } B_m(r_g/2), \quad \trm{and}\quad \|g_{\ep, \theta} -  g_{\theta}\|_{B_b(\bR^m)} \le \ep.
	\end{align*}
	It follows from the linearity and the triangle inequality that
	\begin{align}\label{eq:jump-part:estimate}
		I^g_{\eqref{eq:convergence-jump}} & \le I^{g - g_{\theta}}_{\eqref{eq:convergence-jump}} + I^{g_{\theta} - g_{\ep, \theta}}_{\eqref{eq:convergence-jump}} + I^{g_{\ep, \theta}}_{\eqref{eq:convergence-jump}}.
	\end{align}
	Since $g_{\ep, \theta} \in C^2_c(\bR^m)$ takes value $0$ in a neighborhood of $0$, \cref{remark:integration-nu-fM} implies 
	\begin{align*}
		\int_0^t \Psi_f(g_{\ep, \theta}) (s) \od s & = \int_0^t \nq \int_{\bR^q_0 \times [0, 1]^d} [g(f_{p+1}(s, z, u)|z|)\1_{\{0 < |z| \le R\}} + g(f_{p+2}(s, z, u))\1_{\{0 < |z| \le R\}}] \nu_s(\od z) \od u \od s \\
		& = \int_0^t \nq \int_{\bR^m_0} g(y) \nu^{\cX}(\od s, \od y) 
	\end{align*} so that the consequence in \cref{prop:limit-test-function-g} verifies
	\begin{align*}
		I^{g_{\ep, \theta}}_{\eqref{eq:convergence-jump}} \xrightarrow{n \to \infty} 0.
	\end{align*}
	For $I^{g - g_{\theta}}_{\eqref{eq:convergence-jump}}$, one has
	\begin{align*}
		I^{g - g_{\theta}}_{\eqref{eq:convergence-jump}} & \le \sum_{i=1}^n \bEn[|(g - g_{\theta})(\Delta^n_i \cX^n)|]  + \int_0^T \nq \int_{\bR^m_0} |g(y) - g_{\theta}(y)| \nu^{\cX}(\od s, \od y)\\
		& \le \|g - g_{\theta}\|_{B_b(\bR^m)} \bb( \sum_{i=1}^n \bPn(\{|\Delta^n_i \cX^n| \ge \theta\}) + \int_0^T \nq \int_{|y| \ge \theta} \nu^{\cX}(\od s, \od y)\bb).
	\end{align*}
	We let $\kappa = \theta$  in \eqref{eq:jump-part-tail-estimate-2}  to find that $\sum_{i=1}^n \bPn(\{|\Delta^n_i \cX^n| \ge \theta\}) \to 0$ uniformly in $n$ as $\theta \to \infty$. Moreover, it follows from \eqref{rema:levy-measure-characteristic} that $\int_0^T \nq \int_{|y| \ge \theta} \nu^{\cX}(\od s, \od y) \to 0$ as $\theta \to \infty$ which thus yields
	\begin{align*}
		I^{g - g_{\theta}}_{\eqref{eq:convergence-jump}} \to 0 \quad \trm{uniformly in } n \trm{ as } \theta \to \infty.
	\end{align*}
	For $I^{g_{\theta} - g_{\ep, \theta}}_{\eqref{eq:convergence-jump}}$, one has
	\begin{align*}
		I^{g_{\theta} - g_{\ep, \theta}}_{\eqref{eq:convergence-jump}} & \le \sum_{i=1}^n \bEn[|(g_{\theta} - g_{\ep, \theta})(\Delta^n_i \cX^n)|]  + \int_0^T \nq \int_{\bR^m_0} |g_{ \theta}(y) - g_{\ep, \theta}(y)| \nu^{\cX}(\od s, \od y)\\
		& \le \|g_{\theta} - g_{\ep, \theta}\|_{B_b(\bR^m)} \bb( \sum_{i=1}^n \bPn(\{|\Delta^n_i \cX^n| \ge r_g/2\}) + \int_0^T \nq \int_{|y| \ge r_g/2} \nu^{\cX}(\od s, \od y)\bb)\\
		& \le \ep \bb( \sum_{i=1}^n \bPn(\{|\Delta^n_i \cX^n| \ge r_g/2\}) + \int_0^T \nq \int_{|y| \ge r_g/2} \nu^{\cX}(\od s, \od y)\bb).
	\end{align*}
	Choosing $\kappa = r_g/2$ in \eqref{eq:jump-part-tail-estimate-2} and using \eqref{rema:levy-measure-characteristic} we obtain
	\begin{align*}
		I^{g_{\theta} - g_{\ep, \theta}}_{\eqref{eq:convergence-jump}} \to 0 \quad \trm{uniformly over } n \trm{ as } \ep \to 0.
	\end{align*}
	Since $\theta$ can be chosen arbitrarily large and $\ep>0$  arbitrarily small, we derive from \eqref{eq:jump-part:estimate} the desired conclusion. 
\end{proof}

We can now finalize the \emph{proof of assertion (\ref{eq:weak-convergence-rho})}. Combining \cref{lem:convergence-drift,lem:convergence-diffusion,lem:convergence-jump} with \cref{lemm:characteristic-limit}, together with applying \cref{thm:weak-convergence-JS03}, we obtain that $(\sum_{i = 1}^{\sigma_n(t)} \Delta^n_i \cX^n)_{t \in [0, \infty)} \to (\cX_{t \wedge T})_{t \in [0, \infty)}
$ as $n \to \infty$
weakly in the Skorokhod topology on the space $\bD_\infty(\bR^m)$ of c\`adl\`ag functions $F \colon [0, \infty) \to \bR^m$ (see \cite{Bi99,JS03} for $\bD_\infty(\bR^m)$). Since $\cX$ has no fixed time of discontinuity, we use \cite[Theorem 16.7]{Bi99} to infer that $\cX^n_{\rho_n} = 	(\sum_{i = 1}^{\sigma_n(t)} \Delta^n_i \cX^n)_{t \in [0, T]} \xrightarrow{\scrD_T} (\cX_{t})_{t \in [0, T]}$ as $n \to \infty$.\qed

\appendix
\section{Background on martingale measures  and proofs for Subsection \ref{sec:discrete_sample_random_measure}}\label{app:random_measures}

Suppose $T \in (0, \infty)$ and let $\bI = [0, T]$ or $\bI = [0, \infty)$. Let $(\Omega, \cF, \bF,\bP)$  be a filtered probability space satisfying the usual conditions with $\bF = (\cF_t)_{t \in \bI}$. Denote by $\cP_\bF$ the predictable $\sigma$-field on $\Omega \times \bI$ associated with the filtration $\bF$.

\subsection{Background on martingale measures}\label{sec:martingale-measure}
Assume that $(E, d_E)$ is a complete and separable metric space equipped with its Borel $\sigma$-field $\cB(E)$. 

\begin{defi}[\cite{Wa86, KM90}] Assume $M \colon \Omega \times \bI \times \cB(E) \to \bR$.
	
	\begin{enumerate}
		\item $M$ is an \textit{$(\bF, \bP)$-martingale measure} on $\bI \times \cB(E)$ if the following conditions are satisfied: 
		\begin{enumerate}
			\item For $A \in \cB(E)$, $(M(t, A))_{t \in \bI}$ is an $\bfL^2(\bP)$-martingale adapted with $\bF$ and $M(0, A) = 0$;
			
			\item For $t \in \bI$ and disjoint $A, B \in \cB(E)$, one has $M(t, A \cup B) = M(t, A) + M(t, B)$ a.s.;
			
			\item There exists a non-decreasing sequence $(E_n)_{n \in \bN} \subseteq \cB(E)$ such that
			\begin{enumerate}
				\item $\cup_{n \in \bN} E_n = E$;
				
				\item For any $t \in \bI$, $\sup_{A \in \cB(E_n)} \|M(t, A)\|_{\bfL^2(\bP)} < \infty$;
				
				\item For any $t \in \bI$, $n \in \bN$, one has $\|M(t, A_k)\|_{\bfL^2(\bP)} \to 0$ for all decreasing sequence $(A_k)_{k \in \bN} \subseteq \cB(E_n)$ with $\cap_{k \in \bN} A_k = \emptyset$.
			\end{enumerate}
		\end{enumerate}
		\item An $(\bF, \bP)$-martingale measure $M$ is said to be \textit{orthogonal} if $M(\cdot, A) M(\cdot, B)$ is an $(\bF, \bP)$-martingale whenever $A, B \in \cB(E)$ with $A \cap B = \emptyset$.
		
		\item An $(\bF, \bP)$-martingale measure $M$ is said to be \textit{continuous} if $\bI\ni t \mapsto M(t, A)$ is continuous for all $A \in \cB(E)$.
	\end{enumerate} 
\end{defi}

It is obvious that, for a given $(\bF, \bP)$, a martingale measure on $[0, \infty) \times \cB(E)$ is also a martingale measure on $[0, T] \times \cB(E)$ by the restriction on $[0, T]$. Conversely, if $M$ is a martingale measure on $[0, T] \times \cB(E)$, then $\hat M(t, \cdot) : = M(t \wedge T, \cdot)$ is an $(\hat \bF, \bP)$-martingale measure on $[0, \infty) \times \cB(E)$, where $\hat \bF = (\cF_{t \wedge T})_{t \ge 0}$.

It is indicated by Walsh \cite{Wa86} (see also \cite[Theorem I-4]{KM90}) that if an $(\bF, \bP)$-martingale measure $M$ is orthogonal, then there is a random positive finite measure $\mu_M$ on $\cB(\bI \times E)$, which is $\bF$-predictable (i.e. $(\mu_M((0, t] \times A))_{t \in \bI}$ is $\bF$-predictable for all $A \in \cB(E)$), such that
\begin{align*}
	\mu_M((0, t] \times A) = \<M(\cdot, A)\>_t \quad \bP\trm{-a.s.,} \quad \forall (t, A) \in \bI \times \cB(E).
\end{align*} 
The measure $\mu_M$ is then called the \textit{intensity measure} of $M$. Moreover, for $t \in \bI$, $A, B \in \cB(E)$,
\begin{align*}
	\< M(\cdot, A), M(\cdot, B)\>_t = \< M(\cdot, A \cap B)\>_t = \mu_M((0, t] \times (A \cap B)) \quad \bP\trm{-a.s.}
\end{align*}

Let us briefly recall the construction of  stochastic integrals driving by an orthogonal martingale measure $M$ following the It\^o's approach (see \cite{KM90, Wa86}). Define
\begin{align*}
	\bfL^2(\bF, \mu_M) & : = \bb\{H: \cP_\bF \otimes \cB(E)/\cB(\bR) \trm{-measurable} \, \bb| \, \bE\bb[\int_{\bI\times E} H(t, x)^2 \mu_M(\od t, \od x)\bb] < \infty\bb\}.
\end{align*}
For a simple function $H(\omega, t, x) = \sum_{i=1}^n h_{i-1}(\omega) \1_{(t_{i-1}, t_i]}(t) \1_{A_i}(x)$ where $A_i \in \cB(E)$, $0 \le t_0 < t_1 < \cdots < t_n \in \bI$, $h_{i-1}$ is bounded and $\cF_{t_{i-1}}$-measurable, $n \in \bN$, we let
\begin{align*}
	\bdot{H}{M}(t, A) : = \sum_{i=1}^n h_{i-1}[M(t_i \wedge t, A \cap A_i) - M(t_{i-1} \wedge t, A \cap A_i)], \quad (t, A) \in \bI \times \cB(E).
\end{align*}
It is clear that $\bdot{H}{M}$ is an $(\bF, \bP)$-martingale measure and that $\bdot{H}{M}$ satisfies the isometry 
\begin{equation}\label{eq:isometry_martingale_meas}
	\bE\left[|\bdot{H}{M}(t, A)|^2\right]=  \bE\bb[\int_{\bI\times A} H(t, x)^2 \mu_M(\od t, \od x)\bb]. 
\end{equation}
 Since the family of simple functions is dense in $\bfL^2(\bF, \mu_M)$, one can extend $\bdot{H}{M}$ for $H \in \bfL^2(\bF, \mu_M)$ as usual to obtain a martingale measure which is also orthogonal with intensity $\mu_{\bdot{H}{M}}(\od t, \od x) = H(t, x)^2 \mu_M(\od t, \od x)$, see \cite[Theorem I-6]{KM90}. Moreover, \eqref{eq:isometry_martingale_meas} then also holds for $H\in\bfL^2(\bF, \mu_M)$. We often apply the integral notation
$$
\int_{(0, t] \times E} H(s, x) M(\od s, \od x):= \bdot{H}{M}(t,E).
$$

\begin{rema}\label{rema:integral-notation}
	For a martingale measure or an integer-valued random measure $M$ (in the sense of \cite[Definition II.1.3]{JS03}) and  a suitable integrand $H$, we denote the integral process $(\bdot{H}{M}) = ((\bdot{H}{M})_t)_{t \in \bI}$ via
	\begin{align*}
		(\bdot{H}{M})_t : = \int_{(0, t] \times E} H(s, x) M(\od s, \od x)
	\end{align*}
	Notice that the notation $\bdot{H}{M}$ (without brackets) as above stands for a martingale measure.
\end{rema}

\subsection{Proofs for  \cref{sec:discrete_sample_random_measure}}\label{sec:proofs-random-measure}
In this part we let $\bI = [0, T]$.

\subsubsection{Proof of \cref{lem:Bm_Pi}} Let $A \in \cB([0, 1]^d)$. By the definition, $M(0, A) = 0$. For $t \in (0, T]$, we can write
\begin{align*}
	M^{\Pi}_{B^{(l)}}(t, A) = \int_0^t \1_A \bb(\sum_{i=1}^n \1_{(t_{i-1}, t_i]}(s) \xi^\Pi_{t_i} \bb) \od B^{(l)}_s.
\end{align*}
Then, according to \cite[Proposition II-1]{KM90}, $M^\Pi_{B^{(l)}}$ is an orthogonal $(\bF^\Pi, \bP)$-martingale measure on $[0, T] \times \cB([0, 1]^d)$ with intensity $\mu_{B^{(l)}}^\Pi(\od s, \od x) = \delta_{\sum_{i=1}^n \1_{(t_{i-1}, t_i]}(s) \xi^\Pi_{t_i}} (\od x) \od s$. It is clear that $\mu^\Pi_{B^{(l)}} = M^\Pi_D$ as given in \eqref{eq:drift_Pi_measure}. It now suffices to prove the relation \eqref{eq:lem:Bm_Pi} on $(t_{i-1}, t_i]$
for any $\bF^\Pi$-predictable $Y$ satisfying $\bE\b[\int_{t_{i-1}}^{t_i} |Y_s(\xi^\Pi_{t_i})|^2 \od s\b] < \infty$. Assume $Y_s(u) = \sum_{j=1}^k h_{j-1} \1_{(r_{j-1}, r_j]}(s)\1_{A_j}(u)$ for $k \in \bN$, $t_{i-1} \le r_0 < r_1 < \cdots < r_k = t_i$, $A_j \in \cB([0, 1]^d)$, $h_{j-1}$ is bounded and $\cF^{\Pi}_{r_{j-1}}$-measurable. Then, by the definition of $M^{\Pi}_{B^{(l)}}$, one has, a.s.,
\begin{align*}
	&\int_{(t_{i-1}, t_i] \times [0, 1]^d} Y_s(u) M^{\Pi}_{B^{(l)}}(\od s, \od u)  = \sum_{j=1}^k  h_{j-1} \int_{r_{j-1}}^{r_j}  \1_{A_j}(\xi^\Pi_{t_i}) \od B^{(l)}_s = \int_{t_{i-1}}^{t_i} Y_s(\xi^\Pi_{t_i}) \od B^{(l)}_s.
\end{align*}
The conclusion for $Y \in \bfL^2(\bF^\Pi, M^\Pi_D)$ can be derived by a standard approximation argument where one notes that the It\^o isometry coincides for both integrals driven by $M^\Pi_{B^{(l)}}$ and $B^{(l)}$ above. \qed

\subsubsection{Proof of \cref{lem:jumps_Pi}}
By writing $Y = \max\{Y, 0\} - \max\{-Y, 0\}$, we may assume that $Y \ge 0$. By the definition of $M^\Pi_J$, one has, a.s., 
\begin{align*}
	&\int_{(0, T] \times \bR^q_0 \times [0, 1]^d} Y_s(z, u) M^\Pi_J(\od s, \od z, \od u)  = \sum_{i = 1}^n \sum_{s \in (t_{i-1}, t_i]} \1_{\{\Delta L_s \neq 0\}} Y_s(\Delta L_s, \xi^\Pi_{t_i})\\
	& = \sum_{i=1}^n \int_{(0, T] \times \bR^q_0} \1_{(t_{i-1}, t_i]}(s) Y_s(z, \xi^\Pi_{t_i}) N(\od s, \od z), \quad t \in [0, T],
\end{align*}
which then verifies \eqref{eq:lem:jumps_Pi-1}. Moreover, as $\nu_s(\od z) \od s$ is the $(\bF^\Pi, \bP)$-predictable compensator of $N(\od s, \od z)$ (see \cite[Proposition II.1.21]{JS03}), we get
\begin{align*}
	\bE[(\bdot{Y}{M^\Pi_J})_T]	& = \bE\bb[\int_{(0, T] \times \bR^q_0 \times [0, 1]^d} Y_s(z, u) M^\Pi_J(\od s, \od z, \od u)\bb]  \\
	& = \bE\bb[\sum_{i=1}^n \int_{(0, T] \times \bR^q_0} \1_{(t_{i-1}, t_i]}(s) Y_s(z, \xi^\Pi_{t_i}) \nu_s(\od z) \od s\bb] \\
	& = \bE\bb[ \sum_{i=1}^n \int_{(0, T] \times \bR^q_0 \times [0, 1]^d}  Y_s(z, u) \1_{(t_{i-1}, t_i]}(s) \delta_{\xi^\Pi_{t_i}}(\od u) \nu_s(\od z) \od s\bb]\\
	&  = \bE\bb[\int_{(0, T] \times \bR^q_0 \times [0, 1]^d}  Y_s(z, u) \mu^{\Pi}_J (\od s, \od z, \od u)\bb] \\
	& = \bE[(\bdot{Y}{\mu^\Pi_J})_T].
\end{align*}
We note that $(\bdot{Y}{\mu^\Pi_J})$ is $\bF^\Pi$-predictable as the pointwise limit of the continuous and $\bF^\Pi$-adapted processes $(\bdot{Y^n}{\mu^\Pi_J})$ as $n \to \infty$ where $Y^n: = (Y\wedge n)\1_{\{|z| > 1/n\}}$. Hence, $\mu^\Pi_J$ is an $\bF^\Pi$-predictable random measure in the sense of \cite[Definition II.1.6(a)]{JS03}. By \cite[Theorem II.1.8(i)]{JS03}, we conclude that $\mu^\Pi_J$ is the $(\bF^\Pi, \bP)$-predictable compensator of $M^\Pi_J$.

The relation \eqref{eq:lem:jumps_Pi-2} can be achieved in the usual way by first proving for $(-n \vee Y \wedge n)\1_{\{|z| > 1/n\}}$ in place of $Y$, and then taking the limit in $\bfL^2(\bP)$ when $n \to \infty$ with the aid of It\^o's isometry. \qed

\subsubsection{Proof of \cref{lem:Bm_from_white_noise}} The assumption $\int_{[0, 1]^d} \eta^{(k)}_s(u) \eta^{(k')}_s(u) \od u = \1_{\{k = k'\}}$ for $\bP \otimes \Leb_{[0, T]}$-a.e. $(\omega, s) \in \Omega \times [0, T]$ particularly implies that $\bE \b[\int_0^T \int_{[0, 1]^d} |\eta^{(k)}_s(u)|^2 \od u \od s \b] = T$. Hence, for any $(k, l)$, $(\bdot{\eta^{(k)}}{M_{B^{(l)}}})$ is a square integrable $(\bF, \bP)$-martingale null at $0$. Since $M_{B^{(l)}}$ is a continuous martingale measure (see \cite[Section II(3)]{KM90}), the process $(\bdot{\eta^{(k)}}{M_{B^{(l)}}})$ is also continuous as indicated in \cite[Propisition I-6(1)]{KM90}. As $M_{B^{(l)}}$ and $M_{B^{(l')}}$ are independent for $l \neq l'$ by assumption, it is straightforward to prove that the product $(\bdot{\eta^{(k)}}{M_{B^{(l)}}}) (\bdot{\eta^{(k')}}{M_{B^{(l')}}})$ is also a continuous $(\bF, \bP)$-martingale, which thus implies that $\langle (\bdot{\eta^{(k)}}{M_{B^{(l)}}}), (\bdot{\eta^{(k')}}{M_{B^{(l')}}}) \rangle = 0$. We compute the quadratic covariation using \cite[Proposition I-6(2)]{KM90}, a.s.,
\begin{align*}
	\big\langle (\bdot{\eta^{(k)}}{M_{B^{(l)}}}), (\bdot{\eta^{(k')}}{M_{B^{(l')}}}) \big\rangle_t = \1_{\{l = l'\}} \int_0^{t}\nq \int_{[0, 1]^d} \eta^{(k)}_s(u) \eta^{(k')}_s(u) \od u \od s =  \1_{\{(k, l) = (k', l')\}} t.
\end{align*}
Therefore, the desired conclusion follows from the  L\'evy characterization for Brownian motion. \qed

\section{Miscellaneous}\label{app:proofs}
\subsection{Proof of \cref{prop:value}} \label{app:proof-value}
	(1) Recall from \cref{sec:exploratory}, that the law of $X^{\bfh}$ solves the martingale problem for the operator $\cL_{h}$. Hence,
	$$
	J(t,X^{\bfh}_t)-\int_0^t  \bb(\frac{\partial J}{\partial t}(s,X^{\bfh}_s)+(\cL_{h} J(s,\cdot))(s,X^{\bfh}_s)\bb)\od s
	$$
	is a local martingale. Inserting the partial differential equation, we observe that
	\begin{equation*}
		J(t,X^{\bfh}_t)+\lambda \int_0^t \nq \int_{\bR} \dot h(s,X^{\bfh}_s,y)\log\dot h(s,X^{\bfh}_s,y)\od y \od s
	\end{equation*}
	is a local martingale, and hence a martingale, by the boundedness assumptions on $J$ and on the entropy. Thus, a.s.,
	\begin{align*}
		J(t,X^{\bfh}_t) & =\bE\bb[ J(T,X^{\bfh}_T)+\lambda \int_0^T \nq \int_{\bR} \dot h(s,X^{\bfh}_s,y)\log\dot h(s,X^{\bfh}_s,y)\od y \od s \,\bb| \,\mathcal{F}_t \bb]\\
		& \quad -\lambda \int_0^t \nq  \int_{\bR} \dot h(s,X^{\bfh}_s,y)\log\dot h(s,X^{\bfh}_s,y)\od y \od s \\
		& = \bE\bb[g(X^{\bfh}_T) +\lambda \int_t^T \nq \int_{\bR} \dot h(s,X^{\bfh}_s,y) \log\dot h(s,X^{\bfh}_s,y)\od y \od s\,\bb| \,\cF_t \bb]\\
		& =\cJ_t^{\bfh},
	\end{align*} 
	i.e., $J$ is a value function of ${\bfh}$.
	
	\smallskip
	
	 (2)
	If $\tilde J$ is a value function of ${\bfh}$, then $(\tilde J(t,X^{\bfh}_t))_{t\geq 0}$ is a modification of $\mathcal{J}$. Hence,
	\begin{equation}\label{eq:value_martingale}
		\tilde J(t,X^{\bfh}_t)+\lambda \int_0^t \nq \int_{\bR} \dot h(s,X^{\bfh}_s,y) \log\dot h(s,X^{\bfh}_s,y)\od y \od s
	\end{equation}
	inherits the martingale property of 
	$$
	\mathcal{J}_t^{\bfh}+\lambda \int_0^t \nq \int_{\bR}\dot h(s,X^{\bfh}_s,y) \log\dot h(s,X^{\bfh}_s,y)\od y \od s.
	$$
 Conversely, if the process in   \eqref{eq:value_martingale}  is a martingale, then the last part of the proof of (1) can be repeated with $\tilde J$ in place of $J$ to conclude that $\tilde J$ is a value function of ${\bfh}$.
\qed

\subsection{Proof of \cref{lemm:characteristic-limit}} \label{sec:proof-semimartingale-characteristics}
 Recall the representation of $\cX$ in \cref{thm:limit}. For $l = 1, \ldots, p$, \cite[Section II(2)]{KM90} asserts that $\int_0^{\cdot} \int_{[0, 1]^d} f^{(k)}_l(s, u) M_{B^{(l)}}(\od s, \od u)$ is a continuous square integrable martingale with quadratic variation $\int_0^{\cdot} \int_{[0, 1]^d} |f_l^{(k)} (s, u)|^2 \od u \od s$. The boundedness of $f_{p+1}, f_{p+2}$ and \eqref{assumption:Levy-measure} imply
	\begin{align*}
		\int_0^T \nq \int_{\bR^q_0 \times [0, 1]^d} [|f_{p+1}(s, z, u)|^2 |z|^2 \1_{\{0 < |z| \le R\}} + |f_{p+2}(s, z, u)| \1_{\{|z| >R\}}] \mu_J(\od s, \od z, \od u) < \infty,
	\end{align*}
	which shows that the process driven by $\tilde M_J$ is a square integrable martingale, and that against $M_J$ is an a.s. finite variation process.  Hence, $\cX$ is an $\bR^m$-valued semimartingale.
	
	According to \cite[Proposition I-6]{KM90}, the quadratic covariation matrix of the continuous martingale part of $\cX$ is
	\begin{align*}
		& \left\langle \sum_{l=1}^p \int_0^{\cdot} \nq \int_{[0, 1]^d} f^{(k)}_l(s, u) M_{B^{(l)}}(\od s, \od u), \sum_{l' =1}^ p\int_0^{\cdot} \nq \int_{[0, 1]^d} f^{(k')}_{l'}(s, u) M_{B^{(l')}}(\od s, \od u)  \right\rangle \\
		& = \sum_{l=1}^ p \int_0^{\cdot} \nq \int_{[0, 1]^d} (f^{(k)}_l f^{(k')}_l)(s, u) \od u \od s = C^{\cX, (k, k')}.
	\end{align*}
	For the jump part, it follows from
	\cite[Ch.3, Theorem 1]{LS89} that
	\begin{align*}
		\Delta \cX_r = \int_{\{r\} \times \bR^q_0 \times [0,1]^d} [f_{p+1}(s, z, u) |z| \1_{\{0 < |z| \le R\}} + f_{p+2}(s, z, u) \1_{\{|z| > R\}} ]  M_J(\od s, \od z, \od u), \;r \in [0, T]\; \bP\trm{-a.s.}
	\end{align*}
	Let $A \in \cB(\bR^m_0)$ with $A \cap B_m(\kappa) = \emptyset$ for some $\kappa >0$ where $B_m(\kappa) = \{y \in \bR^m : |y| < \kappa\}$. Since $f_{p+1}$ is bounded, there exists $\ep >0$ sufficiently small such that 
	\begin{align*}
		&\left\{(r, z, u) : f_{p+1}(r, z, u) |z|\1_{\{0< |z| \le R \}} + f_{p+2}(r, z, u) \1_{\{|z| > R \}} \in A \right\} \\
		& = \left\{(r, z, u) : f_{p+1}(r, z, u) |z|\1_{\{\ep < |z| \le R \}} + f_{p+2}(r, z, u) \1_{\{|z| > R \}} \in A \right\}.
	\end{align*}
	We define the process $(L^Z, L^U)$ depending on $\ep$ via
	\begin{align*}
		(L^Z_t, L^U_t) : = \int_{(0, t] \times \{|z| > \ep\} \times [0, 1]^d} (z, u) M_J(\od s, \od z, \od u), \quad t \in [0, T].
	\end{align*}
	Let $N_{\cX}$ be the random jump measure of $\cX$. Then
	\begin{align*}
		N_{\cX}((s, t] \times A)  & = \sum_{s < r \le t} \1_{\{\Delta \cX_r \in A\}}\\
		&  =  \sum_{s < r \le t} \1_{\left\{ f_{p+1}(r, \Delta L_r^{Z}, \Delta L_r^{U}) |\Delta L^{Z}_r|\1_{\{\ep < |\Delta L^{Z}_r| \le R \}} + f_{p+2}(r, \Delta L_r^{Z}, \Delta L_r^{U}) \1_{\{|\Delta L^{Z}_r| > R \}} \in A \right\}} \\
		&  = \int_s^t \nq \int_{\bR^q_0 \times [0, 1]^d} \1_{A}\b(f_{p+1}(r, z, u)|z| \1_{\{\ep < |z| \le R\}} + f_{p+2}(r, z, u)\1_{\{|z| >R\}} \b) M_J (\od r, \od z, \od u)\\
		& = \int_s^t \nq \int_{\{0 < |z| \le R\} \times [0, 1]^d} \1_{A}(f_{p+1}(r, z, u)|z|) M_J(\od r, \od z, \od u) \\
		& \quad + \int_s^t \nq \int_{\{|z| > R\} \times [0,1]^d} \1_{A}(f_{p+2}(r, z, u)) M_J(\od r, \od z, \od u).
	\end{align*}
	Since $\mu_J(\od r, \od z, \od u) = \nu_r(\od z) \od u \od r$ is the predictable compensator of $M_J(\od r, \od z, \od u)$, it implies that
	\begin{align*}
		\nu^{\cX}((s, t] \times A)
		& = \int_s^t \nq \int_{\{0 < |z| \le R\} \times [0, 1]^d} \1_{A}(f_{p+1}(r, z, u)|z|) \nu_r (\od z) \od u \od r\\
		&  \quad + \int_s^t \nq \int_{\{|z| > R\} \times [0, 1]^d} \1_{A}(f_{p+2}(r, z, u)) \nu_r (\od z) \od u \od r.
	\end{align*}
	This result can be extended to $A \in \cB(\bR^m_0)$ by using the approximation sequence $(A \cap B_m(\frac{1}{n}))_{n\in \bN}$. 
	For the predictable finite variation part $\frb^\cX$, one has, a.s., 
	\begin{align*}
		\cY_t & := \cX_t -  \sum_{l=1}^p \int_0^t \nq \int_{[0, 1]^d} f_l (s, u) M_{B^{(l)}}(\od s, \od u) - \int_0^t \nq \int_{\bR^m_0} (y - \frh(y)) N_{\cX}(\od s, \od y) \\
		& =  \int_0^t \nq \int_{[0, 1]^d} f_0(s, u) \od u \od s + \int_0^t \nq \int_{\{0 < |z| \le R\}  \times [0, 1]^d} f_{p+1}(s, z, u) |z| \tilde M_J (\od s, \od z, \od u) \\
		& \quad  + \int_0^t \nq \int_{\{|z| > R\} \times [0, 1]^d} f_{p+2}(s, z, u) M_J (\od s, \od z, \od u) \\
		& \quad - \int_0^t \nq \int_{\{0 < |z| \le R\} \times [0, 1]^d} [f_{p+1}(s, z, u)|z| - \frh(f_{p+1}(s, z, u)|z|)] M_J (\od s, \od z, \od u) \\
		& \quad - \int_0^t \nq \int_{\{|z| > R\} \times [0, 1]^d} [f_{p+2}(s, z, u) - \frh(f_{p+2}(s, z, u))] M_J (\od s, \od z, \od u)\\
		& = \int_0^t \nq \int_{[0, 1]^d} f_0(s, u) \od u \od s + \int_0^t \nq \int_{\{|z| >R\} \times [0, 1]^d} \frh(f_{p+2}(s, z, u)) \nu_s(\od z) \od u \od s\\
		& \quad - \int_0^t \nq \int_{\{0 < |z| \le R\} \times [0, 1]^d} [f_{p+1}(s, z, u)|z| - \frh(f_{p+1}(s, z, u)|z|)]  \nu_s(\od z) \od u \od s\\
		& \quad + \int_0^t \nq \int_{\{0 < |z| \le R\}  \times [0, 1]^d} f_{p+1}(s, z, u) |z| \tilde M_J (\od s, \od z, \od u) \\
		& \quad + \int_0^t \nq \int_{\{|z| >R\} \times [0, 1]^d} \frh(f_{p+2}(s, z, u)) \tilde M_J (\od s, \od z, \od u)\\
		& \quad  - \int_0^t \nq \int_{\{0 < |z| \le R\} \times [0, 1]^d} [f_{p+1}(s, z, u)|z| - \frh(f_{p+1}(s, z, u)|z|)] \tilde M_J (\od s, \od z, \od u),
	\end{align*}
	where in the last equality we use the fact that $\int F \tilde M_J = \int F M_J - \int F \mu_J$ if  $F$ is predictable and $\mu_J$-integrable, see \cite[Proposition II.1.28]{JS03}. By identifying the predictable finite variation component of $\cY$, we obtain the desired expression of $\frb^{\cX}$.
\qed

\subsection{On the Poisson random measure $M_J$}\label{sec:random_measure_construction}
Assume the L\'evy process as defined in \eqref{def:induced-Levy-process}. Let $\{T^n_j\}_{n, j \ge 0}$ be the of jump times of $L$ given by
		\begin{align*}
			&T^0_0: = 0, \quad T^0_j : = \inf\{t > T^0_{j-1} : |\Delta L_t| >1\}, \quad j \ge 1,\\
			&T^n_0: = 0, \quad T^n_j : = \inf\{t > T^n_{j-1} : 1/(n+1) < |\Delta L_t| \le 1/n\}, \quad j \ge 1, \quad n \ge 1.
		\end{align*}
		Let $\{\xi^n_j\}_{n, j \ge 0}$ be i.i.d. with uniform distribution on $[0, 1]^d$. Assume that $\{\xi^n_j\}_{n, j \ge 0}$ is independent of $L$. We define the Poisson random measure $M_J$ on $[0, T] \times \bR^q_0 \times [0, 1]^d$ by
		\begin{align*}
			M_J(\omega, \od t, \od z, \od u) = \sum_{n = 0}^ \infty \sum_{j = 1}^\infty  \delta_{(T^n_j(\omega), \Delta L_{T^n_j(\omega)}(\omega), \xi^n_j(\omega))}(\od t, \od z, \od u).
		\end{align*}
		We note that, in general, there is no semimartingale which possesses $M_J$ as the associated random jump measure because $\int_0^T \! \int_{0 < |z|^2 + |u|^2 \le 1} (|z|^2 + |u|^2) \mu_J(\od t, \od z, \od u)$ might be infinite, except the case $\int_0^T \! \int_{\bR^q_0} \nu_t(\od z) \od t < \infty$ (i.e. $L$ is of finite activities).

\subsection{On the independence of $(M_{B^{(1)}}, \ldots, M_{B^{(p)}})$ and $M_J$}\label{rema:independence}
Assume that $(M_{B^{(1)}}, \ldots, M_{B^{(p)}})$ and $M_J$ define on the same probability space, then
	$$\bb\{\int_{(0, T] \times [0, 1]^d} g_l(s, u) M_{B^{(l)}}(\od s, \od u) \,\bb|\, g_l \colon [0, T] \times [0, 1]^d \to \bR \trm{ measurable and bounded}, l = 1, \ldots, p\bb\}$$ is independent of $$\bb\{\int_{(0, T] \times \bR^q_0 \times [0, 1]^d} h(s, z, u) M_J(\od s, \od z, \od u) \,\bb|\, h \colon [0, T] \times \bR^q_0 \times [0, 1]^d \to [0, \infty) \trm{ measurable}\bb\}.$$
	
	 Indeed, it is sufficient to show that $G = (\sum_{l=1}^p \int_0^t\int_{[0, 1]^d} g_l(s, u) M_{B^{(l)}}(\od s, \od u))_{t \in [0, T]}$ is independent of $H = (\int_{(0, t] \times \{|z| > \kappa\} \times [0, 1]^d} h(s, z, u) M_J(\od s, \od z, \od u))_{t \in [0, T]}$ for all (non-random) measurable and bounded $g_l$, $h \ge 0$ and $\kappa >0$. It is clear that $H$ is of finite variation and $G$ is a continuous martingale (see \cite[Section II(3)]{KM90}), and both are processes with independent increments.  Observe that $[G, H]_t = \sum_{0 \le s \le t} \Delta G_s \Delta H_s = 0$ for $t \in [0, T]$ a.s. It then follows from \cite[Theorem 11.43]{HWY92} that $G$ and $H$ are independent.

\section{Weak convergence in the Skorokhod topology and Jacod--Shiryaev's limit theorem for triangular arrays}\label{app:Skorokhod}

\subsection{Skorokhod spaces and weak convergence}\label{sec:Skorokhod-space}
 Fix $T \in (0, \infty)$ and let $\bD_T(\bR^m)$ be the family of all c\`adl\`ag functions $f \colon [0, T] \to \bR^m$ and $\Lambda_T$ consists of all strictly increasing and continuous $\lambda \colon [0, T] \to [0, T]$ with $\lambda(0) = 0$, $\lambda(T) = T$. We equip $\bD_T(\bR^m)$ with the Skorokhod metric
\begin{align*}
	d^{m}_T(x, y) := \inf_{\lambda \in \Lambda_T} \max\bb\{\sup_{0 \le s < t \le T} \bb|\log\frac{\lambda(t) - \lambda(s)}{t-s}\bb|, \;\sup_{0 \le t \le T} |x(t) - y(\lambda(t))|\bb\}.
\end{align*}
It is well-known that $(\bD_T(\bR^m), d^{m}_T)$ is a complete and separable metric space (see  \cite[Section 14]{Bi99}), however, it is not a topological vector space. It is also convenient to work with the metric $\tilde d^m_T$, which defines the same topology as $d^m_T$ does, given by
\begin{align*}
	\tilde d^{m}_T(x, y) := \inf_{\lambda \in \Lambda_T} \max\bb\{\sup_{0 \le t \le T} |\lambda(t) -t|, \;\sup_{0 \le t \le T} |x(t) - y(\lambda(t))|\bb\}.
\end{align*}
However, $(\bD_T(\bR^m), \tilde d^{m}_T)$ is not complete.

An $\bR^m$-valued c\`adl\`ag process $X = (X_t)_{t \in [0, T]}$ can be regarded as an $\cF/\cB(\bD_T(\bR^m))$-measurable function $X \colon \Omega \to \bD_T(\bR^m)$ where $\cB(\bD_T(\bR^m))$ is the Borel $\sigma$-algebra induced by the Skorokhod metric $d^m_T$. A sequence of $\bR^m$-valued c\`adl\`ag processes $(X^n)_{n \in \bN}$, where $X^n$ is defined on $(\Omega^n, \cF^n, \bP^n)$, is said to be \textit{weakly convergent} to a c\`adl\`ag process $X$ defined on $(\Omega, \cF, \bP)$
if 
\begin{align*}
	\bE^n[f(X^n)] \xrightarrow{n \to \infty} \bE[f(X)], \quad \forall f \in C_b(\bD_T(\bR^m)),
\end{align*}
where $\bE^n$ and $\bE$ are the expectation under $\bP^n$ and $\bP$, respectively. We then write 
$X^n \xrightarrow{\scrD_T} X$.

\subsection{A limit theorem of Jacod--Shiryaev for triangular arrays}\label{sec:JS03-limit-theorem}
For the reader's convenience, we recall (and adapt to our setting) a limit theorem establishing the weak convergence of triangular arrays which we use to prove the main result in this article.

Let $(\Omega, \cF, \bP)$ be a complete probability space and suppose that $\{U^n_i, \cG^n_i : i \ge 0\}$, $n \in \bN$,  are adapted sequences of $\bR^d$-valued random variables. For each $n \in \bN$, we consider a change of time $\sigma_n \colon \Omega \times [0, \infty) \to [0, \infty)$ with respect to $(\cG^n_i)_{i \ge 0}$, i.e.,
\begin{enumerate}[(a)]
	\item $\sigma_n(\cdot, 0) = 0$;
	
	\item For any $\omega$, $\sigma_n(\omega, \cdot)$ is increasing, right-continuous, with jumps equal to 1;
	
	\item For any $t\ge 0$, $\sigma_n(\cdot, t)$ is a $(\cG^n_i)_{i \ge 0}$-stopping time.
\end{enumerate}

\begin{theo}[\cite{JS03}, Theorem VIII.2.29] \label{thm:weak-convergence-JS03}
	Assume a sequence of $d$-dimensional semimartingales $(X^n)_{n \in \bN}$ where $X^n_t = \sum_{i=1}^{\sigma_n(t)} U^n_i$, $t \ge 0$. Let $X$ be a $d$-dimensional process with independent increments and without fixed time of discontinuity, having characteristics $(\frb, C, \nu)$ in relation to a truncation function $\frh$. Set
	$\wt C_t^{(k, l)} : = C_t^{(k, l)} + \int_0^t \int_{\bR^d} (\frh^{(k)}\frh^{(l)})(y) \nu(\od s, \od y)$ as in \cite[II.5.8]{JS03}. If there exists some dense subset $D$ of $[0, \infty)$ such that, as $n \to \infty$,
	\begin{align*}
		&\sup_{0 \le s \le t} \bb| \sum_{i=1}^{\sigma_n(s)} \bE[\frh(U^n_i)|\cG^n_{i-1}] - \frb_s\bb| \xrightarrow{\bP} 0 \quad \forall t \ge 0,\\
		& \sum_{i=1}^{\sigma_n(t)} \B(\bE[(\frh^{(k)} \frh^{(l)})(U^n_i)|\cG^n_{i-1}] - \bE[\frh^{(k)}(U^n_i)|\cG^n_{i-1}]\, \bE[ \frh^{(l)}(U^n_i)|\cG^n_{i-1}]\B) \xrightarrow{\bP} \wt C^{(k, l)}_t \quad \forall t \in D,\\
		& \sum_{i=1}^{\sigma_n(t)} \bE[g(U^n_i)|\cG^n_{i-1}] \xrightarrow{\bP} \int_0^t \nq \int_{\bR^d} g(y) \nu(\od s, \od y) \quad \forall t \in D, g \in C_1(\bR^d),
	\end{align*}
	then $X^n$ converges weakly to $X$ in the Skorokhod topology on the space $\bD_\infty(\bR^d)$ of c\`adl\`ag functions $F \colon [0, \infty) \to \bR^d$. Here, $C_1(\bR^d)\subset C_2(\bR^d)$ is a particular class of test functions vanishing around zero and is introduced in \cite[VII.2.7]{JS03}.
\end{theo}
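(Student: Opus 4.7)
The plan is to follow the classical strategy for functional limit theorems for triangular arrays of semimartingales: combine a tightness argument in $\bD_\infty(\bR^d)$ with an identification of the weak limit via semimartingale characteristics. First I would view each $X^n$ as a semimartingale with respect to the discrete-time filtration $\bG^{n,\sigma_n}=(\cG^n_{\sigma_n(t)})_{t \geq 0}$, so that the three conditional sums appearing in the hypotheses are precisely its \emph{modified} semimartingale characteristics $(B^n, \wt C^n, \nu^n)$ in the sense of \cite[II.2.16 and II.5.8]{JS03}. The hypotheses then read exactly as convergence in probability of $(B^n, \wt C^n, \nu^n)$ to the deterministic triplet $(\frb, \wt C, \nu)$ associated with the limiting process $X$ with independent increments.

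Next I would establish tightness of $(X^n)_{n \in \bN}$ in $\bD_\infty(\bR^d)$. The natural route is the Jacod--Shiryaev tightness criterion for semimartingales, see \cite[Theorem VI.4.18]{JS03}, which reduces tightness to a control on the characteristic triplet together with a uniform bound on the large-jump measure; both are supplied by the convergence hypothesis combined with the fact that $X$ has no fixed time of discontinuity. The latter forces the deterministic limit characteristics to be continuous in $t$ and thereby rules out mass concentration at any single time point, giving the required modulus-of-continuity estimates.

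To identify the law of any weak subsequential limit $Y$ of $(X^n)$, I would invoke the continuity of the characteristics-to-distribution correspondence along converging subsequences in the spirit of \cite[Corollary IX.2.20]{JS03}: the modified characteristics of $Y$ coincide with the limits $(\frb, \wt C, \nu)$, and since these are deterministic, $Y$ is itself a process with independent increments having canonical triplet $(\frb, C, \nu)$. The uniqueness theorem for processes with independent increments prescribed by their characteristics (see \cite[Theorem II.4.15]{JS03}) then yields that $Y$ has the same law as $X$. Tightness combined with uniqueness of the subsequential limit yields weak convergence of the full sequence $X^n$ to $X$ in the Skorokhod topology on $\bD_\infty(\bR^d)$.

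The main obstacle will be upgrading the pointwise-in-$t$ convergence of $\wt C^n$ and $\nu^n(g)$ on the dense set $D$ to convergence strong enough both for tightness and for identification of the limit. The device is to exploit the monotonicity of $\wt C^{n,(k,k)}_t$ in $t$ and of the map $t \mapsto \sum_{i=1}^{\sigma_n(t)} \bE[g(U^n_i)|\cG^n_{i-1}]$ for nonnegative $g$; combined with the continuity of the deterministic limits, this promotes pointwise convergence on $D$ to locally uniform convergence and handles off-diagonal terms in $\wt C^n$ via polarization. A second delicate point is to justify that the restricted test class $C_1(\bR^d)$ is rich enough to recover the full L\'evy kernel $\nu$ of any subsequential limit; verifying this, and patching it together with the uniform first-characteristic convergence, constitutes the technical core of Jacod--Shiryaev's approach in \cite[Chapters VIII--IX]{JS03}.
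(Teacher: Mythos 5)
The paper does not prove this theorem: it is quoted directly from \cite[Theorem VIII.2.29]{JS03} in the appendix for the reader's convenience and used as a black box in the proof of \cref{thm:limit}, so there is no in-paper proof to compare against. Your sketch is nonetheless a fair high-level outline of the Jacod--Shiryaev strategy behind the cited result: identifying the three conditional sums as the modified semimartingale characteristics of $X^n$ with respect to the time-changed discrete filtration (this is \cite[Ch.~II, \S 3b]{JS03}, also noted in \cref{rema:semimartingale-array} of the paper); combining tightness in the Skorokhod topology with identification of every subsequential limit through the deterministic characteristic triplet of the limiting process with independent increments; and upgrading dense-set convergence to locally uniform convergence via monotonicity and polarization. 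The obstacles you flag --- the modulus-of-continuity estimates coming from absence of fixed discontinuities, promoting pointwise convergence of $\wt C^n$ and $\nu^n(g)$ on $D$ to locally uniform convergence, and the richness of the $C_1$ test class --- are indeed the technical core of \cite[Ch.~VII--VIII]{JS03}. Your pointers to specific results are approximate (identification for a limit with independent increments is treated in Chapter~VIII rather than the general-semimartingale Chapter~IX you cite, and the tightness step relies on a specialized criterion adapted to the characteristics framework rather than the raw criterion VI.4.18 invoked as a standalone input), but this does not undermine the soundness of the outline, and none of it bears on the paper's argument, which only uses the theorem statement.
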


\bibliographystyle{amsplain}

\end{document}